\newcommand{\R}{{\mathbb{R}}}
\newtheorem{theorem}{Theorem}
\newtheorem{lemma}[theorem]{Lemma}
\newtheorem{definition}{Definition}
\newcommand{\Sb}{{\mathbb S}}
\newcommand{\Pb}{{\mathbb P}}
\newcommand{\x}{{\mathbf x}}
\newcommand{\vv}{{\mathbf v}}
\newcommand{\y}{{\mathbf y}}
\newcommand{\z}{{\mathbf z}}
\newcommand{\uu}{{\mathbf u}}
\newcommand{\w}{{\mathbf w}}
\newcommand{\aaa}{\mathbf{a}}
\newcommand{\p}{\mathbf{p}}
\newcommand{\st}{\mathfrak{st}_{\kappa}^d}
\newcommand{\h}{{\mathbf h}}
\newcommand{\If}{{\mathbf I}}
\newcommand{\Df}{{\mathbf D}}
\newcommand{\Uf}{{\mathbf U}}
\newcommand{\Hf}{{\mathbf H}}
\newcommand{\Zf}{{\mathbf Z}}
\newcommand{\Wf}{{\mathbf W}}
\newcommand{\Yf}{{\mathbf Y}}
\newcommand{\Rb}{{\mathbf R}}
\newcommand{\Af}{{\mathbf A}}
\newcommand{\Bf}{{\mathbf B}}
\newcommand{\Lf}{{\mathbf L}}
\newcommand{\Mf}{{\mathbf M}}
\newcommand{\Xf}{{\mathbf X}}
\icmltitlerunning{Constant Curvature Graph Convolutional Networks}
\begin{document}

\twocolumn[
\icmltitle{Constant Curvature Graph Convolutional Networks}



\icmlsetsymbol{equal}{*}

\begin{icmlauthorlist}
\icmlauthor{Gregor Bachmann}{equal,to}
\icmlauthor{Gary B\'ecigneul}{equal,to}
\icmlauthor{Octavian-Eugen Ganea}{goo}
\end{icmlauthorlist}

\icmlaffiliation{to}{Department of Computer Science, ETH Z\"urich}
\icmlaffiliation{goo}{Computer Science and Artificial Intelligence Laboratory, Massachusetts Institute of Technology}

\icmlcorrespondingauthor{Gregor Bachmann}{gregor.bachmann@inf.ethz.ch}
\icmlcorrespondingauthor{Gary B\'ecigneul}{garyb@mit.edu}
\icmlcorrespondingauthor{Octavian-Eugen Ganea}{oct@mit.edu}
\icmlkeywords{Machine Learning, ICML, hyperbolic, hierarchical, curvature, manifold, geometry, graph, neural, convolutional, network, message passing, representation, spherical, aggregation, midpoint, gyro, euclidean}

\vskip 0.3in
]



\printAffiliationsAndNotice{\icmlEqualContribution} 

\begin{abstract}
Interest has been rising lately towards methods representing data in non-Euclidean spaces, e.g. hyperbolic or spherical,  that provide specific inductive biases useful for certain real-world data properties, e.g. scale-free, hierarchical or cyclical. However, the popular \textit{graph neural networks} are currently limited in modeling data only via Euclidean geometry and associated vector space operations. Here, we bridge this gap by proposing mathematically grounded generalizations of \textit{graph convolutional networks} (GCN) to (products of) constant curvature spaces. We do this by i) introducing a unified formalism that can interpolate smoothly between all geometries of constant curvature, ii) leveraging gyro-barycentric coordinates that generalize the classic Euclidean concept of the \textit{center of mass}. Our class of models smoothly recover their Euclidean counterparts when the curvature goes to zero from either side. Empirically, we outperform Euclidean GCNs in the tasks of node classification and distortion minimization for symbolic data exhibiting non-Euclidean behavior, according to their discrete curvature.
\end{abstract}

\section{Introduction}\label{sec:intro}
\label{submission}
\begin{figure*}
    \centering
    \label{fig:tree_red_green}
  \includegraphics[width=0.7\textwidth]{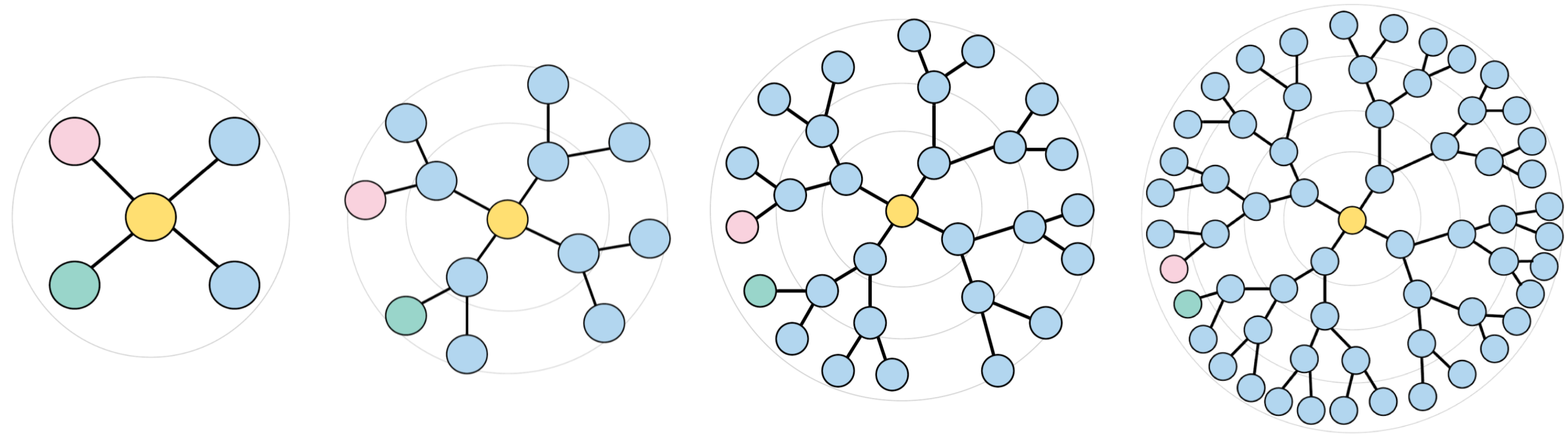}
  \caption{Euclidean embeddings of trees of different depths. All the four most inner circles are identical. Ideal node embeddings should match in distance the graph metric, e.g. the distance between the pink and green nodes should be the same as their shortest path length. Notice how we quickly run out of space, e.g. the pink and green nodes get closer as opposed to farther. This issue is resolved when embedding trees in hyperbolic spaces.}
\end{figure*}
\paragraph{Graph Convolutional Networks.} The success of convolutional networks and deep learning for image data has inspired generalizations for graphs for which sharing parameters is consistent with the graph geometry. \citet{bruna2014spectral,henaff2015deep} are the pioneers of spectral graph convolutional neural networks in the graph Fourier space using localized spectral filters on graphs. However, in order to reduce the graph-dependency on the Laplacian eigenmodes, \citet{graphconv} approximate the convolutional filters using Chebyshev polynomials leveraging a result of \citet{hammond2011wavelets}. The resulting method (discussed in Appendix \ref{apx:gcn}) 
is computationally efficient and superior in terms of accuracy and complexity. Further, \citet{gcn} simplify this approach by considering first-order approximations obtaining high scalability. The proposed \textit{graph convolutional networks} (GCN) is interpolating node embeddings via a symmetrically normalized adjacency matrix, while this weight sharing can be understood as an efficient diffusion-like regularizer. Recent works extend GCNs to achieve state of the art results for link prediction \citep{lp}, graph classification \citep{graphsage,gin} and node classification \citep{ppnp, gat}. 

\paragraph{Euclidean geometry in ML.} In machine learning (ML), data is most often represented in a Euclidean space for various reasons. First, some data is \textit{intrinsically} Euclidean, such as positions in 3D space in classical mechanics. Second, intuition is easier in such spaces, as they possess an appealing vectorial structure allowing basic arithmetic and a rich theory of linear algebra. Finally, a lot of quantities of interest such as distances and inner-products are known in closed-form formulae and can be computed very efficiently on the existing hardware. These operations are the basic building blocks for most of today's popular machine learning models. Thus, the powerful simplicity and efficiency of Euclidean geometry has led to numerous methods achieving state-of-the-art on tasks as diverse as  machine translation \citep{bahdanau2014neural,vaswani2017attention}, speech recognition \citep{graves2013speech}, image classification \citep{he2016deep} or recommender systems \citep{he2017neural}. 
\paragraph{Riemannian ML.} In spite of this success, certain types of data (e.g. hierarchical, scale-free or spherical data) have been shown to be better represented by non-Euclidean geometries \citep{defferrard2019deepsphere,bronstein2017geometric,nickel2017poincar,gu2018learning}, leading in particular to the rich theories of manifold learning \citep{roweis2000nonlinear,tenenbaum2000global} and information geometry \citep{amari2007methods}. The mathematical framework in vigor to manipulate non-Euclidean geometries is known as \textit{Riemannian geometry} \citep{spivak1979comprehensive}. Although its theory leads to many strong and elegant results, some of its basic quantities such as the distance function $d(\cdot,\cdot)$ are in general not available in closed-form, which can be prohibitive to many computational methods.

\paragraph{Representational Advantages of Geometries of Constant Curvature.}
An interesting trade-off between general Riemannian manifolds and the Euclidean space is given by manifolds of \textit{constant sectional curvature}. They define together what are called \textit{hyperbolic} (negative curvature), \textit{elliptic} (positive curvature) and Euclidean (zero curvature) geometries. As discussed below and in Appendix \ref{apx:graph_emb},
Euclidean spaces have limitations and suffer from large distortion when embedding certain types of data such as trees. In these cases, the hyperbolic and spherical spaces have representational advantages providing a better inductive bias for the respective data.

The {\bf hyperbolic space} can be intuitively understood as a continuous tree: the volume of a ball grows exponentially with its radius, similarly as how the number of nodes in a binary tree grows exponentially with its depth (see fig. \ref{fig:tree_red_green}). Its tree-likeness properties have long been studied mathematically \citep{gromov1987hyperbolic,hamann_2017, ungar2008gyrovector} and it was proven to better embed \textit{complex networks}~\citep{krioukov2010hyperbolic}, \textit{scale-free graphs} and \textit{hierarchical data} compared to the Euclidean geometry~\citep{cho2018large,desa2018representation, ganea2018hyperbolic,gu2018learning, nickel2018learning,nickel2017poincar, tifrea2018poincar}. Several important tools or methods found their hyperbolic counterparts, such as variational autoencoders \citep{mathieu2019hierarchical, ovinnikov2019poincar}, attention mechanisms \citep{gulcehre2018hyperbolic}, matrix multiplications,  recurrent units and multinomial logistic regression \citep{ganea2018hyperbolicnn}.

Similarly, {\bf spherical geometry} provides benefits for modeling spherical or cyclical data \citep{defferrard2019deepsphere,MetricEmb,hypvae, xu2018spherical,gu2018learning, grattarola2018learning,wilson2014spherical}.

\vspace{-5pt}
\paragraph{Computational Efficiency of Constant Curvature Spaces (CCS).} CCS are some of the few Riemannian manifolds to possess closed-form formulae for geometric quantities of interest in computational methods, i.e. distance, geodesics, exponential map, parallel transport and their gradients.  We also leverage here the closed expressions for weighted centroids.

\vspace{-5pt}
\paragraph{``Linear Algebra'' of CCS: Gyrovector Spaces.} In order to study the geometry of constant negative curvature in analogy with the Euclidean geometry, \citet{ungar1999hyperbolic,ungar2005analytic,ungar2008gyrovector,ungar2016novel}  proposed the elegant non-associative algebraic formalism of {\bf gyrovector spaces}. Recently, \citet{ganea2018hyperbolicnn} have linked this framework to the Riemannian geometry of the space, also generalizing the building blocks for non-Euclidean deep learning models operating with hyperbolic data representations.

\noindent 
However, \textit{it remains unclear how to extend in a principled manner the connection between Riemannian geometry and  gyrovector space operations for spaces of constant positive curvature (spherical)}. By leveraging Euler's formula and complex analysis, we present to our knowledge the first unified gyro framework that allows for a differentiable interpolation between geometries of constant curvatures irrespective of their signs. This is possible when working with the Poincar\'e ball and stereographic spherical projection models of respectively hyperbolic and spherical spaces.

\vspace{-2mm}
\paragraph{GCNs in Constant Curvature Spaces.}
In this work, we introduce an extension of graph convolutional networks that allows to learn representations residing in (products of) \textbf{constant curvature spaces with any curvature sign}. We achieve this by combining the derived unified gyro framework together with the effectiveness of GCNs \citep{gcn}.  Concurrent to our work, \citet{chami2019hyperbolic, liu2019hyperbolic} consider graph neural networks that learn embeddings in hyperbolic space via tangent space aggregation. Their approach will be analyzed more closely in section 3.4. Our model is more general as it produces representations in a strict super-set containing the hyperbolic space. 

\section{The Geometry of Constant Curvature Spaces}

\paragraph{Riemannian Geometry.} A manifold $\mathcal{M}$ of dimension $d$ is a generalization to higher dimensions of the notion of surface, and is a space that locally \textit{looks} like $\mathbb{R}^d$. At each point $\x\in\mathcal{M}$, $\mathcal{M}$ can be associated a \textit{tangent space} $T_\x\mathcal{M}$, which is a vector space of dimension $d$ that can be understood as a first order approximation of $\mathcal{M}$ around $\x$. A \textit{Riemannian metric} $g$ is given by an inner-product $g_\x(\cdot,\cdot)$ at each tangent space $T_\x\mathcal{M}$, $g_\x$ varying smoothly with $\x$. A given $g$ defines the \textit{geometry} of $\mathcal{M}$, because it can be used to define the distance between $\x$ and $\y$ as the infimum of the lengths of smooth paths $\gamma:[0,1]\to\mathcal{M}$ from $\x$ to $\y$, where the length is defined as $\ell(\gamma):=\int_{0}^1  \sqrt{g_{\gamma(t)}(\dot{\gamma}(t),\dot{\gamma}(t))} \mathrm{d}t$ . Under certain assumptions, a given $g$ also defines a \textit{curvature} at each point. 
\begin{figure}
    \centering
    \includegraphics[width=0.20\textwidth]{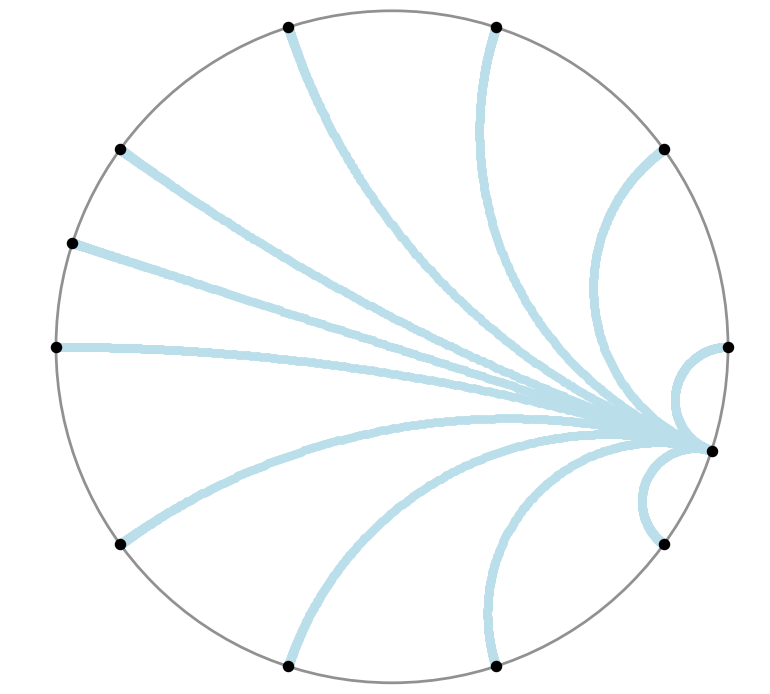}
    \hspace{0.5cm}
    \includegraphics[width=0.23\textwidth]{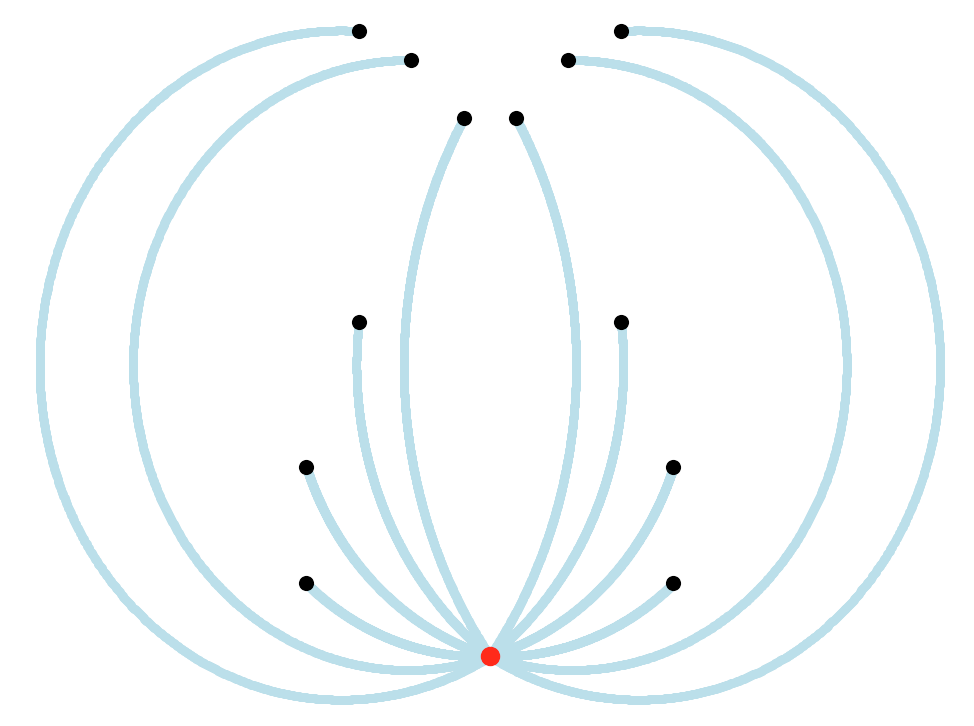}
    \caption{Geodesics in the Poincaré disk (left) and the stereographic projection of the sphere (right).}
    \label{fig:geodesics}
\end{figure}

\vspace{-10pt}
\paragraph{Unifying all curvatures $\kappa$.} There exist several models of respectively constant positive and negative curvatures. For positive curvature, we choose the stereographic projection of the sphere, while for negative curvature we choose the Poincar\'e model which is the stereographic projection of the Lorentz model. As explained below, this choice allows us to generalize the gyrovector space framework and unify spaces of both positive and negative curvature $\kappa$ into a single model which we call the $\kappa$-stereographic model.
\vspace{-10pt}
\begin{figure*}
    \centering
    \includegraphics[width=0.35\textwidth]{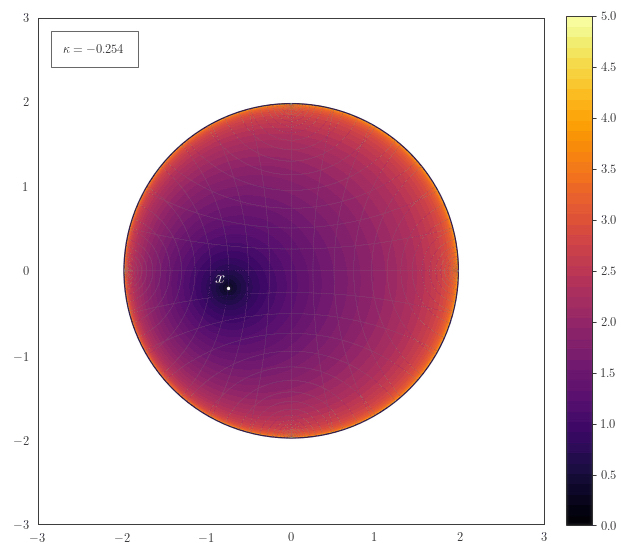}
    \hspace{1cm}
    \includegraphics[width=0.35\textwidth]{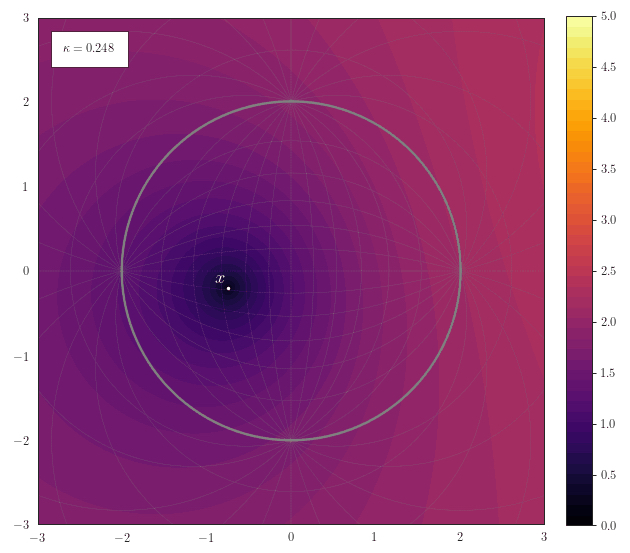}
    \caption{\label{fig:dist}Heatmap of the distance function $d_\kappa(\x,\cdot)$ in $\mathfrak{st}^2_\kappa$ for $\kappa = -0.254$ (left) and $\kappa = 0.248$ (right).}
\end{figure*}
\paragraph{The $\kappa$-stereographic model.} For a curvature $\kappa\in\mathbb{R}$ and a dimension $d\geq 2$, we study the model $\mathfrak{st}^d_\kappa$ defined as $\mathfrak{st}^d_\kappa=\{ \x\in\mathbb{R}^d\mid -\kappa \Vert \x\Vert_2^2 <1\}$ equipped with its \textit{Riemannian metric} $g_{\x}^{\kappa} = \frac{4}{(1+\kappa||\x||^{2})^{2}} \If =: (\lambda_{\x}^{\kappa})^{2} \If$. Note in particular that when $\kappa \geq 0$, $\mathfrak{st}^d_{\kappa}$ is $\mathbb{R}^d$, while when $\kappa < 0$ it is the open ball of radius $1/\sqrt{-\kappa}$. 

\vspace{-10pt}

\paragraph{Gyrovector spaces \& Riemannian geometry.} As discussed in section \ref{sec:intro}, the gyrovector space formalism is used to generalize vector spaces to the Poincar\'e model of hyperbolic geometry \citep{ungar2005analytic,ungar2008gyrovector}. In addition, important quantities from Riemannian geometry can be rewritten in terms of the Möbius vector addition and scalar-vector multiplication \citep{ganea2018hyperbolicnn}. We here extend gyrovector spaces to the $\kappa$-stereographic model, \textit{i.e.} allowing positive curvature.

For $\kappa > 0$ and any point $\x\in\st$, we will denote by $\tilde{\x}$ the unique point of the sphere of radius $\kappa^{-\frac{1}{2}}$ in $\mathbb{R}^{d+1}$ whose stereographic projection is $\x$. As detailed in Appendix \ref{apx:spherical_gyro}, it is given by 
\begin{equation}
\tilde{\x} := (\lambda^\kappa_\x \x, \kappa^{-\frac{1}{2}}(\lambda_\x^\kappa-1)).
\end{equation}
For $\x,\y \in \mathfrak{st}^d_\kappa$, we define the $\kappa$-\textbf{addition}, in the $\kappa$-stereographic model by:
\begin{equation}
\x \oplus_\kappa \y = \frac{(1-2\kappa \x^{T}\y -\kappa||\y||^{2})\x + (1+\kappa||\x||^{2})\y}{1-2\kappa \x^{T}\y+\kappa^{2}||\x||^{2}||\y||^{2}} \in \mathfrak{st}_{\kappa}^{d}.
\end{equation} 
The $\kappa$-addition is defined in all the cases except for spherical geometry and $\x= \y/(\kappa\Vert\y\Vert^2)$ as stated by the following theorem proved in Appendix \ref{appendix:thm0}.
\begin{tcolorbox}
\begin{theorem}[Definiteness of $\kappa$-addition]\label{thm:0}
We have \\ $1-2\kappa \x^{T}\y+\kappa^{2}||\x||^{2}||\y||^{2}=0$ if and only if $\kappa>0$ and $\x= \y/(\kappa\Vert\y\Vert^2)$.
\end{theorem}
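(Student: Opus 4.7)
The plan is to write the quadratic $E := 1 - 2\kappa \x^T\y + \kappa^2 \|\x\|^2 \|\y\|^2$ as a sum of non-negative pieces in two different ways and then force each piece to vanish. Specifically, completing the square in $\kappa \|\x\|\|\y\|$ with either sign yields the algebraic identities
\begin{equation*}
E = \bigl(1 - \kappa \|\x\|\|\y\|\bigr)^2 + 2\kappa\bigl(\|\x\|\|\y\| - \x^T\y\bigr) = \bigl(1 + \kappa \|\x\|\|\y\|\bigr)^2 - 2\kappa\bigl(\|\x\|\|\y\| + \x^T\y\bigr),
\end{equation*}
and by the Cauchy--Schwarz inequality the two quantities $\|\x\|\|\y\| \pm \x^T\y$ are both non-negative. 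These two identities will be the workhorses.

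Next I would split on the sign of $\kappa$. The case $\kappa = 0$ is immediate since $E = 1$. For $\kappa < 0$ I would use the second identity: both terms are non-negative, so $E \geq (1 + \kappa\|\x\|\|\y\|)^2$. The point is that $\x, \y \in \st$ lie in the open ball of radius $1/\sqrt{-\kappa}$, which gives $-\kappa \|\x\|\|\y\| < 1$ and hence $1 + \kappa \|\x\|\|\y\| > 0$, so $E > 0$ strictly. For $\kappa > 0$ I would use the first identity: $E$ is a sum of two non-negative terms, so $E = 0$ forces both to vanish. The first gives $\kappa \|\x\|\|\y\| = 1$ (in particular $\x, \y \neq 0$), and the second is the equality case of Cauchy--Schwarz, forcing $\x = t\y$ for some $t > 0$. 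Substituting back yields $t = 1/(\kappa\|\y\|^2)$, i.e.\ $\x = \y/(\kappa\|\y\|^2)$. The converse is a one-line verification by direct substitution.

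The only subtle point is the hyperbolic case: algebraically the equation $E = 0$ does admit solutions for $\kappa < 0$ (again antiparallel $\x$ and $\y$ with $\|\x\|\|\y\| = 1/|\kappa|$), and it is exactly the openness of the Poincar\'e ball that excludes them. Everything else is routine expansion and the standard equality case of Cauchy--Schwarz.
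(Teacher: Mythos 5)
Your proof is correct and follows essentially the same route as the paper's: completing the square in $\kappa\Vert\x\Vert\Vert\y\Vert$ and invoking the (equality case of the) Cauchy--Schwarz inequality. If anything, you are more explicit than the paper about the case $\kappa<0$, where the openness of the ball (i.e.\ $-\kappa\Vert\x\Vert^2<1$ for $\x\in\st$) is exactly what excludes the antiparallel solutions of $E=0$ --- a point the paper's one-line argument leaves implicit.
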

\end{tcolorbox}
For $s \in \mathbb{R}$ and $\x \in \mathfrak{st}_\kappa^d$ (and $\vert s\tan_\kappa^{-1}\Vert\x\Vert\vert<\kappa^{\frac{1}{2}}\pi/2$ if $\kappa>0$), the \textbf{$\kappa$-scaling} in the $\kappa$-stereographic model is given by:
\begin{equation}
s \otimes_\kappa \x = \tan_\kappa{(s\cdot \tan_\kappa^{-1}||\x||)}\frac{\x}{||\x||} \in \mathfrak{st}^d_\kappa,
 \end{equation}
where $\tan_\kappa$ equals $\kappa^{-1/2}\tan$ if $\kappa > 0$ and $(-\kappa)^{-1/2}\tanh$ if $\kappa < 0$. This formalism yields simple closed-forms for various quantities including the distance function (see fig. \ref{fig:dist}) inherited from the Riemannian manifold ($\mathfrak{st}_\kappa^d$, $g^\kappa$), the exp and log maps, and geodesics (see fig. \ref{fig:geodesics}), as shown by the following theorem. 
\begin{tcolorbox}
\begin{theorem}[Extending gyrovector spaces to positive curvature]\label{thm:gyropos}
For $\x,\y \in \mathfrak{st}^d_\kappa$, $\x\neq\y$, $\mathbf{v}\neq\mathbf{0}$, (and $\x \neq - \y/(\kappa\Vert\y\Vert^2)$ if $\kappa>0$), the distance function is given by\footnote{We write $-\x\oplus\y$ for $(-\x)\oplus\y$ and not $-(\x\oplus \y)$.}:
\begin{equation}\label{thm:dist}
d_\kappa(\x,\y) = 2\vert\kappa\vert^{-1/2}\tan_\kappa^{-1}\Vert -\x\oplus_\kappa \y\Vert,
\end{equation}
the unit-speed geodesic from $\x$ to $\y$ is unique and given by
\begin{equation}\label{thm:geod}
\gamma_{\x\to\y}(t) = \x \oplus_\kappa \left(t \otimes_\kappa (-\x \oplus_\kappa \y)\right),
\end{equation}
and finally the exponential and logarithmic maps are described as:
\begin{align}
\label{thm:exp}
 \exp_{\x}^{\kappa}(\vv) &= \x \oplus_\kappa\left(\tan_\kappa\left(\vert\kappa\vert^{\frac{1}{2}}\frac{\lambda_{\x}^{\kappa}||\mathbf{v}||}{2}\right)\frac{\mathbf{v}}{||\vv||}\right) \\
\label{thm:log}
 \log_{\x}^{\kappa}(\y) &= \frac{2\vert\kappa\vert^{-\frac{1}{2}}}{\lambda_{\x}^{\kappa}}\tan_\kappa^{-1}||-\x \oplus_\kappa \y||\frac{-\x \oplus_\kappa \y}{||-\x \oplus_k \y||}
\end{align}

\end{theorem}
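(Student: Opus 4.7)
The plan is to exploit the fact that, by construction, $(\st, g^\kappa)$ is the image under the inverse stereographic projection of either the Poincar\'e model ($\kappa<0$) or the round sphere of radius $\kappa^{-1/2}$ in $\R^{d+1}$ ($\kappa>0$); the pullback of the ambient metric is exactly $g^\kappa$, so this map is an isometry onto its image. For $\kappa<0$ all four identities reduce to the Poincar\'e-ball statements established by \citet{ganea2018hyperbolicnn}, which I would take as given. Hence the essential new work is the $\kappa>0$ case, which I would prove directly and then package into the unified $\tan_\kappa$ notation; since the two halves agree at $\kappa=0$ and are jointly analytic in a neighbourhood of the real $\kappa$-line, it then suffices to verify each formula in a single sign regime.

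For the distance formula \eqref{thm:dist}, I would start from the great-circle distance on the sphere of radius $\kappa^{-1/2}$, namely $d_\kappa(\x,\y)=\kappa^{-1/2}\arccos(\kappa\,\tilde{\x}^T\tilde{\y})$. Substituting $\tilde{\x}=(\lambda_\x^\kappa\x,\kappa^{-\frac12}(\lambda_\x^\kappa-1))$ and expanding yields, after a careful but finite algebraic simplification, the key identity
\begin{equation*}
\kappa\,\tilde{\x}^T\tilde{\y}=\frac{1-\kappa\,\Vert -\x\oplus_\kappa\y\Vert^2}{1+\kappa\,\Vert -\x\oplus_\kappa\y\Vert^2},
\end{equation*}
which links the ambient Euclidean inner product to the gyrocalculus expression on $\st$. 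Combining this with the half-angle identity $\arccos((1-u^2)/(1+u^2))=2\arctan(u)$ produces \eqref{thm:dist}. Theorem~\ref{thm:0} is exactly what excludes the antipodal case, where $-\x\oplus_\kappa\y$ is undefined and the distance would otherwise attain its maximum $\kappa^{-\frac12}\pi$.

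For the geodesic formula \eqref{thm:geod}, I would check three properties: (i) $\gamma_{\x\to\y}(0)=\x$, which follows from $0\otimes_\kappa\vv=\mathbf{0}$ and the left identity $\x\oplus_\kappa\mathbf{0}=\x$; (ii) $\gamma_{\x\to\y}(d_\kappa(\x,\y))=\y$, via the left-cancellation identity $\x\oplus_\kappa(-\x\oplus_\kappa\y)=\y$ together with the fact that $s\otimes_\kappa$ dilates the gyro-norm $2|\kappa|^{-\frac12}\tan_\kappa^{-1}\Vert\cdot\Vert$ by a factor of $s$; and (iii) unit-speed parametrization. For (iii), the isotropy of $\st$ (left-gyrotranslations send any point to $\mathbf{0}$, and rotations stabilize the origin and act transitively on unit tangent directions) reduces the check to $\x=\mathbf{0}$ along a single radial direction, where $g^\kappa$ collapses to the one-dimensional metric $4/(1+\kappa r^2)^2 dr^2$ whose unit-speed solution is exactly $\tan_\kappa$.

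Finally, \eqref{thm:exp} and \eqref{thm:log} are obtained by reparametrizing the unit-speed geodesic: the geodesic with $\gamma(0)=\x$ and $\dot\gamma(0)=\vv$ equals $\gamma_{\x\to\y}(\lambda_\x^\kappa\Vert\vv\Vert\, t)$ for any $\y$ whose initial unit tangent matches $\vv/\Vert\vv\Vert$, and evaluating at $t=1$ gives the stated closed form after applying the scaling law for $\otimes_\kappa$; the log map is then the formal inverse on the domain identified in Theorem~\ref{thm:0}. The main obstacle I expect is the displayed identity above: it is a dense algebraic manipulation, but it is also the pivot that bridges the ambient spherical picture and the intrinsic gyro-expression, and once it is in hand every other claim reduces to trigonometric bookkeeping and the isotropy argument.
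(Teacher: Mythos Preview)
Your approach is sound but follows a genuinely different route from the paper's. The paper proceeds in the order geodesic $\to$ exp $\to$ log $\to$ distance: it pulls back the known great-circle geodesics $\Gamma_{\x',\vv'}(t)=\x'\cos_\kappa(t)+\kappa^{-1/2}\sin_\kappa(t)\,\vv'$ of the round sphere through the stereographic map, computing $\x'=\Phi^{-1}(\x)$ and $\vv'=d\Phi^{-1}_\x(\vv)$ explicitly and then applying $\Phi$ to obtain a closed rational-trigonometric formula for $\gamma_{\x,\vv}$ in $\st$; the exponential map is read off from this, then algebraically recognized as a $\kappa$-addition, the log map is checked to be its inverse, and the distance is recovered as $\Vert\log_\x^\kappa(\y)\Vert_\x$. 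You invert this order: distance first, via the inner-product identity $\kappa\,\tilde\x^T\tilde\y=(1-\kappa\Vert{-\x}\oplus_\kappa\y\Vert^2)/(1+\kappa\Vert{-\x}\oplus_\kappa\y\Vert^2)$ and a half-angle formula; then the geodesic formula certified by endpoint checks plus an isotropy reduction to a single radial ray through $\mathbf{0}$; and finally exp and log by reparametrization. What your route buys is that each step is short once the gyro-toolkit---left-cancellation, the norm-dilation law for $\otimes_\kappa$, and crucially the fact that left-gyrotranslations $\z\mapsto\x\oplus_\kappa\z$ are isometries of $g^\kappa$---is in hand; the price is that this last fact is itself nontrivial and must be imported or verified separately, whereas the paper's explicit pullback needs only that $\Phi$ is an isometry (Egregium) and yields the gyro-form of $\exp^\kappa$ as an \emph{output} rather than consuming gyro-isometries as an \emph{input}. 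Your analytic-continuation remark (``verify in a single sign regime'') is not actually load-bearing and would require more care if it were, since $\tan_\kappa$ is defined piecewise in $\kappa$; but as you treat $\kappa>0$ directly anyway it does no harm.
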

\end{tcolorbox}
\textit{Proof sketch:}\\
The case $\kappa \leq 0$ was already taken care of by \citet{ganea2018hyperbolicnn}. For $\kappa > 0$, we provide a detailed proof in Appendix \ref{appendix:thm1}. The exponential map and unit-speed geodesics are obtained using the Egregium theorem and the known formulas in the standard spherical model. The distance then follows from the formula $d_\kappa(\x,\y)=\Vert\log_\x^\kappa(\y)\Vert_\x$ which holds in any Riemannian manifold.
\vspace{-4mm}
\begin{flushright}
$\square$
\end{flushright}
\paragraph{Around $\kappa=0$.} One notably observes that choosing $\kappa = 0$ yields all corresponding Euclidean quantities, which guarantees a \textit{continuous} interpolation between $\kappa$-stereographic models of different curvatures, via Euler's formula $\tan(x)=-i\tanh(ix)$  where $i:=\sqrt{-1}$. But is this interpolation \textit{differentiable} with respect to $\kappa$? 
It is, as shown by the following theorem, proved in Appendix \ref{appendix:thm2}.

\begin{tcolorbox}
\begin{theorem}[Differentiability of $\mathfrak{st}^d_\kappa$ w.r.t. $\kappa$ around $0$] \quad Let $\mathbf{v}\neq\mathbf{0}$ and $\x,\y \in \mathbb{R}^d$, such that $\x\neq\y$ (and $\x \neq - \y/(\kappa\Vert\y\Vert^2)$ if $\kappa>0$). Quantities in Eqs.~(\ref{thm:dist},\ref{thm:geod},\ref{thm:exp}, \ref{thm:log}) are well-defined for $\vert\kappa\vert < 1/\min(\Vert\x\Vert^2,\Vert\y\Vert^2)$, \textit{i.e.} for $\kappa$ small enough. Their first order derivatives at $0^-$ and $0^+$ exist and are equal. Moreover, for the distance we have up to quadratic terms in $\kappa$:
\begin{equation}
\begin{split}
    d_\kappa(\x,\y) \approx{} & 2\Vert\x-\y\Vert \\ & -2\kappa \Big(  \Vert\x-\y\Vert^3/3 + (\x^T\y)\Vert\x-\y\Vert^2\Big) 
\end{split}
\end{equation}
\label{thm:smooth}
\end{theorem}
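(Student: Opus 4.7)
The key idea is that, despite being defined piecewise in the sign of $\kappa$, every quantity appearing in Eqs.~(\ref{thm:dist})-(\ref{thm:log}) extends to a single real-analytic function of $\kappa$ on a neighborhood of $0$. Both the matching-derivatives claim and the expansion of $d_\kappa$ then fall out of this analyticity together with routine power-series bookkeeping.

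I first handle well-definedness. For $\kappa\geq 0$ one has $\mathfrak{st}_\kappa^d = \mathbb{R}^d$; for $\kappa<0$ the constraint $-\kappa\|\z\|^2 < 1$ reads $|\kappa| < 1/\|\z\|^2$. The stated smallness of $|\kappa|$ therefore guarantees $\x,\y\in\mathfrak{st}_\kappa^d$. Theorem~\ref{thm:0} combined with the standing assumption $\x\neq -\y/(\kappa\|\y\|^2)$ rules out the vanishing of the $\oplus_\kappa$-denominator in $-\x\oplus_\kappa\y$, and since $\x\neq\y$ the quantity $\|-\x\oplus_\kappa\y\|$ is continuous in $\kappa$ and equals $\|\y-\x\|\neq 0$ at $\kappa=0$, hence is bounded away from $0$ on a neighborhood of $0$, making every normalization and logarithm in Eqs.~(\ref{thm:dist})-(\ref{thm:log}) meaningful.

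For analyticity, I Taylor-expand $\tan,\tanh,\tan^{-1},\tanh^{-1}$ and use the Euler-type identity $\tan(x)=-i\tanh(ix)$. The $\kappa^{\pm 1/2}$ factors implicit in the two piecewise formulas for $\tan_\kappa$ and $\tan_\kappa^{-1}$ cancel, assembling into single analytic expansions valid on both sides of $0$:
\begin{equation*}
\tan_\kappa(r) = r + \tfrac{1}{3}\kappa r^3 + O(\kappa^2),\qquad \tan_\kappa^{-1}(s) = s - \tfrac{1}{3}\kappa s^3 + O(\kappa^2).
\end{equation*}
The map $\oplus_\kappa$ is rational in $\kappa$ with non-vanishing denominator by the previous paragraph, and $\lambda_\x^\kappa = 2/(1+\kappa\|\x\|^2)$ is manifestly analytic. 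The scalar prefactors appearing in Eqs.~(\ref{thm:dist}) and~(\ref{thm:log}) combine with the $\kappa$-dependent arguments of $\tan_\kappa$ and $\tan_\kappa^{-1}$ so that any apparent $|\kappa|^{\pm 1/2}$ singularity cancels. Therefore every right-hand side in Eqs.~(\ref{thm:dist})-(\ref{thm:log}) is analytic in $\kappa$ on a neighborhood of $0$, in particular $C^1$, so its one-sided derivatives at $0$ exist and coincide.

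For the expansion of $d_\kappa$, I would expand $-\x\oplus_\kappa\y$ to first order in $\kappa$ by Taylor-expanding numerator and denominator of the $\oplus_\kappa$-formula and inverting the denominator geometrically, obtaining $-\x\oplus_\kappa\y = (\y-\x) + \kappa\,\w + O(\kappa^2)$ for an explicit $\w$ read off the formula. The critical simplification is that the inner product $(\y-\x)^T\w$ collapses, via the polarization identity $\|\x-\y\|^2 = \|\x\|^2 + \|\y\|^2 - 2\x^T\y$, to a scalar multiple of $\|\x-\y\|^2$, whence $\|-\x\oplus_\kappa\y\|^2 = \|\x-\y\|^2(1 - 2\kappa\,\x^T\y) + O(\kappa^2)$. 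Taking a square root, substituting into the expansion of $\tan_\kappa^{-1}$ above, and multiplying by the overall prefactor gives the claimed expansion. The conceptual crux is the analyticity step — checking that the $\kappa^{\pm 1/2}$ factors cancel, for which Euler's identity is the clean tool — while the main calculational subtlety is the polarization-identity collapse of $(\y-\x)^T\w$; the rest is bookkeeping.
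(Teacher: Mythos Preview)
Your plan is correct and mirrors the paper's own argument step for step: Taylor-expand $\oplus_\kappa$ in $\kappa$, take the norm (the inner product against the first-order correction indeed collapses via the polarization identity, exactly as you outline), expand the inverse-tangent-type function, and compose.

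One labeling slip worth fixing: under the paper's convention $\tan_\kappa := \kappa^{-1/2}\tan$ for $\kappa>0$ (resp.\ $(-\kappa)^{-1/2}\tanh$ for $\kappa<0$), the \emph{bare} functions $\tan_\kappa(r)$ and $\tan_\kappa^{-1}(s)$ are not analytic in $\kappa$ at $0$ --- the former diverges and the latter tends to $0$ as $\kappa\to 0$. The expansions you display,
\[
r+\tfrac{1}{3}\kappa r^3+O(\kappa^2)\quad\text{and}\quad s-\tfrac{1}{3}\kappa s^3+O(\kappa^2),
\]
actually belong to the combinations $\tan_\kappa(|\kappa|^{1/2}r)$ and $|\kappa|^{-1/2}\tan_\kappa^{-1}(s)$, which are precisely the combinations occurring in Eqs.~(\ref{thm:dist})--(\ref{thm:log}). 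The paper's proof makes exactly this point by expanding these two combinations separately for $\kappa\to 0^+$ and $\kappa\to 0^-$ and checking that the one-sided first derivatives agree. Your follow-up sentence about scalar prefactors combining with the $\kappa$-dependent arguments is the correct mechanism; just relabel the displayed expansions accordingly.
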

\end{tcolorbox}

Note that for $\x^T\y \geq 0$, this tells us that an infinitesimal change of curvature from zero to small negative, \textit{i.e.} towards $0^-$, while keeping $\x,\y$ fixed, has the effect of increasing their distance.

\textit{\textbf{As a consequence, we have a unified formalism that allows for a differentiable interpolation between all three geometries of constant curvature.} 
}
\section{$\kappa$-GCNs}
We start by introducing the methods upon which we build. We present our models for spaces of constant sectional curvature, in the $\kappa$-stereographic model. However, the generalization to cartesian products of such spaces~\citep{gu2018learning} follows naturally from these tools.

\subsection{Graph Convolutional Networks}

The problem of node classification on a graph has long been tackled with explicit regularization using the graph Laplacian \citep{weston2012deep}. Namely, for a directed graph with adjacency matrix $\Af$, by adding the following term to the loss:
$\sum_{i,j}\Af_{ij}\Vert f(\x_i)-f(\x_j)\Vert^2= f(\Xf)^T \Lf f(\Xf)$,
where $\Lf=\Df-\Af$ is the (unnormalized) graph Laplacian, $D_{ii}:=\sum_{k}A_{ik}$ defines the (diagonal) degree matrix, $f$ contains the trainable parameters of the model and $\Xf=(x_i^{j})_{ij}$  the node features of the model. Such a regularization is expected to improve generalization if connected nodes in the graph tend to share labels; node $i$ with feature vector $\x_i$ is represented as $f(\x_i)$ in a Euclidean space. 

With the aim to obtain more scalable models, \citet{graphconv,gcn} propose to make this regularization implicit by incorporating it into what they call \textit{graph convolutional networks} (GCN), which they motivate as a first order approximation of spectral graph convolutions, yielding the following scalable layer architecture (detailed in Appendix \ref{apx:gcn}):
\begin{equation}\label{eq:gcn}
\Hf^{(t+1)} = \sigma\left( \tilde{\Df}^{-\frac{1}{2}}\tilde{\Af}\tilde{\Df}^{-\frac{1}{2}} \Hf^{(t)}\Wf^{(t)}\right)
\end{equation}
where $\tilde{\Af}=\Af+\If$ has added self-connections, $\tilde{D}_{ii}=\sum_k \tilde{A}_{ik}$ defines its diagonal degree matrix, $\sigma$ is a non-linearity such as sigmoid, $\tanh$ or $\mathrm{ReLU}=\max(0,\cdot)$, and $\Wf^{(t)}$ and $\Hf^{(t)}$ are the parameter and activation matrices of layer $t$ respectively, with $\Hf^{(0)}=\Xf$ the input feature matrix. 

\subsection{Tools for a $\kappa$-GCN}
Learning a parametrized function $f_{\theta}$ that respects hyperbolic geometry has been studied in \citet{ganea2018hyperbolicnn}: neural layers and hyperbolic softmax. We generalize their definitions into the $\kappa$-stereographic model, unifying operations in positive and negative curvature. We explain how curvature introduces a fundamental difference between \textit{left} and \textit{right} matrix multiplications, depicting the \textit{M\"obius} matrix multiplication of \citet{ganea2018hyperbolicnn} as a \textit{right} multiplication, independent for each embedding. We then introduce a \textit{left} multiplication by extension of gyromidpoints which ties the embeddings, which is essential for graph neural networks.
\begin{figure}
    \centering
    \includegraphics[width=0.22\textwidth]{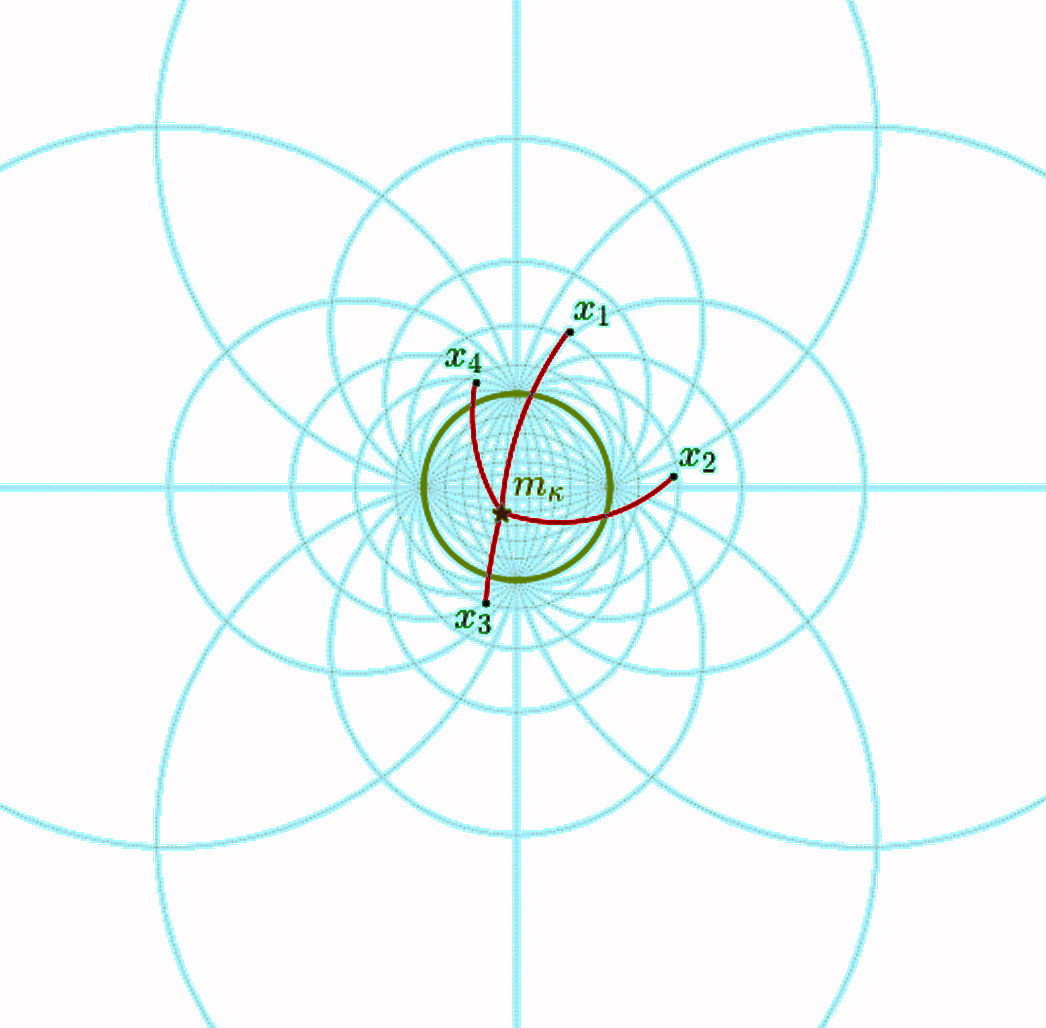}
    \hspace{0.5cm}
	\includegraphics[width=0.42\linewidth]{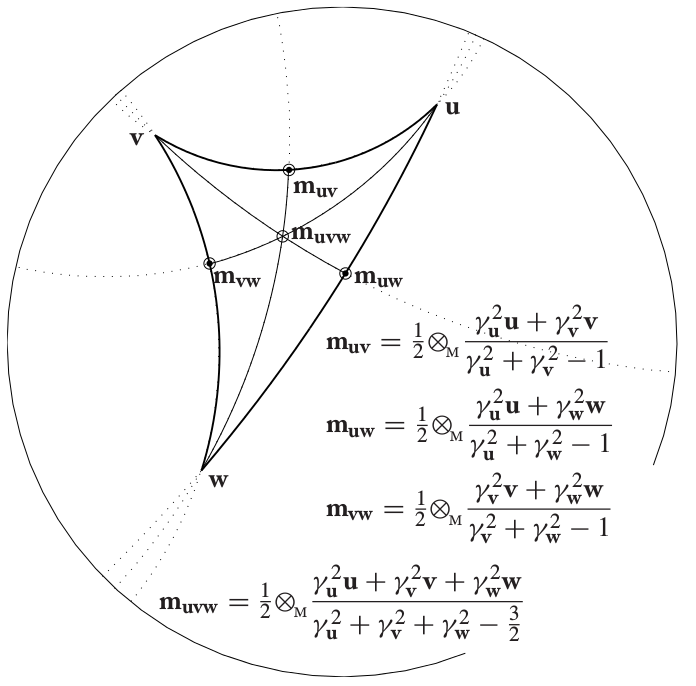}    
    \caption{\label{fig:lin_comb} Left: Spherical gyromidpoint of four points. Right: M\"obius gyromidpoint in the Poincar\'e model defined by  \citep{ungar2008gyrovector} and alternatively, here in eq. (\ref{def:gyromidpt}).}
\end{figure}
\subsection{$\kappa$-Right-Matrix-Multiplication}
Let $\Xf \in \R^{n\times d}$ denote a matrix whose $n$ rows are $d$-dimensional embeddings in $\mathfrak{st}^d_\kappa$, and let $\Wf \in \R^{d \times e}$ denote a weight matrix. Let us first understand what a right matrix multiplication is in Euclidean space: 
the Euclidean right multiplication can be written
row-wise as $(\Xf \Wf)_{i \bullet} = \Xf_{i\bullet} \Wf$. Hence each $d$-dimensional Euclidean embedding is modified \textit{independently} by a right matrix multiplication. A natural adaptation of this operation to the $\kappa$-stereographic model yields the following definition.
\begin{tcolorbox}
\begin{definition}
Given a matrix $\Xf \in \R^{n \times d}$ holding $\kappa$-stereographic embeddings in its rows and weights $\Wf \in \R^{d \times e}$, the \textbf{$\mathbf{\kappa}$-right-matrix-multiplication} is defined row-wise as
\begin{equation}
    \begin{split}
        (\Xf \otimes_{\kappa} \Wf)_{i\bullet} &= \exp_{0}^{\kappa}\left((\log_{0}^{\kappa}(\Xf)\Wf)_{i\bullet}\right) \\  &= \tan_\kappa \left(\alpha_{i}\tan_\kappa^{-1}(||\Xf_{\bullet i}||)\right)\frac{(\Xf\Wf)_{i\bullet}}{||(\Xf\Wf)_{i\bullet}||}
    \end{split}
\end{equation}

where $\alpha_{i} = \frac{||(\Xf\Wf)_{i\bullet}||}{||\Xf_{i\bullet}||}$ and $\exp_{0}^{\kappa}$ and $\log_{0}^{\kappa}$ denote the exponential and logarithmic map in the $\kappa$-stereographic model.
\end{definition}
\end{tcolorbox}

This definition is in perfect agreement with the hyperbolic scalar multiplication for $\kappa < 0$, which can also be written as $r \otimes_{\kappa}\x = \exp_{0}^{\kappa}(r\log_{0}^{\kappa}(\x))$. This operation is known to have desirable properties such as associativity \citep{ganea2018hyperbolicnn}.

\subsection{$\kappa$-Left-Matrix-Multiplication as a Midpoint Extension}
For graph neural networks we also need the notion of message passing among neighboring nodes, \textit{i.e.} an operation that \textit{combines / aggregates} the respective embeddings together. In Euclidean space such an operation is given by the left multiplication of the embeddings matrix with the (preprocessed) adjacency $\hat{\Af}$: $\Hf^{(l+1)} = \sigma (\hat{\Af}\Zf^{(l)}) \ \text{where} \ \Zf^{(l)} = \Hf^{(l)}\Wf^{(l)}$. Let us consider this left multiplication. For $\Af \in \R^{n \times n}$, the matrix product is given row-wise by:
\[
(\Af\Xf)_{i \bullet}  = A_{i1} \Xf_{1\bullet} + \dots + A_{in}\Xf_{n\bullet}
\]
This means that the new representation of node $i$ is obtained by calculating the linear combination of all the other node embeddings, weighted by the $i$-th row of $\Af$. An adaptation to the $\kappa$-stereographic model hence requires a notion of \textit{weighted linear combination}.

We propose such an operation in $\mathfrak{st}^d_\kappa$ by performing a $\kappa$-scaling of a \textit{gyromidpoint} $-$ whose definition is reminded below. Indeed, in Euclidean space, the weighted linear combination $\alpha \x+\beta \y$ can be re-written as $(\alpha+\beta)m_\mathbb{E}(\x,\y;\alpha,\beta)$ with Euclidean midpoint $m_\mathbb{E}(\x,\y;\alpha,\beta) := \frac{\alpha}{\alpha+\beta}\x+\frac{\beta}{\alpha+\beta}\y$. See fig. \ref{fig:weighted_mid} for a geometric illustration. This motivates generalizing the above operation to $\mathfrak{st}^d_\kappa$ as follows.
\begin{tcolorbox}
\begin{definition}
Given a matrix $\Xf\in\mathbb{R}^{n\times d}$ holding $\kappa$-stereographic embeddings in its rows and weights $\Af\in\mathbb{R}^{n\times n}$, the \textbf{$\kappa$-left-matrix-multiplication} is defined row-wise as 
\begin{equation}
(\Af\boxtimes_\kappa \Xf)_{i\bullet} := (\sum_{j}A_{ij})\otimes_\kappa m_\kappa(\Xf_{1\bullet},\cdots,\Xf_{n\bullet}; \Af_{i\bullet}).
\end{equation}
\end{definition}
\end{tcolorbox}

\begin{figure}
    \centering
    \includegraphics[width=0.5\textwidth]{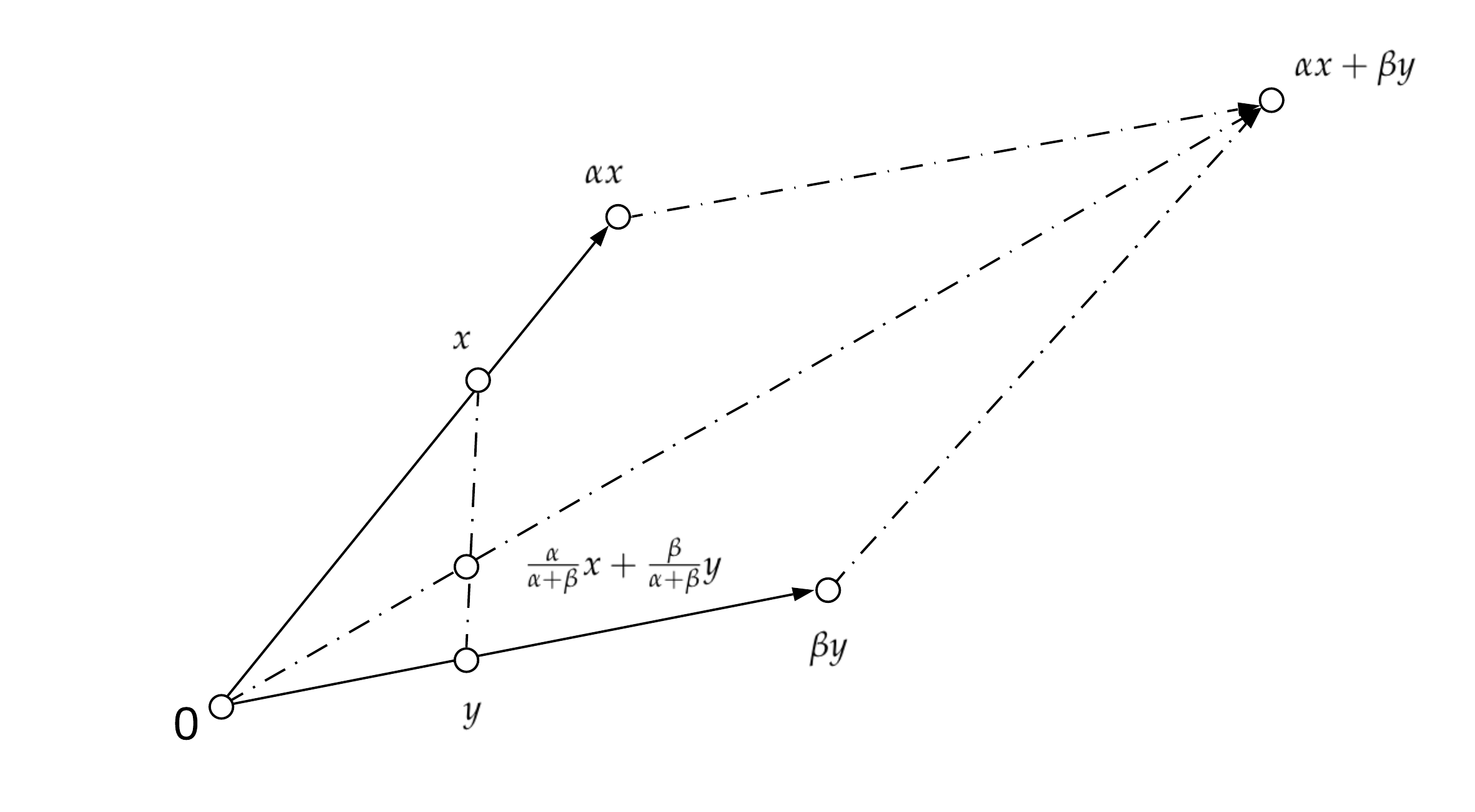}
    \vspace{-1cm}
    \caption{\label{fig:weighted_mid}Weighted Euclidean midpoint $\alpha {\bf x} + \beta {\bf y}$}
\end{figure}

\begin{figure*}
    \centering
    \includegraphics[width=0.35\textwidth]{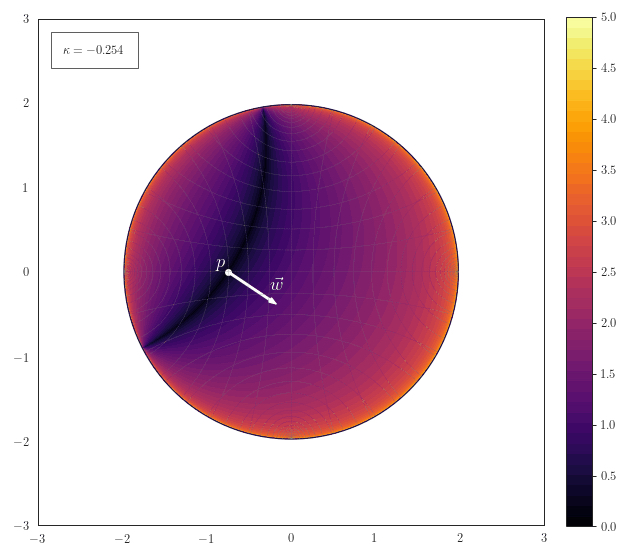}
    \hspace{1.5cm}
    \includegraphics[width=0.35\textwidth]{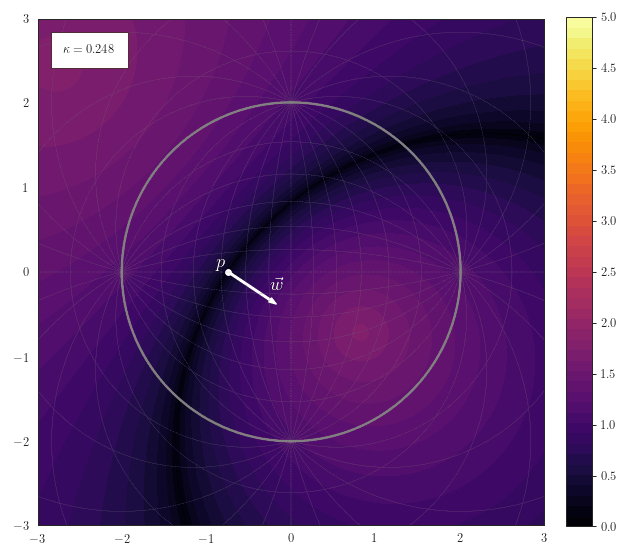}
    \caption{\label{fig:logits}Heatmap of the distance from a $\mathfrak{st}^2_\kappa$-hyperplane to $x \in \mathfrak{st}^2_\kappa$ for $\kappa = -0.254$ (left) and $\kappa = 0.248$ (right)}
\end{figure*}
The $\kappa$-scaling is motivated by the fact that $d_\kappa(\mathbf{0},r\otimes_\kappa \x)=\vert r\vert d_\kappa(\mathbf{0},\x)$ for all $r\in\mathbb{R}$, $\x\in\mathfrak{st}_\kappa^d$. We remind that the gyromidpoint is defined when $\kappa \leq 0$ in the $\kappa$-stereographic model as \citep{Ungar2}:
\begin{equation}\label{def:gyromidpt}
m_\kappa(\x_1,\cdots,\x_n; \boldsymbol{\alpha})= \dfrac{1}{2}\otimes_\kappa\left(\sum_{i=1}^n\dfrac{\alpha_i \lambda^\kappa_{\x_i}}{\sum_{j=1}^n\alpha_j(\lambda^\kappa_{\x_j}-1)}\x_i\right),
\end{equation}
with $\lambda^\kappa_\x = 2/(1+\kappa\Vert \x\Vert^2)$. Whenever $\kappa > 0$, we have to further require the following condition: 
\begin{equation}\label{eq:condition}
\sum_j\alpha_j(\lambda^\kappa_{\x_j}-1) \neq 0.
\end{equation}
For two points, one can calculate that $(\lambda_\x^\kappa -1) + (\lambda_\y^\kappa-1) = 0$ is equivalent to $\kappa\Vert\x\Vert\Vert\y\Vert=1$, which holds in particular whenever $\x = - \y/(\kappa\Vert\y\Vert^2)$. See fig. \ref{fig:lin_comb} for illustrations of gyromidpoints.

Our operation $\boxtimes_\kappa$ satisfies interesting properties, proved in Appendix \ref{appendix:thm3}:
\begin{tcolorbox}
\begin{theorem}[Neuter element \& $\kappa$-scalar-associativity]\label{thm:scalar-assoc}
We have $\If_n\boxtimes_\kappa \Xf = \Xf$, and for $r\in\mathbb{R}$, 
$$r\otimes_\kappa (\Af\boxtimes_\kappa\Xf) = (r\Af)\boxtimes_\kappa\Xf.$$
\end{theorem}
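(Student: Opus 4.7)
The plan is to reduce both statements to straightforward algebraic identities involving the $\kappa$-scaling $\otimes_\kappa$ and the gyromidpoint $m_\kappa$. The key auxiliary fact I will establish first is that a gyromidpoint with weight supported on a single index returns the corresponding point, and that the gyromidpoint is homogeneous of degree zero in the weight vector $\boldsymbol{\alpha}$. Both facts are visible from the definition in Eq.~(\ref{def:gyromidpt}). Throughout I will freely use scalar-associativity of $\otimes_\kappa$, i.e.\ $r \otimes_\kappa (s \otimes_\kappa \x) = (rs) \otimes_\kappa \x$, and the identity $1 \otimes_\kappa \x = \x$, both of which follow directly from the definition of $\otimes_\kappa$ via $\tan_\kappa$/$\tan_\kappa^{-1}$.

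For the neuter element, I set $\Af = \If_n$ and compute row $i$. Since $\Af_{i\bullet}$ has a single nonzero entry equal to $1$ at position $i$, every weighted sum collapses: the numerator in Eq.~(\ref{def:gyromidpt}) reduces to $\lambda^\kappa_{\x_i}\x_i$, and the denominator to $\lambda^\kappa_{\x_i}-1$. Using $\lambda^\kappa_\x = 2/(1+\kappa\Vert\x\Vert^2)$ a short calculation yields
\begin{equation*}
\frac{\lambda^\kappa_{\x_i}}{\lambda^\kappa_{\x_i}-1}\,\x_i \;=\; \frac{2}{1-\kappa\Vert\x_i\Vert^2}\,\x_i.
\end{equation*}
The next step is to recognize the right-hand side as $2\otimes_\kappa \x_i$: by the definition of $\otimes_\kappa$ this amounts to the double-angle identity $\tan_\kappa(2\theta) = 2\tan_\kappa(\theta)/(1-\kappa\tan_\kappa(\theta)^2)$ with $\theta=\tan_\kappa^{-1}\Vert\x_i\Vert$, valid uniformly in the sign of $\kappa$ (from the standard double-angle formulas for $\tan$ and $\tanh$, noting the sign flip $\tan^2\leftrightarrow-\tanh^2$ matches the $\kappa$-prefactor). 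Then $m_\kappa(\Xf_{1\bullet},\dots,\Xf_{n\bullet};\If_{n,i\bullet}) = \tfrac{1}{2}\otimes_\kappa(2\otimes_\kappa \x_i) = 1\otimes_\kappa \x_i = \x_i$, and since $\sum_j(\If_n)_{ij}=1$, we get $(\If_n\boxtimes_\kappa\Xf)_{i\bullet} = 1\otimes_\kappa \x_i = \Xf_{i\bullet}$.

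For $\kappa$-scalar-associativity, I apply the definition of $\boxtimes_\kappa$ to both sides. On the left, $r\otimes_\kappa(\Af\boxtimes_\kappa\Xf)_{i\bullet} = r\otimes_\kappa\bigl((\sum_j A_{ij})\otimes_\kappa m_\kappa(\cdots;\Af_{i\bullet})\bigr) = \bigl(r\sum_j A_{ij}\bigr)\otimes_\kappa m_\kappa(\cdots;\Af_{i\bullet})$ by scalar-associativity of $\otimes_\kappa$. On the right, $((r\Af)\boxtimes_\kappa\Xf)_{i\bullet} = \bigl(r\sum_j A_{ij}\bigr)\otimes_\kappa m_\kappa(\cdots;r\Af_{i\bullet})$. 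It remains to observe that $m_\kappa(\x_1,\dots,\x_n;r\boldsymbol{\alpha}) = m_\kappa(\x_1,\dots,\x_n;\boldsymbol{\alpha})$ for any $r\neq 0$, which is immediate from Eq.~(\ref{def:gyromidpt}): the scalar $r$ factors out of both the numerator and denominator of the inner sum and cancels. For $r=0$ both sides equal $\mathbf{0}$ trivially ($0\otimes_\kappa\cdot = \mathbf{0}$). This closes the proof.

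The only nonroutine step is verifying that the single-point gyromidpoint returns the point itself, i.e.\ the identity $\tfrac{1}{2}\otimes_\kappa(2\otimes_\kappa \x)=\x$ packaged via the $\tan_\kappa$ double-angle formula. Everything else is bookkeeping on the definitions and the known associativity of $\otimes_\kappa$. I anticipate needing one minor caveat in the $\kappa>0$ case to ensure that the well-definedness condition (\ref{eq:condition}) holds at each step, which amounts to excluding the measure-zero locus $\kappa\Vert\x_i\Vert^2=1$, consistent with the ambient hypotheses of the $\kappa$-stereographic model.
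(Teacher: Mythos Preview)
Your proof is correct and follows essentially the same approach as the paper: both reduce the neuter-element claim to recognizing $\frac{\lambda^\kappa_{\x_i}}{\lambda^\kappa_{\x_i}-1}\x_i = 2\otimes_\kappa \x_i$ and then applying $\tfrac{1}{2}\otimes_\kappa(2\otimes_\kappa \x_i)=\x_i$, and both obtain the associativity statement from the degree-zero homogeneity of the gyromidpoint in its weight vector together with scalar-associativity of $\otimes_\kappa$. Your version is slightly more explicit (spelling out the $\tan_\kappa$ double-angle identity and the $r=0$ edge case), but there is no substantive difference.
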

\end{tcolorbox}
\paragraph{The matrix $\Af$.} In most GNNs, the matrix $\Af$ is intended to be a preprocessed adjacency matrix, \textit{i.e.} renormalized by the diagonal degree matrix $\Df_{ii}=\sum_k A_{ik}$. This normalization is often taken either \textit{(i)} to the left: $\Df^{-1}\Af$, \textit{(ii)} symmetric: $\Df^{-\frac{1}{2}}\Af\Df^{-\frac{1}{2}}$ or \textit{(iii)} to the right: $\Af\Df^{-1}$. Note that the latter case makes the matrix \textit{right-stochastic}\footnote{$\Mf$ is \textit{right-stochastic} if for all $i$, $\sum_j M_{ij}=1$.}, which is a property that is preserved by matrix product and exponentiation. For this case, we prove the following result in Appendix \ref{appendix:thm4}:
\begin{tcolorbox}
\begin{theorem}[$\kappa$-left-multiplication by right-stochastic matrices is intrinsic]
\label{thm:intrinsic}
If $\Af,\Bf$ are right-stochastic, $\phi$ is a isometry of $\mathfrak{st}^d_\kappa$ and $\Xf$,$\Yf$ are two matrices holding $\kappa$-stereographic embeddings: 
\begin{equation}
\begin{split}
\forall i,\quad d_{\phi} &= d_\kappa\left((\Af\boxtimes_\kappa \phi(\Xf))_{i\bullet},(\Bf\boxtimes_\kappa \phi(\Yf))_{i\bullet}\right) \\ &=  d_\kappa((\Af\boxtimes_\kappa \Xf)_{i\bullet},(\Bf\boxtimes_\kappa \Yf)_{i\bullet}).
\end{split}
\end{equation}
\end{theorem}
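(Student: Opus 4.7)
The plan is to prove the stronger equivariance statement: for every right-stochastic $\Af$, every isometry $\phi$ of $\mathfrak{st}^d_\kappa$, and every embedding matrix $\Xf$,
\begin{equation*}
\phi\bigl((\Af\boxtimes_\kappa \Xf)_{i\bullet}\bigr)=(\Af\boxtimes_\kappa \phi(\Xf))_{i\bullet}.
\end{equation*}
Once this is in hand, the theorem follows in one line by applying $\phi$ to the two arguments of $d_\kappa$ and invoking the isometry property $d_\kappa(\phi(\p),\phi(\q))=d_\kappa(\p,\q)$, once with $(\Af,\Xf)$ and once with $(\Bf,\Yf)$.

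\textbf{Reduction to gyromidpoints.} Because $\Af$ is right-stochastic, $\sum_j A_{ij}=1$ for every $i$, and $1\otimes_\kappa \z=\z$. The definition of the $\kappa$-left-matrix-multiplication therefore collapses to
\begin{equation*}
(\Af\boxtimes_\kappa \Xf)_{i\bullet}=m_\kappa(\Xf_{1\bullet},\ldots,\Xf_{n\bullet};\Af_{i\bullet}).
\end{equation*}
The equivariance statement thus reduces to showing that the gyromidpoint $m_\kappa$ is equivariant under every isometry of $\mathfrak{st}^d_\kappa$.

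\textbf{Equivariance of the gyromidpoint.} The isometry group of $\mathfrak{st}^d_\kappa$ is generated by (i) orthogonal transformations $O\in\mathrm{O}(d)$ fixing the origin and (ii) gyrotranslations $\x\mapsto \aaa\oplus_\kappa \x$ (up to the gyrations arising from their composition). For (i), the claim is immediate: the conformal factor $\lambda^\kappa_\x=2/(1+\kappa\|\x\|^2)$ depends only on $\|\x\|$, which is $O$-invariant, so every coefficient in the formula for $m_\kappa$ is unchanged, while the outer sum $\sum_i c_i \x_i$ and the outer $\frac{1}{2}\otimes_\kappa$ commute with $O$ by linearity and radial symmetry. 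For (ii), I would verify the identity
\begin{equation*}
\aaa\oplus_\kappa m_\kappa(\x_1,\ldots,\x_n;\boldsymbol{\alpha})=m_\kappa(\aaa\oplus_\kappa\x_1,\ldots,\aaa\oplus_\kappa\x_n;\boldsymbol{\alpha})
\end{equation*}
by a direct calculation using the standard identities $\lambda^\kappa_{\aaa\oplus_\kappa\x_i}=\lambda^\kappa_\aaa\lambda^\kappa_{\x_i}/(1-2\kappa\aaa^T\x_i+\kappa^2\|\aaa\|^2\|\x_i\|^2)^{1/2}$ (with the appropriate sign), together with the fact that $\frac{1}{2}\otimes_\kappa$ commutes with $\aaa\oplus_\kappa$ (since $\frac{1}{2}\otimes_\kappa\z$ is the geodesic midpoint of $\mathbf{0}$ and $\z$, and gyrotranslations send geodesics to geodesics). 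A conceptually cleaner route is to note that the gyromidpoint admits an intrinsic Riemannian characterization (as the unique solution of a weighted gyro-barycentric equation phrased entirely in terms of $\oplus_\kappa$ and $\otimes_\kappa$, which are themselves intrinsic since they are built from $d_\kappa$, geodesics, and $\exp/\log$), so equivariance under any isometry is automatic.

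\textbf{Main obstacle.} The only non-routine step is the gyrotranslation case: naive expansion of $\aaa\oplus_\kappa\x_i$ inside the gyromidpoint formula produces cross terms whose cancellation hinges on the precise denominator chosen in equation~(\ref{def:gyromidpt}), namely $\sum_j\alpha_j(\lambda^\kappa_{\x_j}-1)$ rather than $\sum_j\alpha_j\lambda^\kappa_{\x_j}$. I expect the algebra to be tightest in the spherical branch $\kappa>0$, where the condition (\ref{eq:condition}) must be checked for the translated points, and where the $(\lambda^\kappa_{\aaa\oplus_\kappa\x_i}-1)$ terms need to be rewritten so that the common factor $\lambda^\kappa_\aaa$ can be pulled out and absorbed by the outer $\aaa\oplus_\kappa$. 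After this cancellation the equivariance drops out and the theorem follows.
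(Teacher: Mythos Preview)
Your overall strategy is the same as the paper's: use right-stochasticity to collapse $(\Af\boxtimes_\kappa\Xf)_{i\bullet}$ to a bare gyromidpoint, establish that the gyromidpoint is equivariant under every isometry of $\mathfrak{st}^d_\kappa$, and then conclude by the distance-preserving property of $\phi$. The paper does not unpack the equivariance step at all; it simply cites Ungar's monograph for the hyperbolic case and asserts that the identical algebra goes through for $\kappa>0$. Your reduction and your final ``one line'' conclusion are exactly right.

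Where you go beyond the paper you stumble once. Your claim that ``$\tfrac{1}{2}\otimes_\kappa$ commutes with $\aaa\oplus_\kappa$'' is false: $\tfrac{1}{2}\otimes_\kappa\z$ is the geodesic midpoint of $\mathbf{0}$ and $\z$, so gyrotranslating it by $\aaa$ produces the midpoint of $\aaa$ and $\aaa\oplus_\kappa\z$, \emph{not} $\tfrac{1}{2}\otimes_\kappa(\aaa\oplus_\kappa\z)$, which is the midpoint of $\mathbf{0}$ and $\aaa\oplus_\kappa\z$. Hence your proposed direct calculation cannot be split into ``transform the inner sum'' plus ``pull $\tfrac{1}{2}\otimes_\kappa$ through the translation''; the equivariance genuinely comes from a joint cancellation between the transformed $\lambda^\kappa$-weights and the transformed summands, which is precisely the algebra Ungar carries out. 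Your alternative ``conceptually cleaner route'' via an intrinsic barycentric characterization is sound and is in fact closer in spirit to how Ungar himself argues, so that branch of your proposal survives.
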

\end{tcolorbox}
The above result means that $\Af$ can easily be preprocessed as to make its $\kappa$-left-multiplication intrinsic to the metric space ($\mathfrak{st}_\kappa^d$, $d_\kappa$). At this point, one could wonder: does there exist other ways to take weighted centroids on a Riemannian manifold? We comment on two plausible alternatives.
\paragraph{Fr\'echet/Karcher means.} They are obtained as $\arg\min_\x\sum_i \alpha_i d_\kappa(\x,\x_i)^2$; note that although they are also intrinsic, they usually require solving an optimization problem which can be prohibitively expensive, especially when one requires gradients to flow through the solution $-$ moreover, for the space $\mathfrak{st}_\kappa^d$, it is known that the minimizer is unique if and only if $\kappa \geq 0$.
\vspace{-3mm}
\paragraph{Tangential aggregations.}   
The linear combination is here lifted to the tangent space by means of the exponential and logarithmic map and were in particular used in the recent works of \citet{chami2019hyperbolic} and \citet{liu2019hyperbolic}. 
\begin{tcolorbox}
\begin{definition}
The tangential aggregation of $\x_1,\dots,\x_n\in\mathfrak{st}^d_\kappa$ w.r.t. weights $\{\alpha_i\}_{1\leq i\leq n}$, at point $\x\in\mathfrak{st}^d_\kappa$ (for $\x_i\neq - \x/(\kappa\Vert\x\Vert^2)$ if $\kappa>0$) is defined by:
\begin{equation}
\mathfrak{tg}_\x^\kappa(\x_1,...,\x_n;\alpha_1,...,\alpha_n):=\exp_\x^\kappa\left(\sum_{i=1}^n \alpha_i \log_\x^\kappa (\x_i)\right).
\end{equation}

\end{definition}
\end{tcolorbox}
The below theorem describes that for the $\kappa$-stereographic model, this operation is also intrinsic. We prove it in Appendix \ref{appendix:thm5}.
\begin{tcolorbox}
\begin{theorem}[Tangential aggregation is intrinsic]
\label{thm:no-intrinsic}
For any isometry $\phi$ of $\mathfrak{st}_\kappa^d$, we have
\begin{equation}
\mathfrak{tg}_{\phi(\x)}(\{\phi(\x_i)\};\{\alpha_i\}) = \phi(\mathfrak{tg}_{\x}(\{\x_i\};\{\alpha_i\})).
\end{equation}
\end{theorem}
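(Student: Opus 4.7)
\textit{Proof proposal.} The plan is to reduce the claim to the classical Riemannian-geometry fact that any isometry intertwines the exponential and logarithmic maps via its differential. For an isometry $\phi$ of a Riemannian manifold, the differential $d\phi_\x : T_\x\mathfrak{st}^d_\kappa \to T_{\phi(\x)}\mathfrak{st}^d_\kappa$ is a linear isometry of tangent spaces, and since $\phi$ carries geodesics to geodesics of the same speed, one has the identities
\begin{equation*}
\phi\circ\exp_\x^\kappa = \exp_{\phi(\x)}^\kappa\circ\,d\phi_\x,\qquad d\phi_\x\circ\log_\x^\kappa = \log_{\phi(\x)}^\kappa\circ\,\phi,
\end{equation*}
valid on the respective domains of definition.

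Given these intertwiners, the proof is a direct computation. First, apply the log identity to each argument to get $\log_{\phi(\x)}^\kappa(\phi(\x_i)) = d\phi_\x(\log_\x^\kappa(\x_i))$. Next, by linearity of $d\phi_\x$, form the weighted sum
\begin{equation*}
\sum_{i=1}^n \alpha_i \log_{\phi(\x)}^\kappa(\phi(\x_i)) \;=\; d\phi_\x\!\left(\sum_{i=1}^n \alpha_i \log_\x^\kappa(\x_i)\right).
\end{equation*}
Finally, apply $\exp_{\phi(\x)}^\kappa$ to both sides and use the exp identity to move $\phi$ outside; the right-hand side becomes exactly $\phi(\mathfrak{tg}_\x^\kappa(\{\x_i\};\{\alpha_i\}))$, while the left is $\mathfrak{tg}_{\phi(\x)}^\kappa(\{\phi(\x_i)\};\{\alpha_i\})$ by definition.

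The only subtle step is to ensure both sides are simultaneously well-defined. Degeneracy occurs only for $\kappa>0$, when some $\x_i$ coincides with the antipode $-\x/(\kappa\Vert\x\Vert^2)$ of $\x$. Under the stereographic projection to the sphere of radius $\kappa^{-1/2}$, this is exactly the classical antipodal relation $\tilde{\x}_i=-\tilde{\x}$, and every spherical isometry, being the restriction of an orthogonal transformation of $\mathbb{R}^{d+1}$, commutes with the antipodal map $\tilde{\y}\mapsto-\tilde{\y}$. Consequently $\phi(-\x/(\kappa\Vert\x\Vert^2)) = -\phi(\x)/(\kappa\Vert\phi(\x)\Vert^2)$, so the non-antipodal hypothesis transfers from $(\x,\x_i)$ to $(\phi(\x),\phi(\x_i))$.

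The main obstacle, such as it is, is simply invoking (rather than re-deriving) the classical intertwining relation $\phi\circ\exp = \exp\circ\,d\phi$; this is a textbook fact in Riemannian geometry, and once it is in hand the linearity of $d\phi_\x$ closes the argument in essentially one line.
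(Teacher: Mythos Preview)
Your argument is correct, and it is genuinely different from the paper's. You invoke the abstract Riemannian fact that any isometry $\phi$ satisfies $\phi\circ\exp_\x=\exp_{\phi(\x)}\circ\,d\phi_\x$ and the companion identity for $\log$, then use linearity of the differential to push $d\phi_\x$ through the weighted sum. This is clean and model-independent: nothing about the $\kappa$-stereographic model enters except the well-definedness check for $\kappa>0$, which you handle correctly via the antipodal map on the sphere.

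The paper instead works entirely inside the gyrovector calculus. It first classifies the isometries of $\mathfrak{st}_\kappa^d$ explicitly as $\phi(\x)=\z\oplus_\kappa\Rb\x$ with $\Rb\in O(d)$, introduces the gyration operator $\mathrm{gyr}[\uu,\vv]$ and proves several of its algebraic properties, and then plugs the closed-form $\exp$/$\log$ formulas from Theorem~\ref{thm:gyropos} into both sides to verify the identity by direct computation. In effect, the paper re-derives the intertwining relation in coordinates rather than citing it.

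What each buys: your route is shorter, conceptually transparent, and would apply verbatim on any Riemannian manifold where the tangential aggregation makes sense. The paper's route is self-contained within the gyro framework it has been building, avoids appeal to outside Riemannian facts, and yields as a by-product the explicit description of isometries and the gyration identities, which may be of independent use elsewhere in the paper.
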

\end{tcolorbox}

\subsection{Logits}
Finally, we need the logit and softmax layer, a neccessity for any classification task. We here use the model of~\citet{ganea2018hyperbolicnn}, which was obtained in a principled manner for the case of negative curvature. Their derivation rests upon the closed-form formula for distance to a hyperbolic hyperplane. We naturally extend this formula to $\mathfrak{st}^d_\kappa$, hence also allowing for $\kappa > 0$ but leave for future work the adaptation of their theoretical analysis.
\begin{equation}
\label{eq:dist2plane}
 p(y=k|\x) = \text{S}\left(\frac{||\aaa_{k}||_{\p_k}}{\sqrt{|\kappa|}}\sin_{\kappa}^{-1}\left(\frac{2\sqrt{|\kappa|}\langle \z_{k}, \aaa_{k} \rangle}{(1+\kappa||\z_{k}||^{2})||\aaa_{k}||}\right)\right),
\end{equation}
where $\aaa_{k} \in \mathcal{T}_{0}\st \cong \R^{d}$ and $\p_{k} \in \mathfrak{st}^d_\kappa$ are trainable parameters, $\x \in \st\ \mathrm{,}\ $ is the input, \\$z_k = -\p_k \oplus \x$ and $\text{S}(\cdot)$ is the softmax function. \\
We reference the reader to Appendix \ref{sec:apx_logits} for further details and
to fig. \ref{fig:logits} for an illustration of eq. \ref{eq:dist2plane}.

\subsection{$\kappa$-GCN}
We are now ready to introduce our $\kappa$-stereographic GCN \citep{gcn}, denoted by $\kappa$-GCN\footnote{To be pronounced ``kappa'' GCN; the greek letter $\kappa$ being commonly used to denote sectional curvature}. Assume we are given a graph with node level features $G=(V,\Af,\Xf)$ where $\Xf \in \R^{n \times d}$ with each row $\Xf_{i\bullet} \in \st$ and adjacency $\Af \in \R^{n \times n}$. We first perform a preprocessing step by mapping the Euclidean features to $\st$ via the projection $\Xf \mapsto \Xf/(2\sqrt{\vert\kappa\vert}||\Xf||_{\max})$, where $||\Xf||_{\max}$ denotes the maximal Euclidean norm among all stereographic embeddings in $\Xf$. For $l \in \{0, \dots , L-2\}$, the $(l+1)$-th layer of $\kappa$-GCN is given by:  
\begin{equation}
\Hf^{(l+1)} = \sigma^{\otimes_\kappa}\left(\hat{\Af} \boxtimes_\kappa \left(\Hf^{(l)} \otimes_\kappa \Wf^{(l)}\right)\right),
\end{equation}
where $\Hf^{(0)} = \Xf$, $\sigma^{\otimes_\kappa}(\x):=\exp^\kappa_\mathbf{0}(\sigma(\log_\mathbf{0}^\kappa(\x)))$ is the M\"obius version \citep{ganea2018hyperbolicnn} of a pointwise non-linearity $\sigma$ and $\hat{\Af} = \tilde{\Df}^{-\frac{1}{2}}\tilde{\Af}\tilde{\Df}^{-\frac{1}{2}}$. The final layer is a $\kappa$-logit layer (Appendix \ref{sec:apx_logits}):
\begin{equation}
\Hf^{(L)} = \text{softmax}\left(\hat{\Af} \ \text{logit}_\kappa\left(\Hf^{(L-1)}, \Wf^{(L-1)}\right)\right),
\end{equation}
where $\Wf^{(L-1)}$ contains the parameters $\aaa_k$ and $\p_k$ of the $\kappa$-logits layer. A very important property of $\kappa$-GCN is that its architecture recovers the Euclidean GCN when we let curvature go to zero:
\begin{tcolorbox}
\begin{equation*}
\kappa\text{-GCN} \xrightarrow{\kappa \xrightarrow{}0} \text{GCN}.
\end{equation*}
\end{tcolorbox}

\section{Experiments}

We evaluate the architectures introduced in the previous sections on the tasks of node classification and minimizing embedding distortion for several synthetic as well as real datasets.  We detail the training setup and model architecture choices to Appendix \ref{apx:exp}. 

\begin{table}
\caption{Minimum achieved average distortion of the different models. $\mathbb{H}$ and $\mathbb{S}$ denote hyperbolic and spherical models respectively.}
\label{tab:dist}
\vskip 0.15in
\begin{center}
\begin{small}
\begin{sc}
\begin{tabular}{lcccr}
\toprule
Model & Tree & Toroidal & Spherical \\
\midrule
$\mathbb{E}^{10}$ (Linear)    & 0.045 & 0.0607 & 0.0415 \\
$\mathbb{E}^{10}$ (ReLU) & 0.0502 & 0.0603 & 0.0409 \\
$\mathbb{H}^{10}$ ($\kappa$-GCN)   & \textbf{0.0029} & 0.272& 0.267 \\
$\mathbb{S}^{10}$ ($\kappa$-GCN)   & 0.473 & 0.0485 & \textbf{0.0337}         \\
$\mathbb{H}^{5} \times \mathbb{H}^{5}$ ($\kappa$-GCN)     & 0.0048 & 0.112 & 0.152\\
$\mathbb{S}^{5} \times \mathbb{S}^{5}$  ($\kappa$-GCN)    & 0.51& \textbf{0.0464} & 0.0359 \\
$\left(\mathbb{H}^{2}\right)^{4}$  ($\kappa$-GCN)    & 0.025& 0.084 & 0.062         \\
$\left(\mathbb{S}^{2}\right)^{4}$ ($\kappa$-GCN)   & 0.312 & 0.0481 & 0.0378 \\
\bottomrule
\end{tabular}
\end{sc}
\end{small}
\end{center}
\vskip -0.1in
\end{table}
\begin{table*}[t]
\caption{\label{tab:node_class}Node classification: Average accuracy across 5 splits with estimated uncertainties at 95 percent confidence level via bootstrapping on our datasplits. $\mathbb{H}$ and $\mathbb{S}$ denote hyperbolic and spherical models respectively.}
\vskip 0.15in
\begin{center}
\begin{small}
\begin{sc}
\begin{tabular}{lcccr}
\toprule
Model & Citeseer & Cora & Pubmed & Airport \\
\midrule
$\mathbb{E}^{16}$ \citep{gcn} & $0.729 \pm 0.0054$ & $0.814 \pm 0.004$ & $0.792 \pm 0.0039$ & $0.814 \pm 0.0029$\\
$\mathbb{H}^{16}$ \citep{chami2019hyperbolic} & $0.71 \pm 0.0049$ & $0.803 \pm 0.0046$ & $\textbf{0.798} \pm \textbf{0.0043}\textbf{}$ & $\textbf{0.844} \pm \textbf{0.0041}$ \\
$\mathbb{H}^{16}$ ($\kappa$-GCN) & $\textbf{0.732} \pm \textbf{0.0051}$ & $0.812 \pm 0.005$ & $0.785 \pm 0.0036$ & $0.819 \pm 0.0033$ \\
$\mathbb{S}^{16}$ ($\kappa$-GCN)  & $0.721 \pm 0.0045$ & $\textbf{0.819} \pm \textbf{0.0045}$ & $0.788 \pm 0.0049$ & $0.809 \pm 0.0058$         \\
Prod-GCN ($\kappa$-GCN)  & $0.711 \pm 0.0059$ & $0.808 \pm 0.0041$ & $0.781 \pm 0.006$ & $0.817 \pm 0.0044$\\
\bottomrule
\end{tabular}
\end{sc}
\end{small}
\end{center}
\vskip -0.1in
\end{table*}
\paragraph{Minimizing Distortion}
Our first goal is to evaluate the graph embeddings learned by our GCN models on the representation task of fitting the graph metric in the embedding space. We desire to minimize the average distortion, i.e. defined similarly as in~\citet{gu2018learning}: $\frac{1}{n^2} \sum_{i,j} \left( \left(  \frac{d(\x_i, \x_j)}{d_G(i,j)} \right)^2 - 1  \right)^2$, where $d(\x_i, \x_j)$ is the distance between the embeddings of nodes i and j, while $d_G(i,j)$ is their graph distance (shortest path length). 
\par We create three synthetic datasets that best reflect the different geometries of interest: i) ``Tree`'': a balanced tree of depth 5 and branching factor 4 consisting of 1365 nodes and 1364 edges. ii) ``Torus'': We sample points (nodes) from the (planar) torus, i.e. from the unit connected square; two nodes are connected by an edge if their toroidal distance (the warped distance) is smaller than a fixed $R = 0.01$; this gives 1000 nodes and 30626 edges. iii) ``Spherical Graph'': we sample points (nodes) from $\Sb^2$, connecting nodes if their distance is smaller than 0.2, leading to 1000 nodes and 17640 edges. 

\par For the GCN models, we use 1-hot initial node features. We use two GCN layers with dimensions 16 and 10. The non-Euclidean models do not use additional non-linearities between layers. All Euclidean parameters are updated using the ADAM optimizer with learning rate 0.01. Curvatures are learned using gradient descent and learning rate of 0.0001. All models are trained for 10000 epochs and we report the minimal achieved distortion. 
\vspace{-2mm}
\paragraph{Distortion results.} The obtained distortion scores shown in table \ref{tab:dist} reveal the benefit of our models. The best performing architecture is the one that matches the underlying geometry of the graph. 

\begin{figure}[ht] 
  \begin{minipage}[b]{0.5\linewidth}
    \centering
    \includegraphics[width=1\linewidth]{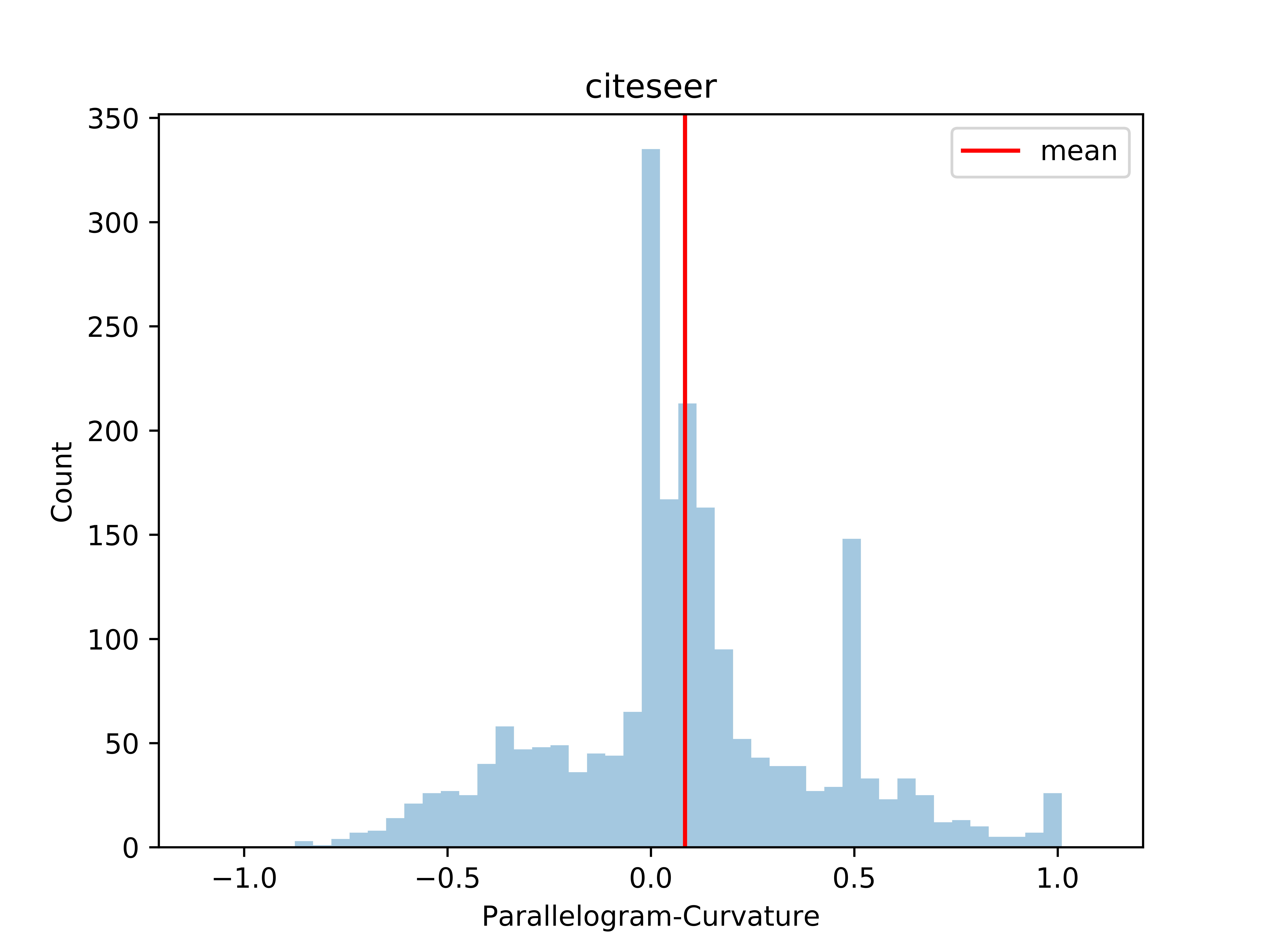} 
  \end{minipage}
  \begin{minipage}[b]{0.5\linewidth}
    \centering
    \includegraphics[width=1\linewidth]{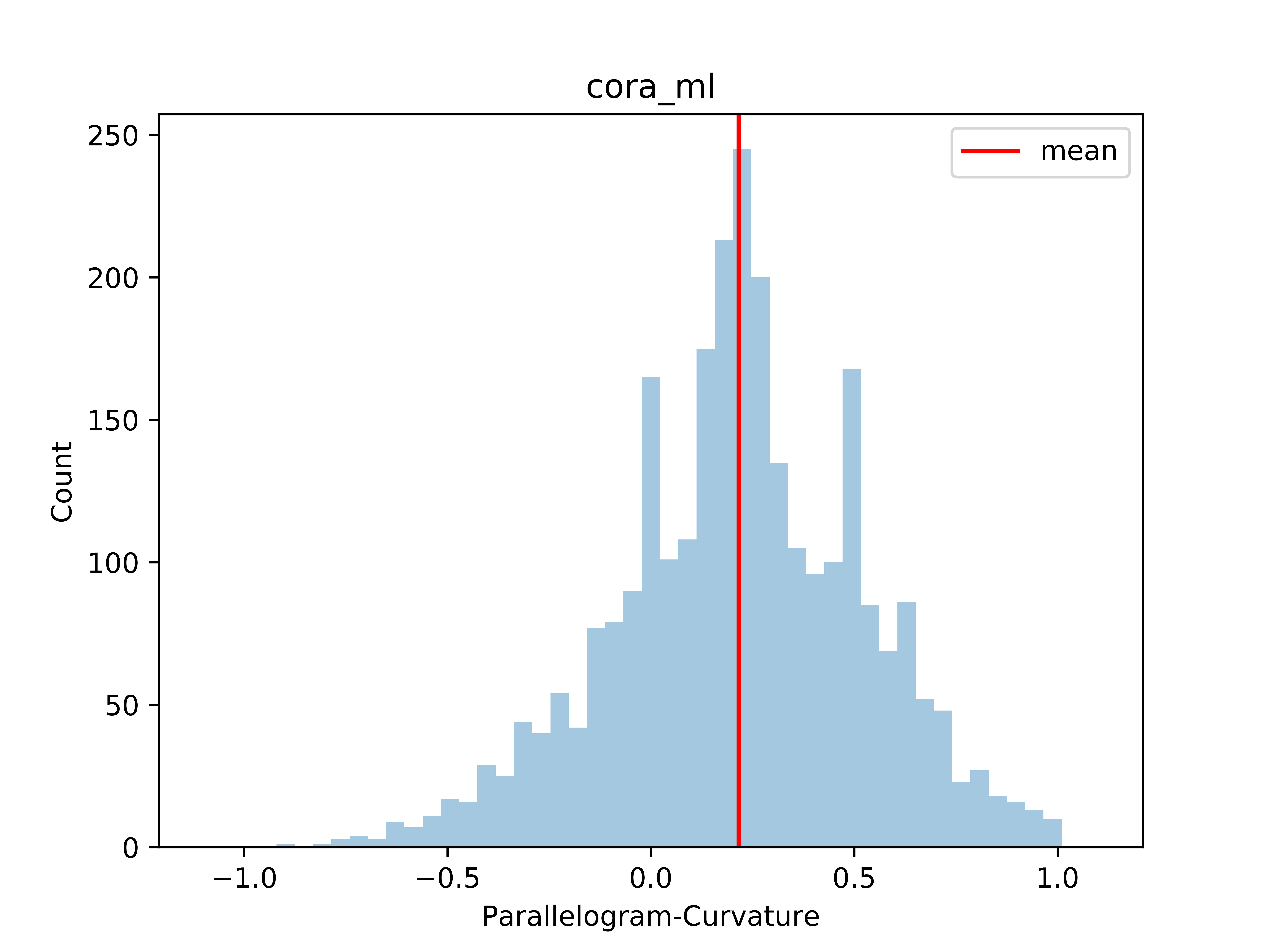} 
  \end{minipage} 
  \begin{minipage}[b]{0.5\linewidth}
    \centering
    \includegraphics[width=1\linewidth]{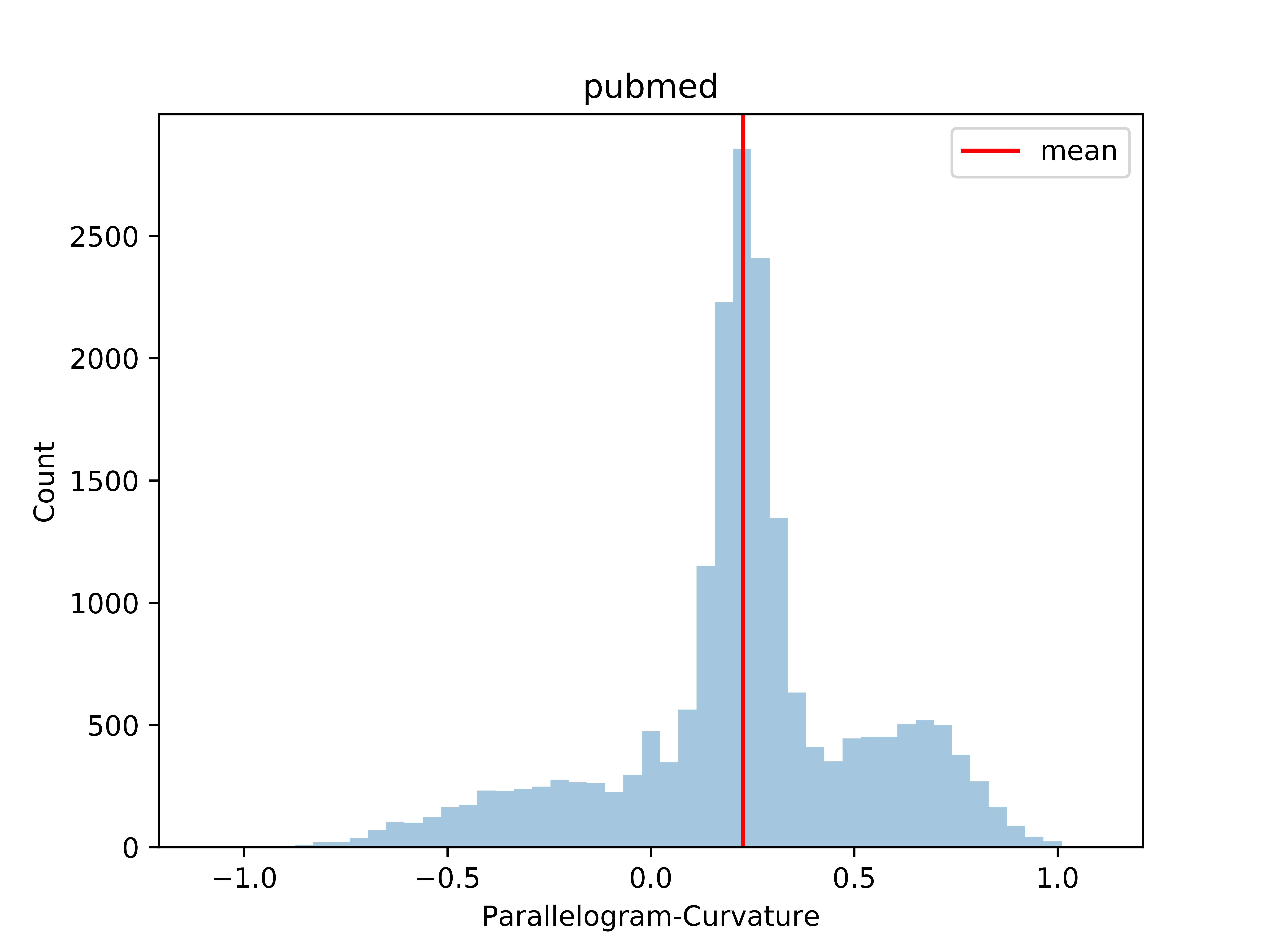} 
  \end{minipage}
  \begin{minipage}[b]{0.5\linewidth}
    \centering
    \includegraphics[width=1\linewidth]{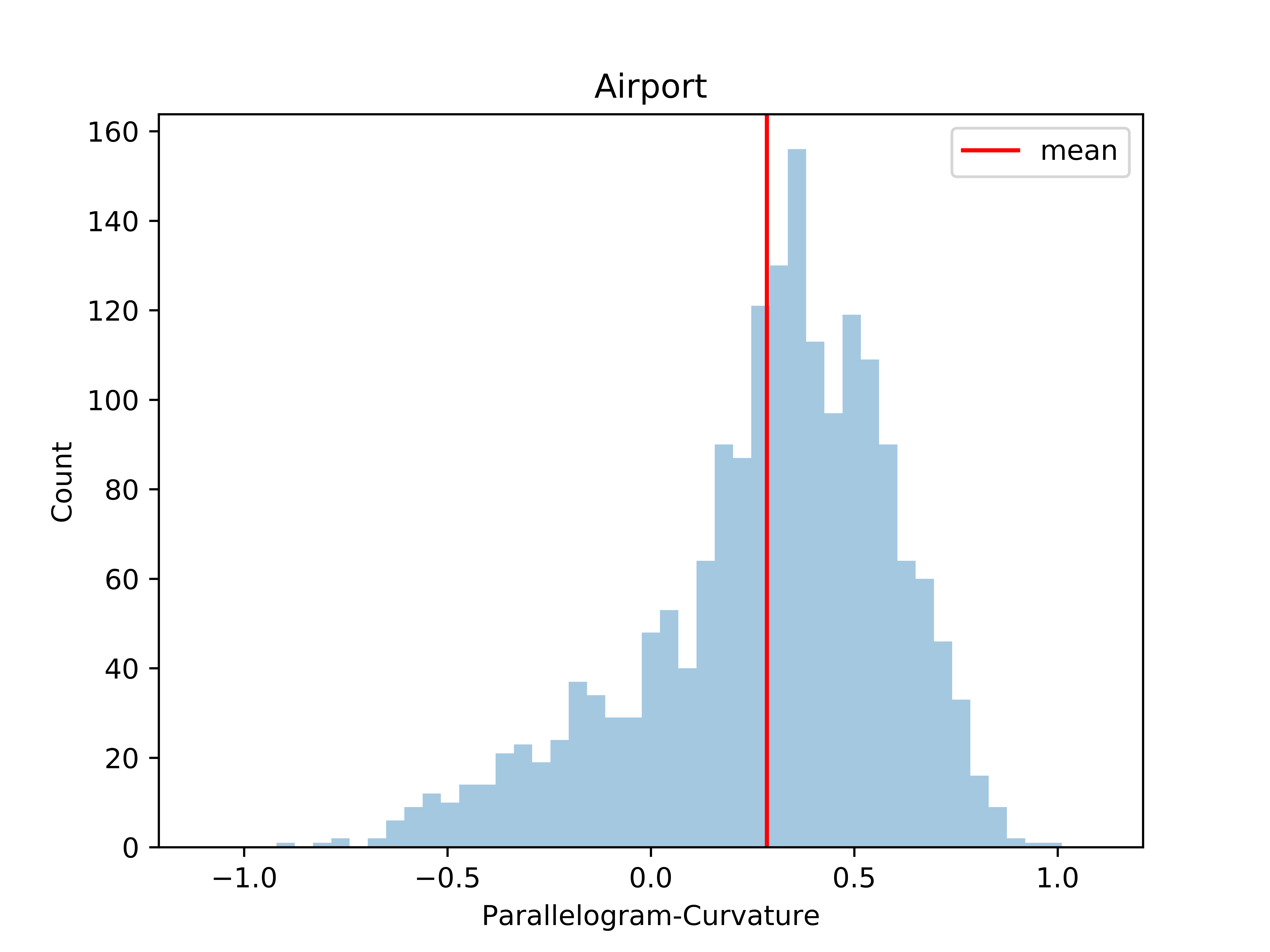} 
  \end{minipage}
  \vspace{-0.5cm}
  \caption{\label{fig:curvs} Histogram of Curvatures from "Deviation of Parallogram Law"}
\end{figure}

\subsection{Node Classification}

We consider the popular node classification datasets Citeseer \citep{citeseer}, Cora-ML \citep{cora} and Pubmed \citep{pubmed}. Node labels correspond to the particular subfield the published document is associated with. Dataset statistics and splitting details are deferred to the Appendix \ref{apx:exp} 
due to the lack of space. We compare against the Euclidean model \citep{gcn} and the recently proposed hyperbolic variant \citep{chami2019hyperbolic}.

\paragraph{Curvature Estimations of Datasets}

To understand how far are the real graphs of the above datasets from the Euclidean geometry, we first estimate the graph curvature of the four studied datasets using the \textbf{deviation from the Parallelogram Law} \citep{gu2018learning} as detailed in Appendix \ref{apx:curv}. 
Curvature histograms are shown in fig.~\ref{fig:curvs}. It can be noticed that the datasets are mostly non-Euclidean, thus offering a good motivation to apply our constant-curvature GCN architectures.

\paragraph{Training Details}
We trained the baseline models in the same setting as done in \citet{chami2019hyperbolic}. Namely, for GCN we use one hidden layer of size 16, dropout on the embeddings and the adjacency of rate $0.5$ as well as $L^{2}$-regularization for the weights of the first layer. We used ReLU as the non-linear activation function. 

\par For the non-Euclidean architectures, we used a combination of dropout and dropconnect for the non-Euclidean models as reported in \citet{chami2019hyperbolic}, as well as $L^{2}$-regularization for the first layer. All models have the same number of parameters and for fairness are compared in the same setting, without attention. We use one hidden layer of dimension 16. For the  product models we consider two-component spaces (e.g $\mathbb{H}^{8} \times \mathbb{S}^{8}$) and we split the embedding space into equal dimensions of size 8. We also distribute the input features equally among the components. Non-Euclidean models use the Möbius version of ReLU. Euclidean parameters use a learning rate of 0.01 for all models using ADAM. The curvatures are learned using gradient descent with a learning rate of 0.01. We show the learnt values in Appendix \ref{apx:exp}. 
We use early stopping: we first train for 2000 epochs, then we check every 200 epochs for improvement in the validation cross entropy loss; if that is not observed, we stop.
\vspace{-2mm}
\paragraph{Node classification results.} These are shown in table \ref{tab:node_class}. It can be seen that our models are competitive with the Euclidean GCN considered and outperforms \citet{chami2019hyperbolic} on Citeseer and Cora, showcasing the benefit of our proposed architecture. 

\section{Conclusion}

In this paper, we introduced a natural extension of GCNs to the stereographic models of both positive and negative curvatures in a unified manner. We show how this choice of models permits a differentiable interpolation between positive and negative curvature, allowing the curvature to be trained independent of an initial sign choice. We hope that our models will open new exciting directions into non-Euclidean graph neural networks. 

\section{Acknowledgements}

We thank prof. Thomas Hofmann, Andreas Bloch, Calin Cruceru and Ondrej Skopek for useful discussions and anonymous reviewers for suggestions.\\
Gary B\'{e}cigneul is funded by the Max Planck ETH Center for Learning Systems.
\bibliography{bibliography}
\bibliographystyle{icml2020}
\newpage
\appendix
\begin{appendices}

\section{GCN - A Brief Survey}\label{apx:gcn}

\subsection{Convolutional Neural Networks on Graphs}
One of the pioneering works on neural networks in non-Euclidean domains was done by \citet{graphconv}. Their idea was to extend convolutional neural networks for graphs using tools from \textbf{graph signal processing}.
\vspace{1mm}

\noindent
Given a graph $G=(V,\Af)$, where $\Af$ is the adjacency matrix and $V$ is a set of nodes, we define a signal on the nodes of a graph to be a vector $\x \in \R^{n}$  where $x_{i}$ is the value of the signal at node $i$. Consider the diagonalization of the symmetrized graph Laplacian $\tilde{\Lf} = \Uf \pmb{\Lambda} \Uf^{T}$, where $\pmb{\Lambda} = \text{diag}(\lambda_{1}, \dots, \lambda_{n})$. The eigenbasis $\Uf$ allows to define the graph Fourier transform $\hat{\x}=\Uf^{T} \x \in \R^{n}$. \\
In order to define a convolution for graphs, we shift from the vertex domain to the \textbf{Fourier domain}:
$$\x \star_{G} \y = \Uf\left(\left(\Uf^{T}\x\right) \odot \left(\Uf^{T}\y\right)\right)$$
Note that $\hat{\x}=\Uf^{T}\x$ and $\hat{\y} = \Uf^{T}\y$ are the graph Fourier representations and we use the element-wise product $\odot$ since convolutions become products in the Fourier domain. The left multiplication with $\Uf$ maps the Fourier representation back to a vertex representation.\\
As a consequence, a signal $\x$ filtered by $g_{\theta}$ becomes $\y = \Uf g_{\pmb{\theta}}(\pmb{\Lambda})\Uf^{T}\x$
where $g_{\pmb{\theta}} = \text{diag}(\pmb{\theta})$ with $\pmb{\theta} \in \R^{n}$ constitutes a filter with all parameters free to vary. In order to avoid the resulting complexity $\mathcal{O}(n)$, \citet{graphconv} replace the non-parametric filter by a polynomial filter:
$$g_{\pmb{\theta}}(\pmb{\Lambda}) = \sum_{k=0}^{K-1}\theta_{k}\pmb{\Lambda}^{k}$$
where $\pmb{\theta} \in \R^{K}$ resulting in a complexity $\mathcal{O}(K)$. Filtering a signal is unfortunately still expensive since $\y = \Uf g_{\pmb{\theta}}(\pmb{\Lambda})\Uf^{T}\x$ requires the multiplication with the Fourier basis $\Uf$, thus resulting in complexity $\mathcal{O}(n^{2})$. As a consequence, \citet{graphconv} circumvent this problem by choosing the \textbf{Chebyshev polynomials} $T_{k}$ as a polynomial basis, $g_{\pmb{\theta}}(\pmb{\Lambda}) = \sum_{k=0}^{K}\theta_{k}T_{k}(\tilde{\pmb{\Lambda}})$ where $\tilde{\pmb{\Lambda}} = \frac{2\pmb{\Lambda}}{\lambda_{max}} - \If$. As a consequence, the filter operation becomes $\y = \sum_{k=0}^{K}\theta_{k}T_{k}(\hat{\Lf})\x$ where $\hat{\Lf} = \frac{2\Lf}{\lambda_{max}}-\If$. This led to a $K$-\textbf{localized} filter since it depended on the $K$-th power of the Laplacian. The \textbf{recursive} nature of these polynomials allows for an efficient filtering of complexity $\mathcal{O}(K|E|)$, thus leading to an computationally appealing definition of convolution for graphs. The model can also be built in an analogous way to CNNs, by stacking multiple convolutional layers, each layer followed by a non-linearity.

\subsection{Graph Convolutional Networks}
\citet{gcn} extended the work of \citet{graphconv} and inspired many follow-up architectures \citep{fastgcn, graphsage, ngcn, wu2019comprehensive}. The core idea of \citet{gcn} is to limit each filter to 1-hop neighbours by setting $K=1$, leading to a convolution that is linear in the Laplacian $\hat{\Lf}$:
$$g_{\pmb{\theta}} \star \x = \theta_{0}\x + \theta_{1}\hat{\Lf}\x$$
They further assume $\lambda_{max} \approx 2$, resulting in the expression
$$g_{\pmb{\theta}} \star \x  = \theta_{0}\x - \theta_{1} \Df^{-\frac{1}{2}}\Af\Df^{-\frac{1}{2}}\x$$
To additionally alleviate overfitting, \citet{gcn} constrain the parameters as $\theta_{0}=-\theta_{1}=\theta$, leading to the convolution formula
$$g_{\theta} \star \x = \theta(\If+\Df^{-\frac{1}{2}}\Af\Df^{-\frac{1}{2}})\x$$
Since $\If+\Df^{-\frac{1}{2}}\Af\Df^{-\frac{1}{2}}$ has its eigenvalues in the range $[0,2]$, they further employ a reparametrization trick to stop their model from suffering from numerical instabilities:
$$g_{\theta} \star \x = \theta\tilde{\Df}^{-\frac{1}{2}}\tilde{\Af}\tilde{\Df}^{-\frac{1}{2}}\x$$ where $\tilde{\Af} = \Af + \If$ and $\tilde{D}_{ii} = \sum_{j=1}^{n}\tilde{A}_{ij}$.
\vspace{1mm}

\noindent
Rewriting the architecture for multiple features $\Xf \in \R^{n \times d_{1}}$ and parameters $\pmb{\Theta} \in \R^{d_{1} \times d_{2}}$ instead of $\x \in \R^{n}$ and $\theta \in \R$, gives
$$\Zf = \tilde{\Df}^{-\frac{1}{2}}\tilde{\Af}\tilde{\Df}^{-\frac{1}{2}}\Xf\pmb{\Theta} \in \R^{n \times d_{2}}$$
The final model consists of multiple stacks of convolutions, interleaved by a non-linearity $\sigma$:
$$\Hf^{(k+1)} = \sigma\left(\tilde{\Df}^{-\frac{1}{2}}\tilde{\Af}\tilde{\Df}^{-\frac{1}{2}}\Hf^{(k)}\pmb{\Theta}^{(k)}\right)$$
where $\Hf^{(0)} = \Xf$ and $\pmb{\Theta} \in \R^{n \times d_{k}}$.
\vspace{1mm}

\noindent
The final output $\Hf^{(K)} \in \R^{n \times d_{K}}$ represents the embedding of each node $i$ as $\h_{i} = \Hf_{i\bullet} \in \R^{d_{K}}$ and can be used to perform node classification:
$$\hat{\Yf} = \text{softmax}\left(\tilde{\Df}^{-\frac{1}{2}}\tilde{\Af}\tilde{\Df}^{-\frac{1}{2}}\Hf^{(K)}\Wf\right)  \in \R^{n \times L}$$
where $\Wf \in \R^{d_{K} \times L}$, with $L$ denoting the number of classes.
\vspace{1mm}

\noindent
In order to illustrate how embeddings of neighbouring nodes interact, it is easier to view the architecture on the \textbf{node level}. Denote by $\mathcal{N}(i)$ the neighbours of node $i$. One can write the embedding of node $i$ at layer $k+1$ as follows:
$$\pmb{h}_{i}^{(k+1)} = \sigma \left(\pmb{\Theta}^{(l)}\sum_{j \in \mathcal{N}_{i} \cup \{i\}}\frac{\pmb{h}_{j}^{(k)}}{\sqrt{|\mathcal{N}(j)||\mathcal{N}(i)|}}\right)$$ 
Notice that there is no dependence of the weight matrices $\pmb{\Theta}^{(l)}$ on the node $i$, in fact the same \textbf{parameters are shared} across all nodes. \\
In order to obtain the new embedding $\h_{i}^{(k+1)}$ of node $i$, we average over all embeddings of the neighbouring nodes. This \textbf{Message Passing} mechanism gives rise to a very broad class of graph neural networks \citep{gcn,gat,graphsage, quant, fastgcn, ppnp, ngcn}.

To be more precise, GCN falls into the more general category of models of the form
\begin{align*}
     \pmb{z}_{i}^{(k+1)} &= \text{AGGREGATE}^{(k)}(\{\pmb{h}_{j}^{(k)}: j \in \mathcal{N}(i)\}; \pmb{W}^{(k)}) \\
     \pmb{h}_{i}^{(k+1)} &= \text{COMBINE}^{(k)}(\pmb{h}_{i}^{(k)}, \pmb{z}_{i}^{(k+1)}; \pmb{V}^{(k)})
\end{align*}
Models of the above form are deemed \textbf{Message Passing Graph Neural Networks} and many choices for \text{AGGREGATE} and \text{COMBINE} have been suggested in the literature \cite{gcn, graphsage, fastgcn}.

\section{Graph Embeddings in Non-Euclidean Geometries}\label{apx:graph_emb}

In this section we will motivate non-Euclidean embeddings of graphs and show why the underlying geometry of the embedding space can be very beneficial for its representation. We first introduce a measure of how well a graph is represented by some embedding $f: V \xrightarrow{} \mathcal{X}, \ i \mapsto f(i)$:
\begin{definition}
  Given an embedding $f: V \xrightarrow{} \mathcal{X}, \ i \mapsto f(i)$ of a graph $G=(V,\pmb{A})$ in some metric space $\mathcal{X}$, we call $f$ a \textbf{D-embedding} for $D \geq 1$ if there exists $r>0$ such that
  $$r\cdot d_{G}(i, j) \leq d_{\mathcal{X}}(f(i), f(j)) \leq D\cdot r\cdot d_{G}(i,j)$$
  The infimum over all such $D$ is called the \textbf{distortion} of $f$.
\end{definition}
The $r$ in the definition of distortion allows for scaling of all distances. Note further that a perfect embedding is achieved when $D=1$.
\subsection{Trees and Hyperbolic Space}
Trees are graphs that do not allow for a cycle, in other words there is no node $i \in V$ for which there exists a path starting from $i$ and returning back to $i$ without passing through any node twice. The number of nodes increases \textbf{exponentially} with the depth of the tree. This is a property that prohibits Euclidean space from representing a tree accurately. What intuitively happens is that "we run out of space". Consider the trees depicted in fig. \ref{fig:tree_red_green}. Here the yellow nodes represent the roots of each tree. Notice how rapidly we struggle to find appropriate places for nodes in the embedding space because their number increases just too fast. \\

Moreover, graph distances get \textbf{extremely distorted} towards the leaves of the tree. Take for instance the green and the pink node. In graph distance they are very far apart as one has to travel up all the way to the root node and back to the border. In Euclidean space however, they are very closely embedded in a $L_{2}$-sense, hence introducing a big error in the embedding.
\vspace{1mm}

\noindent
This problem can be very nicely illustrated by the following theorem:
\begin{theorem}
Consider the tree $K_{1,3}$ (also called 3-star) consisting of a root node with three children. Then every embedding $\{\x_{1}, \dots, \x_{4}\}$ with $\x_{i} \in \R^{k}$ achieves at least distortion $\frac{2}{\sqrt{3}}$ for any $k \in \mathbb{N}$.
\label{thm5.2}
\end{theorem}{}
\begin{proof}
We will prove this statement by using a special case of the so called \textbf{Poincaré-type inequalities} \citep{MetricEmb2}:
\vspace{1mm}

\noindent
For any $b_{1}, \dots, b_{k} \in \R$ with $\sum_{i=1}^{k}b_{i} = 0$ and points $\x_{1}, \dots, \x_{k} \in \R^{n}$ it holds that
$$\sum_{i,j=1}^{k}b_{i}b_{j}||\x_{i}-\x_{j}||^{2} \leq 0$$
Consider now an embedding of the tree $\x_{1}, \dots, \x_{4}$ where $\x_{1}$ represents the root node. Choosing $b_{1} = -3$ and $b_{i}=1$ for $i \not=1$ leads to the inequality
\begin{equation*}
\begin{split}
    & ||\x_{2}-\x_{3}||^{2}+||\x_{2}-\x_{4}||^{2}+||\x_{3}-\x_{4}||^{2} \\ & \leq 3||\x_{1}-\x_{2}||^{2}+3||\x_{1}-\x_{3}||^{2}+3||\x_{1}-\x_{4}||^{2}
\end{split}
\end{equation*}

The left-hand side of this inequality in terms of the graph distance is $$d_{G}(2,3)^{2}+d_{G}(2,4)^{2}+d_{G}(3,4)^{2}= 2^{2}+2^{2}+2^{2}=12$$ and the right-hand side is $$3\cdot d_{G}(1,2)^{2}+3\cdot d_{G}(1,3)^{2}+3\cdot d_{G}(1,4)^{2} = 3 + 3 +3 = 9$$
As a result, we always have that the distortion is lower-bounded by $\sqrt{\frac{12}{9}} = \frac{2}{\sqrt{3}}$
\end{proof}

Euclidean space thus already fails to capture the geometric structure of a very simple tree. This problem can be remedied by replacing the underlying Euclidean space by hyperbolic space.
\vspace{1mm}

\noindent 
Consider again the distance function in the Poincaré model, for simplicity with $c=1$:
$$d_{\mathbb{P}}(\x,\y) = \cosh^{-1}\left({1+2\frac{||\x-\y||^{2}}{(1-||\x||^{2})(1-||\y||^{2})}}\right)$$
Assume that the tree is embedded in the same way as in fig.\ref{fig:tree_red_green}, just restricted to lie in the disk of radius $\frac{1}{\sqrt{c}}=1$. 
Notice that as soon as points move closer to the boundary ($||\x|| \xrightarrow{} 1$), the fraction explodes and the resulting distance goes to infinity. As a result, the further you move points to the border, the more their distance increases, exactly as nodes on different branches are more distant to each other the further down they are in the tree.
We can express this advantage in geometry in terms of distortion:
\begin{theorem}
There exists an embedding $\x_{1}, \dots, \x_{4} \in \Pb^{2}$ for $K_{1, 3}$ achieving distortion $1 + \epsilon$ for $\epsilon > 0$ arbitrary small.
\label{thm5.3}
\end{theorem}
\vspace{-6mm}
\begin{proof}
Since the Poincaré distance is invariant under Möbius translations we can again assume that $x_{1}=0$. Let us place the other nodes on a circle of radius $r$. Their distance to the root is now given as 
\begin{equation}
    \begin{split}
        d_{\mathbb{P}}(\x_{i}, 0) &= \cosh^{-1}\left(1+2\frac{||\x_{i}||^{2}}{1-||\x_{i}||^{2}}\right) \\ &= \cosh^{-1}\left(1+2\frac{r^2}{1-r^{2}}\right)
    \end{split}
\end{equation}

By invariance of the distance under centered rotations we can assume w.l.o.g. $\x_{2} = \left(r, 0\right)$.
We further embed 
\begin{itemize}
    \item $\x_{3} = \left(r\cos(\frac{2}{3}\pi),r\sin(\frac{2}{3}\pi)\right) = \left(-\frac{r}{2}, \frac{\sqrt{3}}{2} r\right)$  
    \item $x_{4} = \left(r\cos(\frac{4}{3}\pi),r\sin(\frac{4}{3}\pi)\right)=\left(-\frac{r}{2}, \frac{\sqrt{3}}{2}r \right)$.
\end{itemize}
This procedure gives:
\begin{equation}
    \begin{split}
        d_{\mathbb{P}}(\x_{2}, \x_{3}) &= \cosh^{-1}\left(1 + 2\frac{||\left(\frac{3r}{2}, \frac{-\sqrt{3}}{2}r\right)||^2}{(1-r^{2})^{2}}\right) \\ &= \cosh^{-1}\left(1 + 2\frac{3r^{2}}{(1-r^{2})^{2}}\right)
    \end{split}
\end{equation}

If we let the points now move to the border of the disk we observe that
$$\frac{\cosh^{-1}\left(1 + 2\frac{3r^{2}}{(1-r^{2})^{2}}\right)}{\cosh^{-1}\left(1+2\frac{r^2}{1-r^{2}}\right)} \xrightarrow{r \xrightarrow{}1} 2$$
But this means in turn that we can achieve distortion $1+\epsilon$ for $\epsilon > 0$ arbitrary small.
\end{proof}
The tree-likeliness of hyperbolic space has been investigated on a deeper mathematical level. \citet{hyptree2} show that a similar statement as in theorem \ref{thm5.3} holds for all weighted or unweighted trees. The interested reader is referred to \citet{hyptree, hyptree2} for a more in-depth treatment of the subject.

Cycles are the subclasses of graphs that are not allowed in a tree. They consist of one path that reconnects the first and the last node: $(v_{1}, \dots, v_{n}, v_{1})$. Again there is a very simple example of a cycle, hinting at the limits Euclidean space incurs when trying to preserve the geometry of these objects \citep{MetricEmb}.
\begin{theorem}
Consider the cycle $G=(V,\pmb{A})$ of length four. Then any embedding $(\x_{1}, \dots, \x_{4})$ where $\x_{i} \in \R^{k}$ achieves at least distortion $\sqrt{2}$.
\end{theorem}
\begin{proof}
Denote by $\x_{1}, \x_{2}, \x_{3}, \x_{4}$ the embeddings in Euclidean space where $\x_{1}, \x_{3}$ and $\x_{2}, \x_{4}$ are the pairs without an edge. Again using the Poincaré-type inequality with $b_{1}=b_{3}=1$ and $b_{2}=b_{4}=-1$  leads to the \textbf{short diagonal theorem} \citep{MetricEmb}:
\begin{equation}
    \begin{split}
        & ||\x_{1}-\x_{3}||^{2}+||\x_{2}-\x_{4}||^{2} \\ & \leq  ||\x_{1}-\x_{2}||^{2}+||\x_{2}-\x_{3}||^{2}+||\x_{3}-\x_{4}||^{2}+||\x_{4}-\x_{1}||^{2}
    \end{split}
\end{equation}

The left hand side of this inequality in terms of the graph distance is $d_{G}(1,3)^{2}+d_{G}(2,4)^{2} = 2^{2}+2^{2} = 8$ and the right hand side is $1^{2}+1^{2}+1^{2}+1^{2}=4$. \\
Therefore any embedding has to shorten one diagonal by at least a factor $\sqrt{2}$.
\end{proof}{}
It turns out that in spherical space, this problem can be solved perfectly in one dimension for any cycle.
\begin{theorem}
Given a cycle $G = (V,\pmb{A})$ of length $n$, there exists an embedding $\{\x_{1}, \dots, \x_{n}\}$ achieving distortion 1.    
\begin{proof}
We model the one dimension spherical space as the circle $\mathbb{S}^{1}$. Placing the points at angles $\frac{2\pi i}{n}$ and using the arclength on the circle as the distance measure leads to an embedding of distortion $1$ as all pairwise distances are perfectly preserved.
\end{proof}{}
\end{theorem}{}
Notice that we could also use the exact same embedding in the two dimensional stereographic projection model with $c=1$ and we would also obtain distortion 1. The difference to the Poincaré disk is that spherical space is finite and the border does not correspond to infinitely distant points. We therefore have no $\epsilon$ since we do not have to pass to a limit.

\section{Spherical Space and its Gyrostructure}\label{apx:spherical_gyro}
Contrarily to hyperbolic geometry, \textbf{spherical geometry} is not only in violation with the fifth postulate of Euclid but also with the first. Notice that, shortest paths are not unique as for antipodal (oppositely situated) points, we have infinitely many geodesics connecting the two. Hence the first axiom does not hold. Notice that the third postulate holds as we stated it but it is sometimes also phrased as: "A circle of any center and radius can be constructed". Due to the finiteness of space we cannot have arbitrary large circles and hence phrased that way, the third postulate would not hold.  \\
Finally, we replace the fifth postulate by:
\begin{itemize}
    \item Given any straight line $l$ and a point $p$ not on $l$, there exists no shortest line $g$ passing through $p$  but never intersecting  $l$.
\end{itemize}
The standard model of spherical geometry suffers from the fact that its underlying space depends directly on the curvature $\kappa$ through a hard constraint $-\kappa\langle \x,\x \rangle=1$ (similarly to the Lorentz model of hyperbolic geometry). Indeed, when $\kappa\to 0$, the domain diverges to a sphere of infinite radius which is not well defined.\\
For hyperbolic geometry, we could circumvent the problem by moving to the Poincaré model, which is the stereographic projection of the Lorentz model, relaxing the hard constraint to an inequality. A similar solution is also possible for the spherical model.
\subsection{Stereographic Projection Model of the Sphere}
In the following we construct a model in perfect duality to the construction of the Poincaré model.\\
Fix the south pole $\z = (\pmb{0}, -\frac{1}{\sqrt{\kappa}})$ of the sphere of curvature $\kappa >0$, \textit{i.e.} of radius $R:=\kappa^{-\frac{1}{2}}$. The \textbf{stereographic projection} is the map: $$\Phi:\mathbb{S}^n_R \xrightarrow{} \mathbb{R}^{n}, \x' \mapsto \x=\frac{1}{1+\sqrt{\kappa}\x'_{n+1}}\x'_{1:n}$$ 
with the inverse given by
$$\Phi^{-1}:\mathbb{R}^{n} \xrightarrow{}\mathbb{S}^{n}_{R}, \x \mapsto \x' = \left(\lambda^\kappa_{\x}\x, \frac{1}{\sqrt{\kappa}}(\lambda^\kappa_\x-1)\right)$$
where we define $\lambda_\x^\kappa = \frac{2}{1+\kappa||\x||^{2}}$. \\ Again we take the image of the sphere $\mathbb{S}^{n}_{R}$ under the extended projection $\Phi((0, \dots, 0, -\frac{1}{\kappa})) = \pmb{0}$, leading to the stereographic model of the sphere. The metric tensor transforms as:
$$g^{\kappa}_{ij} = (\lambda^\kappa_{\x})^{2}\delta_{ij}$$
\subsection{Gyrovector Space in the Stereographic Model}

\subsubsection{Proof of Theorem~\ref{thm:0}}\label{appendix:thm0}

Using Cauchy-Schwarz's inequality, we have $A:=1-2\kappa \x^{T}\y+\kappa^{2}||\x||^{2}||\y||^{2} \geq 1-2|\kappa| \Vert\x\Vert\Vert\y\Vert+\kappa^{2}||\x||^{2}||\y||^{2} = (1-|\kappa|\Vert\x\Vert\Vert\y\Vert)^2\geq 0$. Since equality in the Cauchy-Schwarz inequality is only reached for colinear vectors, we have that $A=0$ is equivalent to $\kappa > 0$ and $\x= \y/(\kappa\Vert\y\Vert^2)$. 

\subsubsection{Proof of Theorem~\ref{thm:gyropos}}\label{appendix:thm1}

Let us start by proving that for $\x \in \mathbb{R}^{n}$ and $\mathbf{v} \in T_{\x}\mathbb{R}^{n}$ the \textbf{exponential map} is given by 
\begin{equation}
\exp^\kappa_{\x}(\mathbf{v}) = \frac{\lambda^\kappa_{\x}\left(\alpha-\sqrt{\kappa}\x^{T}\frac{\mathbf{v}}{||\mathbf{v}||}\beta\right)\x + \frac{1}{\sqrt{\kappa}}\beta\frac{\mathbf{v}}{||\mathbf{v}||}}{1 + (\lambda^\kappa_{\x}-1)\alpha-\sqrt{\kappa}\lambda^\kappa_{\x}\x^{T}\frac{\mathbf{v}}{||\mathbf{v}||}\beta}
\end{equation}
where $\alpha = \cos_\kappa{\left(\lambda^\kappa_{\x}||\mathbf{v}||\right)}$ and $\beta = \sin_\kappa{\left(\lambda^\kappa_{\x}||\mathbf{v}||\right)}$

Indeed, take a unit speed geodesic $\gamma_{\x,\mathbf{v}}(t)$ starting from $\x$ with direction $\mathbf{v}$. Notice that the unit speed geodesic on the sphere starting from $\x' \in \mathbb{S}^{n-1}$ is given by $\Gamma_{\x',\mathbf{v}'}(t) = \x'\cos_\kappa(t) + \frac{1}{\sqrt{\kappa}}\sin_\kappa(t)\mathbf{v}'$. By the Egregium theorem, we know that $\Phi(\gamma_{\x,\mathbf{v}}(t))$ is again a unit speed geodesic in the sphere where $\Phi^{-1}: \x \mapsto \x' = \left(\lambda^\kappa_{\x}\x, \frac{1}{\sqrt{\kappa}}(\lambda^\kappa_{\x}-1)\right)$. Hence $\Phi(\gamma_{\x,\mathbf{v}}(t))$ is of the form of $\Gamma$ for some $\x'$ and $\mathbf{v}'$. We can determine those by 
\begin{align*}
    \x' &= \Phi^{-1}(\gamma(0)) = \Phi^{-1}(\x) = \left(\lambda^\kappa_{\x}\x, \frac{1}{\sqrt{\kappa}}(\lambda^\kappa_{\x}-1)\right) \\
    \mathbf{v}' &= \dot{\Gamma}(0) = \frac{\partial \Phi^{-1}(\y)}{\partial \y}{\gamma(0)}
    \dot{\gamma}(0)
\end{align*}
Notice that $\nabla_{\x}\lambda^\kappa_{\x} = -\kappa(\lambda^\kappa_{\x})^{2}\x$ and we thus get $$\mathbf{v}' = \begin{pmatrix}\ -2\kappa(\lambda^\kappa_{\x})^{2} \x^{T}\mathbf{v}\x+\lambda^\kappa_{\x}\mathbf{v}\\\ -\sqrt{\kappa}(\lambda^\kappa_{\x})^{2}\x^{T}\mathbf{v}\end{pmatrix}$$
We can obtain $\gamma_{\x,\mathbf{v}}$ again by inverting back by calculating $ \gamma_{\x,\mathbf{v}}(t) = \Phi(\Gamma_{\x', \mathbf{v}'}(t))$, resulting in 
\begin{equation*}
    \begin{split}
        \gamma_{\x,\mathbf{v}}(t) &= \frac{(\lambda^\kappa_{\x}\cos_\kappa(t)-\sqrt{\kappa}(\lambda^\kappa_{\x})^{2}\x^{T}\mathbf{v}\sin_\kappa{(t)})\x}{1 + (\lambda^\kappa_{\x}-1)\cos_\kappa{(t)}-\sqrt{\kappa}(\lambda^\kappa_{\x})^{2}\x^{T}\mathbf{v}\sin_\kappa{(t)}} \\ & \hspace{2mm}+  \frac{\frac{1}{\sqrt{\kappa}}\lambda^\kappa_{\x}\sin_\kappa{(t)}\mathbf{v}}{1 + (\lambda^\kappa_{\x}-1)\cos_\kappa{(t)}-\sqrt{\kappa}(\lambda^\kappa_{\x})^{2}\x^{T}\mathbf{v}\sin_\kappa{(t)}}
    \end{split}
\end{equation*}

Denoting $g_{\x}^{\kappa}(\mathbf{v},\mathbf{v}) = ||\mathbf{v}||^{2}\lambda^\kappa_{\x}$ we have that $\exp_{\x}^\kappa(\mathbf{v})=\gamma_{\x, \frac{1}{\sqrt{g_{\x}^{\kappa}(\mathbf{v},\mathbf{v})}}\mathbf{v}}\left(\sqrt{g_{\x}^{\kappa}(\mathbf{v},\mathbf{v})}\right)$ which concludes the proof of the above formula of the exponential map. One then notices that it can be re-written in terms of the $\kappa$-addition. The formula for the logarithmic map is easily checked by verifying that it is indeed the inverse of the exponential map. Finally, the distance formula is obtained via the well-known identity $d_\kappa(\x,\y)=\Vert\log^\kappa_\x(\y)\Vert_\x$ where $\Vert\mathbf{v}\Vert_\x=\sqrt{g^\kappa_\x(\mathbf{v},\mathbf{v})}$.

Note that as expected, $\exp_{\x}^{\kappa}(\mathbf{v}) \to_{\kappa\to 0} \x + \mathbf{v}$, converging to the Euclidean exponential map.
\begin{flushright}
$\square$
\end{flushright}
\subsubsection{Proof of Theorem~\ref{thm:smooth}}\label{appendix:thm2}

We first compute a Taylor development of the $\kappa$-addition w.r.t $\kappa$ around zero:
\begin{equation}
\begin{split}
\x \oplus_\kappa \y &= \frac{(1-2\kappa \x^{T}\y -\kappa||\y||^{2})\x + (1+\kappa||\x||^{2})\y}{1-2\kappa \x^{T}\y+\kappa^{2}||\x||^{2}||\y||^{2}}\\
&=  [(1-2\kappa \x^{T}\y -\kappa ||\y||^{2})\x \\ & \quad + (1+\kappa ||\x||^{2})\y][1+2\kappa \x^{T}\y+\mathcal{O}(\kappa^2)]\\
&= (1-2\kappa \x^{T}\y -\kappa ||\y||^{2})\x + (1+\kappa ||\x||^{2})\y \\ &\quad + 2\kappa \x^{T}\y[\x+\y] + \mathcal{O}(\kappa^2)\\
&= (1-\kappa ||\y||^{2})\x + (1+\kappa ||\x||^{2})\y+2\kappa (\x^{T}\y) \y \\&\quad  +  \mathcal{O}(\kappa^2)\\
&= \x+\y +\kappa[\Vert\x\Vert^2 \y - \Vert\y\Vert^2 \x + 2 (\x^T\y)\y] + \mathcal{O}(\kappa^2).
\end{split} 
\end{equation}
We then notice that using the Taylor of $\Vert\cdot\Vert_2$, given by $\Vert \x+\mathbf{v}\Vert_2 = \Vert\x\Vert_2+\langle\x,\mathbf{v}\rangle+\mathcal{O}(\Vert\mathbf{v}\Vert_2^2)$ for $\mathbf{v}\to\mathbf{0}$, we get
\begin{equation}
    \begin{split}
        \Vert\x \oplus_\kappa \y\Vert
&= \Vert\x+\y\Vert +\kappa\langle\Vert\x\Vert^2 \y - \Vert\y\Vert^2 \x \\ & \quad + 2 (\x^T\y)\y,\x+\y\rangle + \mathcal{O}(\kappa^2)\\
&= \Vert\x+\y\Vert +\kappa  (\x^T\y)\Vert\x+\y\Vert^2 + \mathcal{O}(\kappa^2).
    \end{split}
\end{equation}

Finally Taylor developments of $\tan_\kappa(\vert\kappa\vert^{\frac{1}{2}} u)$ and $\vert\kappa\vert^{-\frac{1}{2}}\tan_\kappa^{-1}(u)$ w.r.t $\kappa$ around $0$ for fixed $u$ yield
$\mathrm{For}\ \kappa\to 0^+$
\begin{equation}
\begin{split}
\tan_\kappa(\vert\kappa\vert^{\frac{1}{2}} u)&=\kappa^{-\frac{1}{2}}\tan(\kappa^{\frac{1}{2}} u)\\
&=\kappa^{-\frac{1}{2}}(\kappa^{\frac{1}{2}} u+ \kappa^{\frac{3}{2}}u^3/3 \mathcal{O}(\kappa^{\frac{5}{2}})\\
&= u + \kappa u^3/3 + \mathcal{O}(\kappa^2).
\end{split}
\end{equation}
$\mathrm{For}\ \kappa\to 0^-,$
\begin{equation}
\begin{split}
\tan_\kappa(\vert\kappa\vert^{\frac{1}{2}} u)&= (-\kappa)^{-\frac{1}{2}}\tanh((-\kappa)^{\frac{1}{2}} u)\\
&=(-\kappa)^{-\frac{1}{2}}((-\kappa)^{\frac{1}{2}} u- (-\kappa)^{\frac{3}{2}}u^3/3 + \mathcal{O}(\kappa^{\frac{5}{2}})\\
&= u +\kappa u^3/3 + \mathcal{O}(\kappa^2).
\end{split}
\end{equation}
The left and right derivatives match, hence even though $\kappa\mapsto \vert\kappa\vert^{\frac{1}{2}}$ is not differentiable at $\kappa=0$, the function $\kappa\mapsto\tan_\kappa(\vert\kappa\vert^{\frac{1}{2}} u)$ is. A similar analysis yields the same conclusion for $\kappa\mapsto\vert\kappa\vert^{-\frac{1}{2}}\tan_\kappa^{-1}(u)$ yielding 
\begin{equation}
\mathrm{For}\ \kappa\to 0,\quad \vert\kappa\vert^{-\frac{1}{2}}\tan_\kappa^{-1}(u) = u - \kappa u^3/3 +\mathcal{O}(\kappa^2).
\end{equation}
Since a composition of differentiable functions is differentiable, we consequently obtain that $\otimes_\kappa$, $\exp^\kappa$, $\log^\kappa$ and $d_\kappa$ are differentiable functions of $\kappa$, under the assumptions on $\x,\y,\mathbf{v}$ stated in Theorem~
3. Finally, the Taylor development of $d_\kappa$ follows by composition of Taylor developments:
\begin{equation*}
\begin{split}
d_\kappa(\x,\y)&=2\Vert\kappa\Vert^{-\frac{1}{2}}\tan_\kappa^{-1}( \Vert(-\x) \oplus_\kappa \y\Vert)\\
&= 2(\Vert\x-\y\Vert +\kappa  ((-\x)^T\y)\Vert\x-\y\Vert^2)\Big(1- \\ & \quad (\kappa/3)(\Vert\x-\y\Vert +\mathcal{O}(\kappa))^2\Big)+\mathcal{O}(\kappa^2)\\
&= 2(\Vert\x-\y\Vert +\kappa  ((-\x)^T\y)\Vert\x-\y\Vert^2)\Big(1 \\ & \quad -(\kappa/3)\Vert\x-\y\Vert^2\Big)+\mathcal{O}(\kappa^2)\\
&=2\Vert\x-\y\Vert -2\kappa \left( (\x^T\y)\Vert\x-\y\Vert^2+\Vert\x-\y\Vert^3/3\right)\\ & \quad +\mathcal{O}(\kappa^2).
\end{split}
\end{equation*}
\begin{flushright}
$\square$
\end{flushright}
\subsubsection{Proof of Theorem~\ref{thm:scalar-assoc}}\label{appendix:thm3}

If $\Af=\If_n$ then for all $i$ we have $\sum_j A_{ij}=1$, hence 
\begin{align}
(\If_n\boxtimes\Xf)_{i\bullet} &= \frac{1}{2}\otimes_\kappa\left(\sum_j \frac{\delta_{ij} \lambda^\kappa_{\x_j}}{\sum_k \delta_{ik}(\lambda^\kappa_{\x_k}-1)}\x_j \right)\\
&=\frac{1}{2}\otimes_\kappa\left( \frac{ \lambda^\kappa_{\x_i}}{ (\lambda^\kappa_{\x_i}-1)}\x_i \right)\\
&=\frac{1}{2}\otimes_\kappa\left( 2\otimes_\kappa\x_i \right)\\
&= \x_i\\
&=(\Xf)_{i\bullet}.
\end{align}
For associativity, we first note that the gyromidpoint is unchanged by a scalar rescaling of $\Af$. The property then follows by scalar associativity of the $\kappa$-scaling.
\begin{flushright}
$\square$
\end{flushright}

\subsubsection{Proof of Theorem~\ref{thm:intrinsic}}\label{appendix:thm4}

It is proved in \citet{ungar2005analytic} that the gyromidpoint commutes with isometries. The exact same proof holds for positive curvature, with the same algebraic manipulations. Moreover, when the matrix $\Af$ is right-stochastic, for each row, the sum over columns gives 1, hence our operation $\boxtimes_\kappa$ reduces to a gyromidpoint. As a consequence, our $\boxtimes_\kappa$ commutes with isometries in this case. Since isometries preserve distance, we have proved the theorem.
\begin{flushright}
$\square$
\end{flushright}

\subsubsection{Proof of Theorem~\ref{thm:no-intrinsic}}\label{appendix:thm5}

We begin our proof by stating the \textit{left-cancellation law}: 
\begin{equation}
\x \oplus_\kappa (- \x \oplus_\kappa \y) = \y
\end{equation}
and the following simple identity stating that orthogonal maps commute with $\kappa$-addition
\begin{equation}
\Rb \x \oplus_\kappa \Rb \y = \Rb (\x \oplus_\kappa \y), \quad \forall \Rb \in O(d)
\label{eq:gyro-kappa-add}
\end{equation}

Next, we generalize the gyro operator from M\"obius gyrovector spaces as defined in~\citet{ungar2008gyrovector}:
\begin{equation}
\text{gyr}[\uu,\vv]\w := - (\uu \oplus_\kappa \vv) \oplus_\kappa (\uu \oplus_\kappa (\vv \oplus_\kappa \w))
\label{eq:gyr_def}
\end{equation}
Note that this definition applies only for $\uu,\vv,\w \in \mathfrak{st}_\kappa^d$ for which the $\kappa$-addition is defined (see 
theorem 1). Following ~\citet{ungar2008gyrovector}, we have an alternative formulation (verifiable via computer algebra):

\begin{align}
\text{gyr}[\uu,\vv]\w = \w + 2 \frac{A\uu + B\vv}{D}.
\label{eq:gyr_alternate}
\end{align}
where the quantities $A,B,D$ have the following closed-form expressions:
\begin{align}
A = - \kappa^2 \langle \uu, \w\rangle \|\vv\|^2 - \kappa \langle \vv, \w\rangle + 2 \kappa^2 \langle \uu, \vv\rangle \cdot \langle \vv, \w\rangle,
\end{align}
\begin{align}
B = - \kappa^2 \langle \vv, \w\rangle \|\uu\|^2 + \kappa \langle \uu, \w\rangle,
\end{align}
\begin{align}
D = 1 - 2 \kappa \langle \uu, \vv\rangle + \kappa^2 \|\uu\|^2 \|\vv\|^2.
\end{align}

We then have the following relations: 
\begin{lemma} For all $\uu,\vv,\w \in \mathfrak{st}_\kappa^d$ for which the $\kappa$-addition is defined we have the following relations:
i) gyration is a linear map, ii) $\uu \oplus_\kappa \vv = \text{gyr}[\uu,\vv](\vv \oplus_\kappa \uu)$, iii) $- (\z \oplus_\kappa \uu) \oplus_\kappa (\z \oplus_\kappa \vv) = gyr[\z,\uu](-\uu \oplus_\kappa \vv)$, iv) $\|\text{gyr}[\uu,\vv]\w\| = \|\w\|$.
\label{lemma:gyr_props}
\end{lemma}

\begin{proof} The proof is similar with the one for negative curvature given in~\citet{ungar2008gyrovector}. The fact that gyration is a linear map can be easily verified from its definition. For the second part, we have
\begin{equation}
\begin{split}
- \text{gyr}[\uu,\vv](\vv \oplus_\kappa \uu) &= \text{gyr}[\uu,\vv](- (\vv \oplus_\kappa \uu))\\ &= - (\uu \oplus_\kappa \vv) \\ & \hspace{4mm} \oplus_\kappa (\uu \oplus_\kappa (\vv \oplus_\kappa (- (\vv \oplus_\kappa \uu)))) \\ &= - (\uu \oplus_\kappa \vv)
\end{split}
\end{equation}
where the first equality is a trivial consequence of the fact that gyration is a linear map, while the last equality is the consequence of left-cancellation law.

The third part follows easily from the definition of the gyration and the left-cancellation law. The fourth part can be checked using the alternate form in equation (\ref{eq:gyr_alternate}).
\end{proof}

We now follow \citet{ungar2014analytic} and describe all isometries of $\mathfrak{st}_\kappa^d$ spaces:
\begin{theorem}
Any isometry $\phi$ of $\mathfrak{st}_\kappa^d$ can be uniquely written as:
\begin{equation}
\phi(\x) = \z \oplus_\kappa \Rb \x, \quad \text{where}\ \z \in \mathfrak{st}_\kappa^d, \Rb \in O(d)
\end{equation}
\label{thm:isometries}
\end{theorem}
The proof is exactly the same as in theorems 3.19 and 3.20 of \citet{ungar2014analytic}, so we will skip it.

\par We can now prove the main theorem. Let $\phi(\x) = \z \oplus_\kappa \Rb\x$ be any isometry of $\mathfrak{st}_\kappa^d$, where $ \Rb \in O(d)$ is an orthogonal matrix. Let us denote by $\vv := \sum_{i=1}^n \alpha_i \log_\x^\kappa (\x_i)$. Then, using lemma \ref{lemma:gyr_props} 
and the formula of the log map from 
theorem 2, one obtains the following identity:

\begin{equation}
\sum_{i=1}^n \alpha_i \log_{\phi(\x)}^\kappa (\phi(\x_i)) = \frac{\lambda_{\x}^{\kappa}}{\lambda_{\phi(\x)}^{\kappa}} \text{gyr}[\z, \Rb\x]\Rb\vv
\end{equation}
and, thus, using the formula of the exp map from 
theorem 2 we obtain:
\begin{equation}
\begin{split}
\mathfrak{tg}_{\phi(\x)}(\{\phi(\x_i)\};\{\alpha_i\}) &= \phi(\x) \oplus_\kappa \text{gyr}[\z, \Rb\x]\Rb\Big(-\x \\ & \hspace{3cm} \oplus_\kappa \exp_\x^\kappa(\vv)\Big)
\end{split}
\end{equation}
Using eq. (\ref{eq:gyr_def}),
we get that $\forall \w \in \mathfrak{st}_\kappa^d$:
\begin{equation}
\text{gyr}[\z, \Rb\x]\Rb\w = - \phi(\x) \oplus_\kappa (\z \oplus_\kappa (\Rb\x \oplus_\kappa \Rb\w ))
\end{equation}
giving the desired
\begin{equation}
\begin{split}
    \mathfrak{tg}_{\phi(\x)}(\{\phi(\x_i)\};\{\alpha_i\}) &= \z \oplus_\kappa \Rb \exp_\x^\kappa(\vv) \\ &= \phi\left( \mathfrak{tg}_{\x}(\{\x_i\};\{\alpha_i\}) \right)
\end{split}
\end{equation}

\begin{flushright}
$\square$
\end{flushright}
\begin{table*}[t]
\caption{\label{tab:avg_curvs} Average curvature obtained for node classification. $\mathbb{H}$ and $\mathbb{S}$ denote hyperbolic and spherical models respectively. Curvature for Pubmed was fixed for the product model.}
\vskip 0.15in
\begin{center}
\begin{small}
\begin{sc}
\begin{tabular}{lcccr}
\toprule
Model & Citeseer & Cora & Pubmed & Airport \\
\midrule
$\mathbb{H}^{16}$ ($\kappa$-GCN) & $-1.306 \pm 0.08$ & $-1.51 \pm 0.11$ & $-1.42 \pm 0.12$ & $-0.93 \pm 0.08$ \\
$\mathbb{S}^{16}$ ($\kappa$-GCN)  & $0.81 \pm 0.05$ & $1.24 \pm 0.06$ & $0.71 \pm 0.15$ & $1.49 \pm 0.08$         \\
Prod $\kappa$-GCN  & $[1.21, -0.64] \pm [0.09, 0.07]$  & $[-0.83, -1.61] \pm [0.04, 0.06]$ & $[-1, -1]$ & $[1.23, -0.89] \pm [0.07, 0.11]$\\
\bottomrule
\end{tabular}
\end{sc}
\end{small}
\end{center}
\vskip -0.1in
\end{table*}

\section{Logits} \label{sec:apx_logits}

The final element missing in the $\kappa$-GCN is the logit layer, a necessity for any classification task. We here use the formulation of~\citet{ganea2018hyperbolicnn}.  Denote by $\{1, \dots, K\}$ the possible labels and let $\aaa_k \in \R^{d}$, $b_{k} \in \R$ and $\x \in \R^{d}$. The output of a feed forward neural network for classification tasks is usually of the form 
$$p(y=k|\x) = \text{S}(\langle \aaa_{k}, \x \rangle - b_{k})$$ where $S(\cdot)$ denotes the softmax function.
In order to generalize this expression to hyperbolic space, \citet{ganea2018hyperbolicnn} realized that the term in the softmax can be rewritten as $$\langle \aaa_{k}, \x \rangle - b_{k} = \text{sign}(\langle \aaa_{k}, \x \rangle - b_{k})||\aaa_{k}||d(\x, H_{\aaa_{k}, b_{k}})$$ where $H_{\aaa, b}=\{\x \in \R^{d}: \langle \x, \aaa \rangle -b=0\} = \{\x \in \R^{d}: \langle -\p + \x, \aaa\rangle=0\} = \tilde{H}_{\aaa, \p}$ with $\p \in \R^{d}$. \\
As a first step, they define the hyperbolic hyperplane as $$\tilde{H}^{\kappa}_{\aaa, \p} = \{\x \in \mathfrak{st}^d_\kappa \langle -\p \oplus_\kappa \x, \aaa \rangle=0\}$$ where now $\aaa \in \mathcal{T}_{\p}\mathfrak{st}^d_\kappa$ and $\p \in \mathfrak{st}^d_\kappa$. They then proceed proving the following formula:
\begin{equation}
d_{\kappa}(\x, \tilde{H}_{\aaa, \p}) = \frac{1}{\sqrt{-\kappa}}\sinh^{-1}\left(\frac{2\sqrt{-\kappa}|\langle \z, \aaa\rangle|}{(1+\kappa||\z||^2)||\aaa||}\right)
\end{equation}
where $\z = -\p \oplus_\kappa \x$
Using this equation, they were able to obtain the following expression for the logit layer:
\begin{equation}
 p(y=k|\x) = \text{S}\left(\frac{||\aaa_{k}||_{\p_k}}{\sqrt{-\kappa}}\sinh^{-1}\left(\frac{2\sqrt{-\kappa}\langle \z_{k}, \aaa_{k} \rangle}{(1+\kappa||\z_{k}||^{2})||\aaa_{k}||}\right)\right)
\end{equation}
where $\aaa_{k} \in \mathcal{T}_{0}\st \cong \R^{d},\ \x \in \st\ \mathrm{and}\ \p_{k} \in \mathfrak{st}^d_\kappa$.  Combining all these operations leads to the definition of a hyperbolic feed forward neural network. Notice that the weight matrices $\mathbf{W}$ and the normal vectors $\aaa_{k}$ live in Euclidean space and hence can be optimized by standard methods such as ADAM \citep{adam}. 

For positive curvature $\kappa > 0$ we use in our experiments the following formula for the softmax layer:
\begin{equation}
 p(y=k|\x) = \text{S}\left(\frac{||\aaa_{k}||_{\p_k}}{\sqrt{\kappa}}\sin^{-1}\left(\frac{2\sqrt{\kappa}\langle \z_{k}, \aaa_{k} \rangle}{(1+\kappa||\z_{k}||^{2})||\aaa_{k}||}\right)\right),
\end{equation}
which is inspired from the formula $i\sin(x)=\sinh(ix)$ where $i:=\sqrt{-1}$. However, we leave for future work the rigorous proof that the distance to geodesic hyperplanes in the positive curvature setting is given by this formula.
\begin{figure}[ht] 
  \label{ fig7} 
  \begin{minipage}[b]{0.5\linewidth}
    \centering
    \includegraphics[width=1\linewidth]{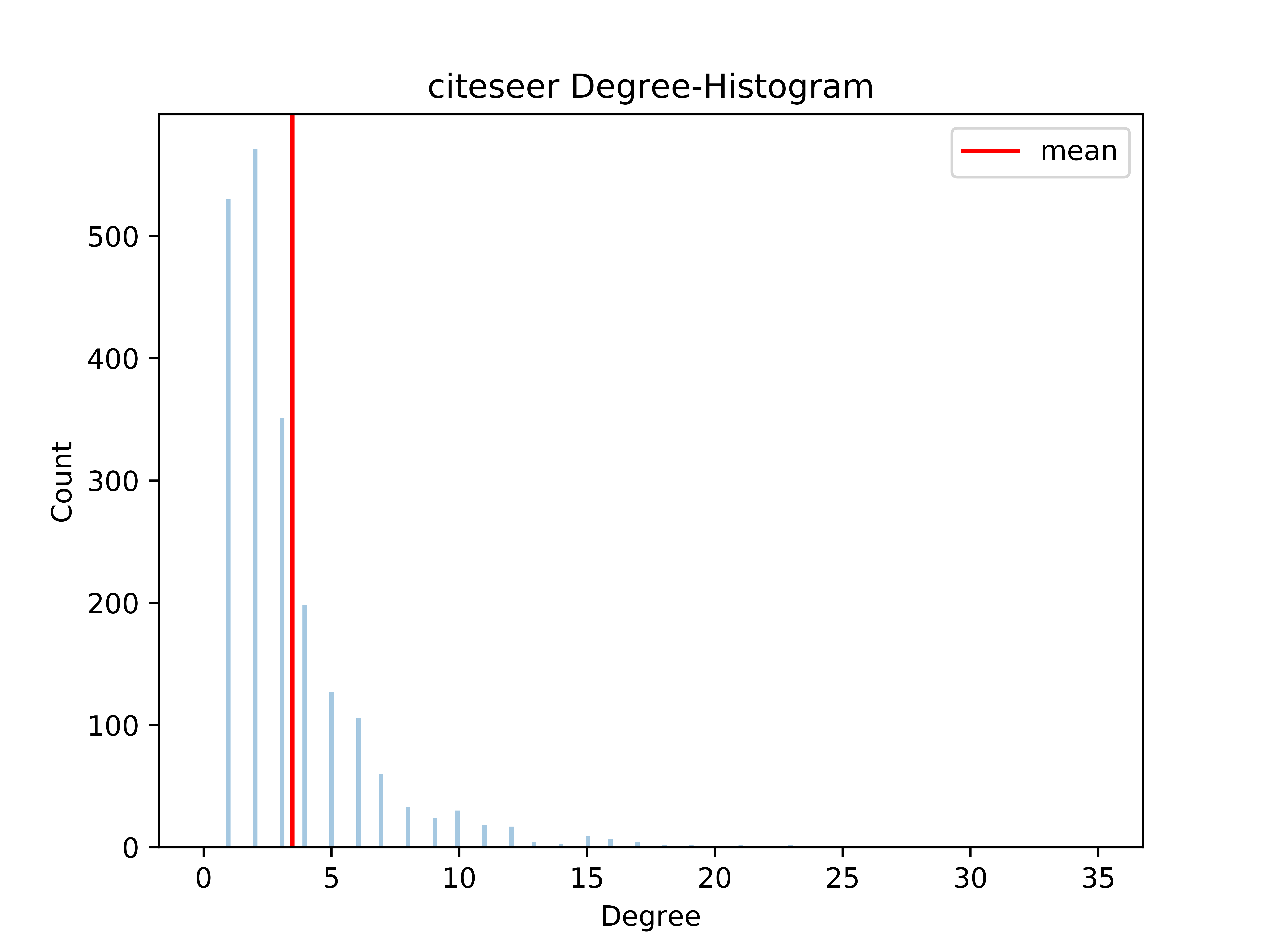} 
  \end{minipage}
  \begin{minipage}[b]{0.5\linewidth}
    \centering
    \includegraphics[width=1\linewidth]{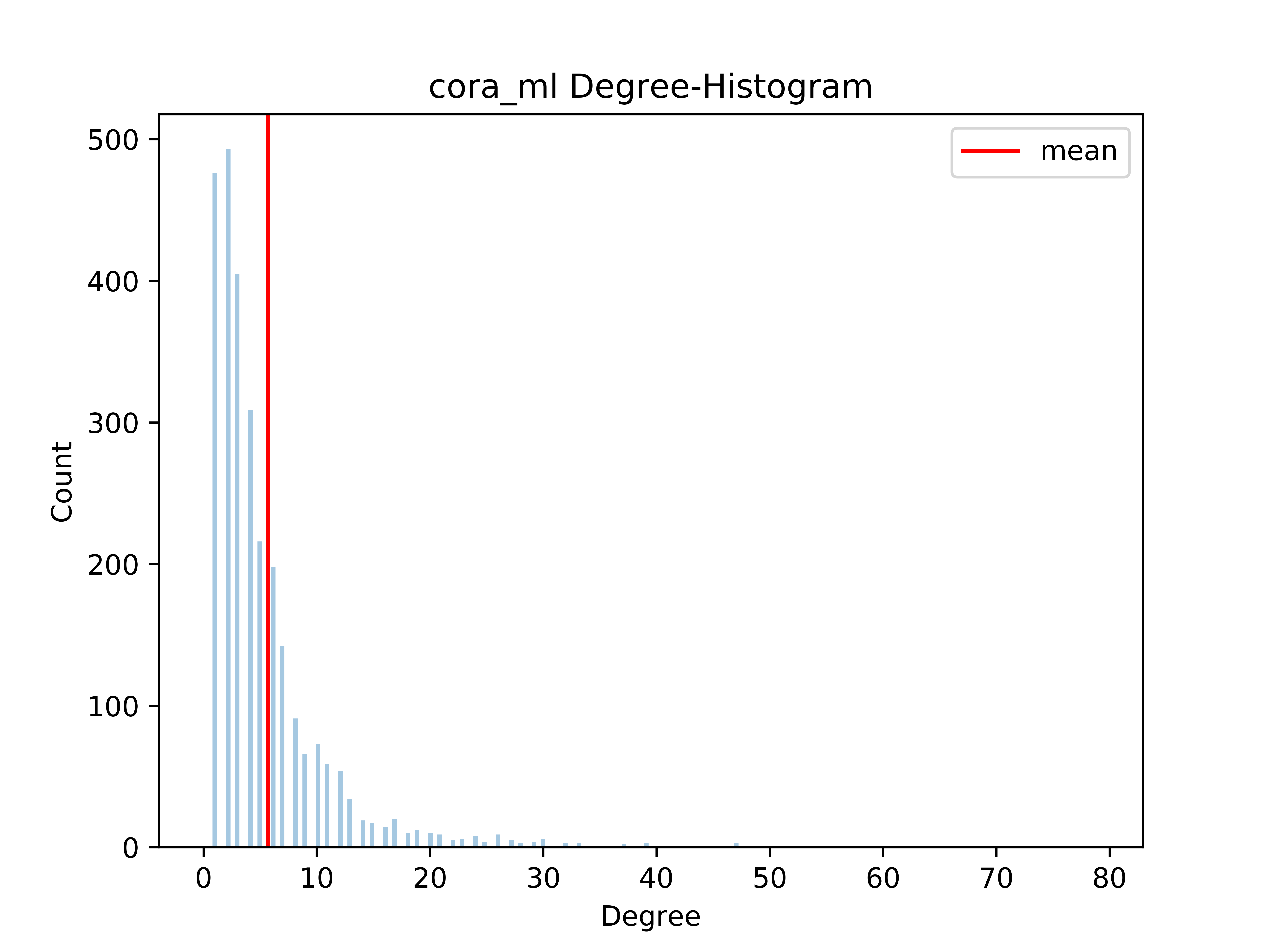} 
  \end{minipage} 
  \begin{minipage}[b]{0.5\linewidth}
    \centering
    \includegraphics[width=1\linewidth]{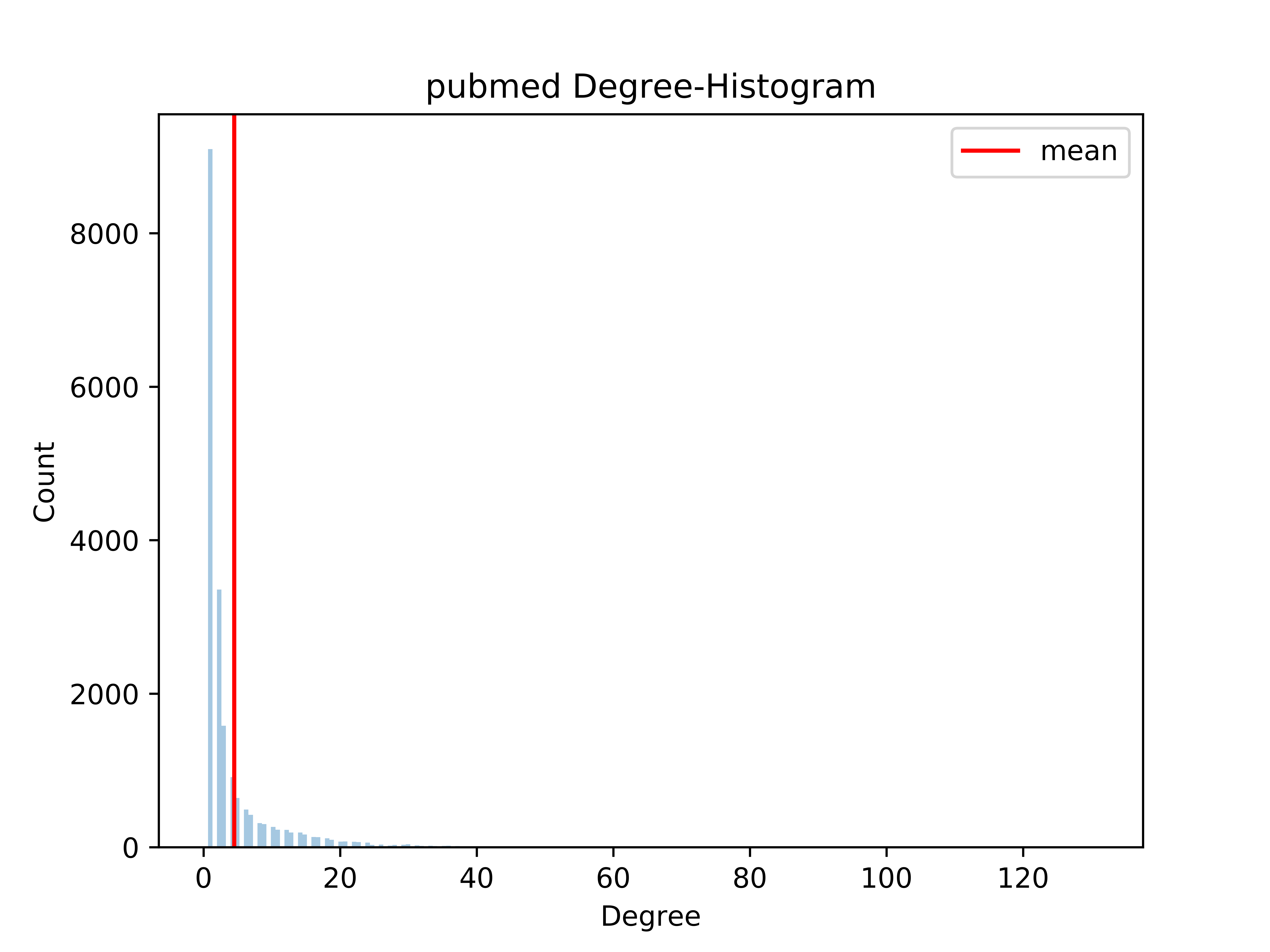} 
  \end{minipage}
  \begin{minipage}[b]{0.5\linewidth}
    \centering
    \includegraphics[width=1\linewidth]{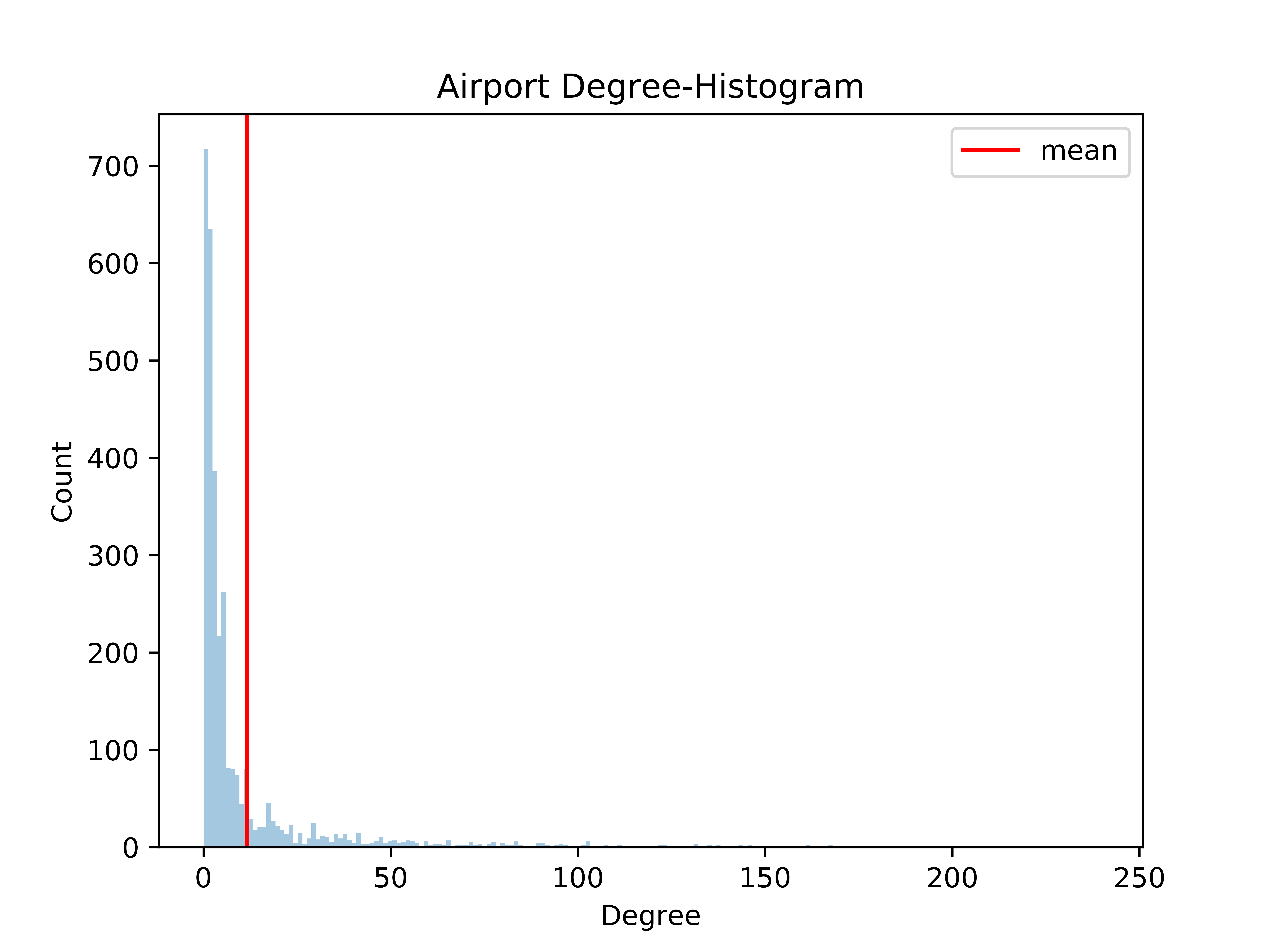} 
  \end{minipage}
  \vspace{-0.5cm}
  \caption{Histogram of node degrees}
\end{figure}

\pagebreak
\section{Additional Experiments}
To study the differentiability of the derived interpolation empirically, we embedded a small tree into $\mathfrak{st}^d_\kappa$ for $\kappa \in [-5, 5]$. To that end, we used $200$ equidistant values $\kappa$ in $[-5, 5]$ and trained a $\kappa$-GCN (with non-trainable curvature) to produce an embedding that minimizes distortion. Moreover we also trained a Euclidean GCN with the same architecture. The results are summarized in Figure 9. \\
One can observe the smooth transition between the different geometries. As expected, the distortion improves for increased hyperbolicity and worsens when the embedding space becomes more spherical. The small kink for the spherical model is due to numerical issues.

\section{More Experimental Details} \label{apx:exp}
\begin{figure}
    \centering
    \includegraphics[width=0.5\textwidth]{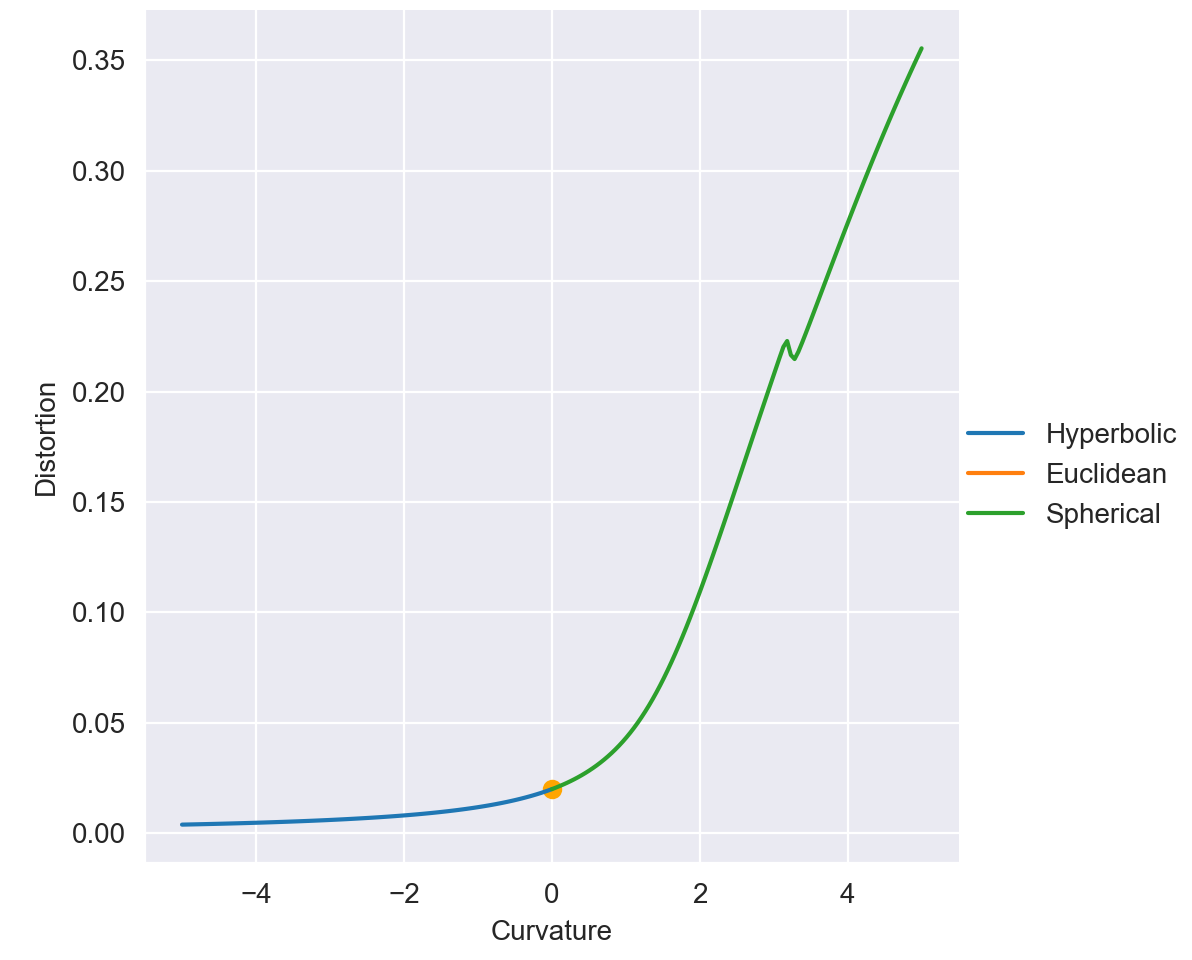}
    \vspace{-7mm}
    \caption{Distortion of $\kappa$-GCNs of varying curvature $\kappa$ trained to embed a tree}
    \label{fig:my_label}
\end{figure}
\begin{algorithm}[tb]
   \caption{Curvature Estimation}
   \label{alg:example}
\begin{algorithmic}
   \STATE {\bfseries Input:} Graph $G$, $n_{iter}$
   \FOR{$m \in G$ }
   \FOR{$i=1$ {\bfseries to} $n_{iter}$}
   \STATE $b, c \sim \mathcal{U}(\mathcal{N}(m))$ and  $a \sim \mathcal{U}(G)$ such that $m \not = a$
   \STATE $\psi(m;b,c;a) = \frac{d_{G}(a,m)}{2} + \frac{d_{G}^{2}(b,c)}{8d_{G}(a,m)}$\\ $\hspace{2.4cm} - \frac{2d_{G}^{2}(a,b) + 2d_{G}^{2}(a,c)}{4d_{G}(a,m)}$
   
   \ENDFOR
   \STATE $\psi(m) = \text{AVERAGE}(\psi(m;a, b;c))$
   \ENDFOR
   \STATE {\bfseries return:} $\kappa = \text{AVERAGE}(\psi(m))$
\end{algorithmic}
\end{algorithm}
We here present training details for the node classification experiments.

 We split the data into training, early stopping, validation and test set. Namely we first split the dataset into a known subset of size $n_{known}$ and an unknown subset consisting of the rest of the nodes. For all the graphs we use $n_{known}=1500$ except for the airport dataset, where we follow the setup of \citet{chami2019hyperbolic} and use $n_{known}=2700$. For the citation graphs, the known subset is further split into a training set consisting of $20$ data points per label, an early stopping set of size $500$ and a validation set of the remaining nodes. For airport, we separate the known subset into $2100$ training nodes, $300$ validation nodes and $300$ early stopping nodes. Notice that the whole structure of the graph and all the node features are used in an unsupervised fashion since the embedding of a training node might for instance depend on the embedding of a node from the validation set. But when calculating the loss, we only provide supervision with the training data. \\
The unknown subset serves as the test data and is only used for the final evaluation of the model. Hyperparameter-tuning is performed on the validation set. We further use early stopping in all the experiments. We stop training as soon as the early stopping cross entropy loss has not decreased in the last $n_{patience}=200$ epochs or as soon as we have reached $n_{max}=2000$ epochs. The model chosen is the one with the highest accuracy score on the early stopping set.
For the final evaluation we test the model on five different data splits and two runs each and report mean accuracy and bootstrapped confidence intervals. We use the described setup for both the Euclidean and non-Euclidean models to ensure a fair comparison.

\paragraph{Learned curvatures}.
We report the learnt curvatures for node classification in tab. \ref{tab:avg_curvs}

\section{Graph Curvature Estimation Algorithm}
\label{apx:curv}

We used the procedure outlined in Algorithm 1 to estimate the curvature of a dataset developed by~\citet{gu2018learning}.
\end{appendices}

\end{document}


\twocolumn[
\icmltitle{Appendix}]



\newpage
\appendix
\begin{appendices}

\section{GCN - A Brief Survey}\label{apx:gcn}

\subsection{Convolutional Neural Networks on Graphs}
One of the pioneering works on neural networks in non-Euclidean domains was done by \citet{graphconv}. Their idea was to extend convolutional neural networks for graphs using tools from \textbf{graph signal processing}.
\vspace{1mm}

\noindent
Given a graph $G=(V,\Af)$, where $\Af$ is the adjacency matrix and $V$ is a set of nodes, we define a signal on the nodes of a graph to be a vector $\x \in \R^{n}$  where $x_{i}$ is the value of the signal at node $i$. Consider the diagonalization of the symmetrized graph Laplacian $\tilde{\Lf} = \Uf \pmb{\Lambda} \Uf^{T}$, where $\pmb{\Lambda} = \text{diag}(\lambda_{1}, \dots, \lambda_{n})$. The eigenbasis $\Uf$ allows to define the graph Fourier transform $\hat{\x}=\Uf^{T} \x \in \R^{n}$. \\
In order to define a convolution for graphs, we shift from the vertex domain to the \textbf{Fourier domain}:
$$\x \star_{G} \y = \Uf\left(\left(\Uf^{T}\x\right) \odot \left(\Uf^{T}\y\right)\right)$$
Note that $\hat{\x}=\Uf^{T}\x$ and $\hat{\y} = \Uf^{T}\y$ are the graph Fourier representations and we use the element-wise product $\odot$ since convolutions become products in the Fourier domain. The left multiplication with $\Uf$ maps the Fourier representation back to a vertex representation.\\
As a consequence, a signal $\x$ filtered by $g_{\theta}$ becomes $\y = \Uf g_{\pmb{\theta}}(\pmb{\Lambda})\Uf^{T}\x$
where $g_{\pmb{\theta}} = \text{diag}(\pmb{\theta})$ with $\pmb{\theta} \in \R^{n}$ constitutes a filter with all parameters free to vary. In order to avoid the resulting complexity $\mathcal{O}(n)$, \citet{graphconv} replace the non-parametric filter by a polynomial filter:
$$g_{\pmb{\theta}}(\pmb{\Lambda}) = \sum_{k=0}^{K-1}\theta_{k}\pmb{\Lambda}^{k}$$
where $\pmb{\theta} \in \R^{K}$ resulting in a complexity $\mathcal{O}(K)$. Filtering a signal is unfortunately still expensive since $\y = \Uf g_{\pmb{\theta}}(\pmb{\Lambda})\Uf^{T}\x$ requires the multiplication with the Fourier basis $\Uf$, thus resulting in complexity $\mathcal{O}(n^{2})$. As a consequence, \citet{graphconv} circumvent this problem by choosing the \textbf{Chebyshev polynomials} $T_{k}$ as a polynomial basis, $g_{\pmb{\theta}}(\pmb{\Lambda}) = \sum_{k=0}^{K}\theta_{k}T_{k}(\tilde{\pmb{\Lambda}})$ where $\tilde{\pmb{\Lambda}} = \frac{2\pmb{\Lambda}}{\lambda_{max}} - \If$. As a consequence, the filter operation becomes $\y = \sum_{k=0}^{K}\theta_{k}T_{k}(\hat{\Lf})\x$ where $\hat{\Lf} = \frac{2\Lf}{\lambda_{max}}-\If$. This led to a $K$-\textbf{localized} filter since it depended on the $K$-th power of the Laplacian. The \textbf{recursive} nature of these polynomials allows for an efficient filtering of complexity $\mathcal{O}(K|E|)$, thus leading to an computationally appealing definition of convolution for graphs. The model can also be built in an analogous way to CNNs, by stacking multiple convolutional layers, each layer followed by a non-linearity.

\subsection{Graph Convolutional Networks}
\citet{gcn} extended the work of \citet{graphconv} and inspired many follow-up architectures \citep{fastgcn, graphsage, ngcn, wu2019comprehensive}. The core idea of \citet{gcn} is to limit each filter to 1-hop neighbours by setting $K=1$, leading to a convolution that is linear in the Laplacian $\hat{\Lf}$:
$$g_{\pmb{\theta}} \star \x = \theta_{0}\x + \theta_{1}\hat{\Lf}\x$$
They further assume $\lambda_{max} \approx 2$, resulting in the expression
$$g_{\pmb{\theta}} \star \x  = \theta_{0}\x - \theta_{1} \Df^{-\frac{1}{2}}\Af\Df^{-\frac{1}{2}}\x$$
To additionally alleviate overfitting, \citet{gcn} constrain the parameters as $\theta_{0}=-\theta_{1}=\theta$, leading to the convolution formula
$$g_{\theta} \star \x = \theta(\If+\Df^{-\frac{1}{2}}\Af\Df^{-\frac{1}{2}})\x$$
Since $\If+\Df^{-\frac{1}{2}}\Af\Df^{-\frac{1}{2}}$ has its eigenvalues in the range $[0,2]$, they further employ a reparametrization trick to stop their model from suffering from numerical instabilities:
$$g_{\theta} \star \x = \theta\tilde{\Df}^{-\frac{1}{2}}\tilde{\Af}\tilde{\Df}^{-\frac{1}{2}}\x$$ where $\tilde{\Af} = \Af + \If$ and $\tilde{D}_{ii} = \sum_{j=1}^{n}\tilde{A}_{ij}$.
\vspace{1mm}

\noindent
Rewriting the architecture for multiple features $\Xf \in \R^{n \times d_{1}}$ and parameters $\pmb{\Theta} \in \R^{d_{1} \times d_{2}}$ instead of $\x \in \R^{n}$ and $\theta \in \R$, gives
$$\Zf = \tilde{\Df}^{-\frac{1}{2}}\tilde{\Af}\tilde{\Df}^{-\frac{1}{2}}\Xf\pmb{\Theta} \in \R^{n \times d_{2}}$$
The final model consists of multiple stacks of convolutions, interleaved by a non-linearity $\sigma$:
$$\Hf^{(k+1)} = \sigma\left(\tilde{\Df}^{-\frac{1}{2}}\tilde{\Af}\tilde{\Df}^{-\frac{1}{2}}\Hf^{(k)}\pmb{\Theta}^{(k)}\right)$$
where $\Hf^{(0)} = \Xf$ and $\pmb{\Theta} \in \R^{n \times d_{k}}$.
\vspace{1mm}

\noindent
The final output $\Hf^{(K)} \in \R^{n \times d_{K}}$ represents the embedding of each node $i$ as $\h_{i} = \Hf_{i\bullet} \in \R^{d_{K}}$ and can be used to perform node classification:
$$\hat{\Yf} = \text{softmax}\left(\tilde{\Df}^{-\frac{1}{2}}\tilde{\Af}\tilde{\Df}^{-\frac{1}{2}}\Hf^{(K)}\Wf\right)  \in \R^{n \times L}$$
where $\Wf \in \R^{d_{K} \times L}$, with $L$ denoting the number of classes.
\vspace{1mm}

\noindent
In order to illustrate how embeddings of neighbouring nodes interact, it is easier to view the architecture on the \textbf{node level}. Denote by $\mathcal{N}(i)$ the neighbours of node $i$. One can write the embedding of node $i$ at layer $k+1$ as follows:
$$\pmb{h}_{i}^{(k+1)} = \sigma \left(\pmb{\Theta}^{(l)}\sum_{j \in \mathcal{N}_{i} \cup \{i\}}\frac{\pmb{h}_{j}^{(k)}}{\sqrt{|\mathcal{N}(j)||\mathcal{N}(i)|}}\right)$$ 
Notice that there is no dependence of the weight matrices $\pmb{\Theta}^{(l)}$ on the node $i$, in fact the same \textbf{parameters are shared} across all nodes. \\
In order to obtain the new embedding $\h_{i}^{(k+1)}$ of node $i$, we average over all embeddings of the neighbouring nodes. This \textbf{Message Passing} mechanism gives rise to a very broad class of graph neural networks \citep{gcn,gat,graphsage, quant, fastgcn, ppnp, ngcn}.

To be more precise, GCN falls into the more general category of models of the form
\begin{align*}
     \pmb{z}_{i}^{(k+1)} &= \text{AGGREGATE}^{(k)}(\{\pmb{h}_{j}^{(k)}: j \in \mathcal{N}(i)\}; \pmb{W}^{(k)}) \\
     \pmb{h}_{i}^{(k+1)} &= \text{COMBINE}^{(k)}(\pmb{h}_{i}^{(k)}, \pmb{z}_{i}^{(k+1)}; \pmb{V}^{(k)})
\end{align*}
Models of the above form are deemed \textbf{Message Passing Graph Neural Networks} and many choices for \text{AGGREGATE} and \text{COMBINE} have been suggested in the literature \cite{gcn, graphsage, fastgcn}.

\section{Graph Embeddings in Non-Euclidean Geometries}\label{apx:graph_emb}

In this section we will motivate non-Euclidean embeddings of graphs and show why the underlying geometry of the embedding space can be very beneficial for its representation. We first introduce a measure of how well a graph is represented by some embedding $f: V \xrightarrow{} \mathcal{X}, \ i \mapsto f(i)$:
\begin{definition}
  Given an embedding $f: V \xrightarrow{} \mathcal{X}, \ i \mapsto f(i)$ of a graph $G=(V,\pmb{A})$ in some metric space $\mathcal{X}$, we call $f$ a \textbf{D-embedding} for $D \geq 1$ if there exists $r>0$ such that
  $$r\cdot d_{G}(i, j) \leq d_{\mathcal{X}}(f(i), f(j)) \leq D\cdot r\cdot d_{G}(i,j)$$
  The infimum over all such $D$ is called the \textbf{distortion} of $f$.
\end{definition}
The $r$ in the definition of distortion allows for scaling of all distances. Note further that a perfect embedding is achieved when $D=1$.
\subsection{Trees and Hyperbolic Space}
Trees are graphs that do not allow for a cycle, in other words there is no node $i \in V$ for which there exists a path starting from $i$ and returning back to $i$ without passing through any node twice. The number of nodes increases \textbf{exponentially} with the depth of the tree. This is a property that prohibits Euclidean space from representing a tree accurately. What intuitively happens is that "we run out of space". Consider the trees depicted in fig. \ref{fig:tree_red_green}. Here the yellow nodes represent the roots of each tree. Notice how rapidly we struggle to find appropriate places for nodes in the embedding space because their number increases just too fast. \\

Moreover, graph distances get \textbf{extremely distorted} towards the leaves of the tree. Take for instance the green and the pink node. In graph distance they are very far apart as one has to travel up all the way to the root node and back to the border. In Euclidean space however, they are very closely embedded in a $L_{2}$-sense, hence introducing a big error in the embedding.
\vspace{1mm}

\noindent
This problem can be very nicely illustrated by the following theorem:
\begin{theorem}
Consider the tree $K_{1,3}$ (also called 3-star) consisting of a root node with three children. Then every embedding $\{\x_{1}, \dots, \x_{4}\}$ with $\x_{i} \in \R^{k}$ achieves at least distortion $\frac{2}{\sqrt{3}}$ for any $k \in \mathbb{N}$.
\label{thm5.2}
\end{theorem}{}
\begin{proof}
We will prove this statement by using a special case of the so called \textbf{Poincaré-type inequalities} \citep{MetricEmb2}:
\vspace{1mm}

\noindent
For any $b_{1}, \dots, b_{k} \in \R$ with $\sum_{i=1}^{k}b_{i} = 0$ and points $\x_{1}, \dots, \x_{k} \in \R^{n}$ it holds that
$$\sum_{i,j=1}^{k}b_{i}b_{j}||\x_{i}-\x_{j}||^{2} \leq 0$$
Consider now an embedding of the tree $\x_{1}, \dots, \x_{4}$ where $\x_{1}$ represents the root node. Choosing $b_{1} = -3$ and $b_{i}=1$ for $i \not=1$ leads to the inequality
\begin{equation*}
\begin{split}
    & ||\x_{2}-\x_{3}||^{2}+||\x_{2}-\x_{4}||^{2}+||\x_{3}-\x_{4}||^{2} \\ & \leq 3||\x_{1}-\x_{2}||^{2}+3||\x_{1}-\x_{3}||^{2}+3||\x_{1}-\x_{4}||^{2}
\end{split}
\end{equation*}

The left-hand side of this inequality in terms of the graph distance is $$d_{G}(2,3)^{2}+d_{G}(2,4)^{2}+d_{G}(3,4)^{2}= 2^{2}+2^{2}+2^{2}=12$$ and the right-hand side is $$3\cdot d_{G}(1,2)^{2}+3\cdot d_{G}(1,3)^{2}+3\cdot d_{G}(1,4)^{2} = 3 + 3 +3 = 9$$
As a result, we always have that the distortion is lower-bounded by $\sqrt{\frac{12}{9}} = \frac{2}{\sqrt{3}}$
\end{proof}

Euclidean space thus already fails to capture the geometric structure of a very simple tree. This problem can be remedied by replacing the underlying Euclidean space by hyperbolic space.
\vspace{1mm}

\noindent 
Consider again the distance function in the Poincaré model, for simplicity with $c=1$:
$$d_{\mathbb{P}}(\x,\y) = \cosh^{-1}\left({1+2\frac{||\x-\y||^{2}}{(1-||\x||^{2})(1-||\y||^{2})}}\right)$$
Assume that the tree is embedded in the same way as in fig.\ref{fig:tree_red_green}, just restricted to lie in the disk of radius $\frac{1}{\sqrt{c}}=1$. 
Notice that as soon as points move closer to the boundary ($||\x|| \xrightarrow{} 1$), the fraction explodes and the resulting distance goes to infinity. As a result, the further you move points to the border, the more their distance increases, exactly as nodes on different branches are more distant to each other the further down they are in the tree.
We can express this advantage in geometry in terms of distortion:
\begin{theorem}
There exists an embedding $\x_{1}, \dots, \x_{4} \in \Pb^{2}$ for $K_{1, 3}$ achieving distortion $1 + \epsilon$ for $\epsilon > 0$ arbitrary small.
\label{thm5.3}
\end{theorem}
\vspace{-6mm}
\begin{proof}
Since the Poincaré distance is invariant under Möbius translations we can again assume that $x_{1}=0$. Let us place the other nodes on a circle of radius $r$. Their distance to the root is now given as 
\begin{equation}
    \begin{split}
        d_{\mathbb{P}}(\x_{i}, 0) &= \cosh^{-1}\left(1+2\frac{||\x_{i}||^{2}}{1-||\x_{i}||^{2}}\right) \\ &= \cosh^{-1}\left(1+2\frac{r^2}{1-r^{2}}\right)
    \end{split}
\end{equation}

By invariance of the distance under centered rotations we can assume w.l.o.g. $\x_{2} = \left(r, 0\right)$.
We further embed 
\begin{itemize}
    \item $\x_{3} = \left(r\cos(\frac{2}{3}\pi),r\sin(\frac{2}{3}\pi)\right) = \left(-\frac{r}{2}, \frac{\sqrt{3}}{2} r\right)$  
    \item $x_{4} = \left(r\cos(\frac{4}{3}\pi),r\sin(\frac{4}{3}\pi)\right)=\left(-\frac{r}{2}, \frac{\sqrt{3}}{2}r \right)$.
\end{itemize}
This procedure gives:
\begin{equation}
    \begin{split}
        d_{\mathbb{P}}(\x_{2}, \x_{3}) &= \cosh^{-1}\left(1 + 2\frac{||\left(\frac{3r}{2}, \frac{-\sqrt{3}}{2}r\right)||^2}{(1-r^{2})^{2}}\right) \\ &= \cosh^{-1}\left(1 + 2\frac{3r^{2}}{(1-r^{2})^{2}}\right)
    \end{split}
\end{equation}

If we let the points now move to the border of the disk we observe that
$$\frac{\cosh^{-1}\left(1 + 2\frac{3r^{2}}{(1-r^{2})^{2}}\right)}{\cosh^{-1}\left(1+2\frac{r^2}{1-r^{2}}\right)} \xrightarrow{r \xrightarrow{}1} 2$$
But this means in turn that we can achieve distortion $1+\epsilon$ for $\epsilon > 0$ arbitrary small.
\end{proof}
The tree-likeliness of hyperbolic space has been investigated on a deeper mathematical level. \citet{hyptree2} show that a similar statement as in theorem \ref{thm5.3} holds for all weighted or unweighted trees. The interested reader is referred to \citet{hyptree, hyptree2} for a more in-depth treatment of the subject.

Cycles are the subclasses of graphs that are not allowed in a tree. They consist of one path that reconnects the first and the last node: $(v_{1}, \dots, v_{n}, v_{1})$. Again there is a very simple example of a cycle, hinting at the limits Euclidean space incurs when trying to preserve the geometry of these objects \citep{MetricEmb}.
\begin{theorem}
Consider the cycle $G=(V,\pmb{A})$ of length four. Then any embedding $(\x_{1}, \dots, \x_{4})$ where $\x_{i} \in \R^{k}$ achieves at least distortion $\sqrt{2}$.
\end{theorem}
\begin{proof}
Denote by $\x_{1}, \x_{2}, \x_{3}, \x_{4}$ the embeddings in Euclidean space where $\x_{1}, \x_{3}$ and $\x_{2}, \x_{4}$ are the pairs without an edge. Again using the Poincaré-type inequality with $b_{1}=b_{3}=1$ and $b_{2}=b_{4}=-1$  leads to the \textbf{short diagonal theorem} \citep{MetricEmb}:
\begin{equation}
    \begin{split}
        & ||\x_{1}-\x_{3}||^{2}+||\x_{2}-\x_{4}||^{2} \\ & \leq  ||\x_{1}-\x_{2}||^{2}+||\x_{2}-\x_{3}||^{2}+||\x_{3}-\x_{4}||^{2}+||\x_{4}-\x_{1}||^{2}
    \end{split}
\end{equation}

The left hand side of this inequality in terms of the graph distance is $d_{G}(1,3)^{2}+d_{G}(2,4)^{2} = 2^{2}+2^{2} = 8$ and the right hand side is $1^{2}+1^{2}+1^{2}+1^{2}=4$. \\
Therefore any embedding has to shorten one diagonal by at least a factor $\sqrt{2}$.
\end{proof}{}
It turns out that in spherical space, this problem can be solved perfectly in one dimension for any cycle.
\begin{theorem}
Given a cycle $G = (V,\pmb{A})$ of length $n$, there exists an embedding $\{\x_{1}, \dots, \x_{n}\}$ achieving distortion 1.    
\begin{proof}
We model the one dimension spherical space as the circle $\mathbb{S}^{1}$. Placing the points at angles $\frac{2\pi i}{n}$ and using the arclength on the circle as the distance measure leads to an embedding of distortion $1$ as all pairwise distances are perfectly preserved.
\end{proof}{}
\end{theorem}{}
Notice that we could also use the exact same embedding in the two dimensional stereographic projection model with $c=1$ and we would also obtain distortion 1. The difference to the Poincaré disk is that spherical space is finite and the border does not correspond to infinitely distant points. We therefore have no $\epsilon$ since we do not have to pass to a limit.

\section{Spherical Space and its Gyrostructure}\label{apx:spherical_gyro}
Contrarily to hyperbolic geometry, \textbf{spherical geometry} is not only in violation with the fifth postulate of Euclid but also with the first. Notice that, shortest paths are not unique as for antipodal (oppositely situated) points, we have infinitely many geodesics connecting the two. Hence the first axiom does not hold. Notice that the third postulate holds as we stated it but it is sometimes also phrased as: "A circle of any center and radius can be constructed". Due to the finiteness of space we cannot have arbitrary large circles and hence phrased that way, the third postulate would not hold.  \\
Finally, we replace the fifth postulate by:
\begin{itemize}
    \item Given any straight line $l$ and a point $p$ not on $l$, there exists no shortest line $g$ passing through $p$  but never intersecting  $l$.
\end{itemize}
The standard model of spherical geometry suffers from the fact that its underlying space depends directly on the curvature $\kappa$ through a hard constraint $-\kappa\langle \x,\x \rangle=1$ (similarly to the Lorentz model of hyperbolic geometry). Indeed, when $\kappa\to 0$, the domain diverges to a sphere of infinite radius which is not well defined.\\
For hyperbolic geometry, we could circumvent the problem by moving to the Poincaré model, which is the stereographic projection of the Lorentz model, relaxing the hard constraint to an inequality. A similar solution is also possible for the spherical model.
\subsection{Stereographic Projection Model of the Sphere}
In the following we construct a model in perfect duality to the construction of the Poincaré model.\\
Fix the south pole $\z = (\pmb{0}, -\frac{1}{\sqrt{\kappa}})$ of the sphere of curvature $\kappa >0$, \textit{i.e.} of radius $R:=\kappa^{-\frac{1}{2}}$. The \textbf{stereographic projection} is the map: $$\Phi:\mathbb{S}^n_R \xrightarrow{} \mathbb{R}^{n}, \x' \mapsto \x=\frac{1}{1+\sqrt{\kappa}\x'_{n+1}}\x'_{1:n}$$ 
with the inverse given by
$$\Phi^{-1}:\mathbb{R}^{n} \xrightarrow{}\mathbb{S}^{n}_{R}, \x \mapsto \x' = \left(\lambda^\kappa_{\x}\x, \frac{1}{\sqrt{\kappa}}(\lambda^\kappa_\x-1)\right)$$
where we define $\lambda_\x^\kappa = \frac{2}{1+\kappa||\x||^{2}}$. \\ Again we take the image of the sphere $\mathbb{S}^{n}_{R}$ under the extended projection $\Phi((0, \dots, 0, -\frac{1}{\kappa})) = \pmb{0}$, leading to the stereographic model of the sphere. The metric tensor transforms as:
$$g^{\kappa}_{ij} = (\lambda^\kappa_{\x})^{2}\delta_{ij}$$
\subsection{Gyrovector Space in the Stereographic Model}

\subsubsection{Proof of Theorem~\ref{thm:0}}\label{appendix:thm0}

Using Cauchy-Schwarz's inequality, we have $A:=1-2\kappa \x^{T}\y+\kappa^{2}||\x||^{2}||\y||^{2} \geq 1-2|\kappa| \Vert\x\Vert\Vert\y\Vert+\kappa^{2}||\x||^{2}||\y||^{2} = (1-|\kappa|\Vert\x\Vert\Vert\y\Vert)^2\geq 0$. Since equality in the Cauchy-Schwarz inequality is only reached for colinear vectors, we have that $A=0$ is equivalent to $\kappa > 0$ and $\x= \y/(\kappa\Vert\y\Vert^2)$. 

\subsubsection{Proof of Theorem~\ref{thm:gyropos}}\label{appendix:thm1}

Let us start by proving that for $\x \in \mathbb{R}^{n}$ and $\mathbf{v} \in T_{\x}\mathbb{R}^{n}$ the \textbf{exponential map} is given by 
\begin{equation}
\exp^\kappa_{\x}(\mathbf{v}) = \frac{\lambda^\kappa_{\x}\left(\alpha-\sqrt{\kappa}\x^{T}\frac{\mathbf{v}}{||\mathbf{v}||}\beta\right)\x + \frac{1}{\sqrt{\kappa}}\beta\frac{\mathbf{v}}{||\mathbf{v}||}}{1 + (\lambda^\kappa_{\x}-1)\alpha-\sqrt{\kappa}\lambda^\kappa_{\x}\x^{T}\frac{\mathbf{v}}{||\mathbf{v}||}\beta}
\end{equation}
where $\alpha = \cos_\kappa{\left(\lambda^\kappa_{\x}||\mathbf{v}||\right)}$ and $\beta = \sin_\kappa{\left(\lambda^\kappa_{\x}||\mathbf{v}||\right)}$

Indeed, take a unit speed geodesic $\gamma_{\x,\mathbf{v}}(t)$ starting from $\x$ with direction $\mathbf{v}$. Notice that the unit speed geodesic on the sphere starting from $\x' \in \mathbb{S}^{n-1}$ is given by $\Gamma_{\x',\mathbf{v}'}(t) = \x'\cos_\kappa(t) + \frac{1}{\sqrt{\kappa}}\sin_\kappa(t)\mathbf{v}'$. By the Egregium theorem, we know that $\Phi(\gamma_{\x,\mathbf{v}}(t))$ is again a unit speed geodesic in the sphere where $\Phi^{-1}: \x \mapsto \x' = \left(\lambda^\kappa_{\x}\x, \frac{1}{\sqrt{\kappa}}(\lambda^\kappa_{\x}-1)\right)$. Hence $\Phi(\gamma_{\x,\mathbf{v}}(t))$ is of the form of $\Gamma$ for some $\x'$ and $\mathbf{v}'$. We can determine those by 
\begin{align*}
    \x' &= \Phi^{-1}(\gamma(0)) = \Phi^{-1}(\x) = \left(\lambda^\kappa_{\x}\x, \frac{1}{\sqrt{\kappa}}(\lambda^\kappa_{\x}-1)\right) \\
    \mathbf{v}' &= \dot{\Gamma}(0) = \frac{\partial \Phi^{-1}(\y)}{\partial \y}{\gamma(0)}
    \dot{\gamma}(0)
\end{align*}
Notice that $\nabla_{\x}\lambda^\kappa_{\x} = -\kappa(\lambda^\kappa_{\x})^{2}\x$ and we thus get $$\mathbf{v}' = \begin{pmatrix}\ -2\kappa(\lambda^\kappa_{\x})^{2} \x^{T}\mathbf{v}\x+\lambda^\kappa_{\x}\mathbf{v}\\\ -\sqrt{\kappa}(\lambda^\kappa_{\x})^{2}\x^{T}\mathbf{v}\end{pmatrix}$$
We can obtain $\gamma_{\x,\mathbf{v}}$ again by inverting back by calculating $ \gamma_{\x,\mathbf{v}}(t) = \Phi(\Gamma_{\x', \mathbf{v}'}(t))$, resulting in 
\begin{equation*}
    \begin{split}
        \gamma_{\x,\mathbf{v}}(t) &= \frac{(\lambda^\kappa_{\x}\cos_\kappa(t)-\sqrt{\kappa}(\lambda^\kappa_{\x})^{2}\x^{T}\mathbf{v}\sin_\kappa{(t)})\x}{1 + (\lambda^\kappa_{\x}-1)\cos_\kappa{(t)}-\sqrt{\kappa}(\lambda^\kappa_{\x})^{2}\x^{T}\mathbf{v}\sin_\kappa{(t)}} \\ & \hspace{2mm}+  \frac{\frac{1}{\sqrt{\kappa}}\lambda^\kappa_{\x}\sin_\kappa{(t)}\mathbf{v}}{1 + (\lambda^\kappa_{\x}-1)\cos_\kappa{(t)}-\sqrt{\kappa}(\lambda^\kappa_{\x})^{2}\x^{T}\mathbf{v}\sin_\kappa{(t)}}
    \end{split}
\end{equation*}

Denoting $g_{\x}^{\kappa}(\mathbf{v},\mathbf{v}) = ||\mathbf{v}||^{2}\lambda^\kappa_{\x}$ we have that $\exp_{\x}^\kappa(\mathbf{v})=\gamma_{\x, \frac{1}{\sqrt{g_{\x}^{\kappa}(\mathbf{v},\mathbf{v})}}\mathbf{v}}\left(\sqrt{g_{\x}^{\kappa}(\mathbf{v},\mathbf{v})}\right)$ which concludes the proof of the above formula of the exponential map. One then notices that it can be re-written in terms of the $\kappa$-addition. The formula for the logarithmic map is easily checked by verifying that it is indeed the inverse of the exponential map. Finally, the distance formula is obtained via the well-known identity $d_\kappa(\x,\y)=\Vert\log^\kappa_\x(\y)\Vert_\x$ where $\Vert\mathbf{v}\Vert_\x=\sqrt{g^\kappa_\x(\mathbf{v},\mathbf{v})}$.

Note that as expected, $\exp_{\x}^{\kappa}(\mathbf{v}) \to_{\kappa\to 0} \x + \mathbf{v}$, converging to the Euclidean exponential map.
\begin{flushright}
$\square$
\end{flushright}
\subsubsection{Proof of Theorem~\ref{thm:smooth}}\label{appendix:thm2}

We first compute a Taylor development of the $\kappa$-addition w.r.t $\kappa$ around zero:
\begin{equation}
\begin{split}
\x \oplus_\kappa \y &= \frac{(1-2\kappa \x^{T}\y -\kappa||\y||^{2})\x + (1+\kappa||\x||^{2})\y}{1-2\kappa \x^{T}\y+\kappa^{2}||\x||^{2}||\y||^{2}}\\
&=  [(1-2\kappa \x^{T}\y -\kappa ||\y||^{2})\x \\ & \quad + (1+\kappa ||\x||^{2})\y][1+2\kappa \x^{T}\y+\mathcal{O}(\kappa^2)]\\
&= (1-2\kappa \x^{T}\y -\kappa ||\y||^{2})\x + (1+\kappa ||\x||^{2})\y \\ &\quad + 2\kappa \x^{T}\y[\x+\y] + \mathcal{O}(\kappa^2)\\
&= (1-\kappa ||\y||^{2})\x + (1+\kappa ||\x||^{2})\y+2\kappa (\x^{T}\y) \y \\&\quad  +  \mathcal{O}(\kappa^2)\\
&= \x+\y +\kappa[\Vert\x\Vert^2 \y - \Vert\y\Vert^2 \x + 2 (\x^T\y)\y] + \mathcal{O}(\kappa^2).
\end{split} 
\end{equation}
We then notice that using the Taylor of $\Vert\cdot\Vert_2$, given by $\Vert \x+\mathbf{v}\Vert_2 = \Vert\x\Vert_2+\langle\x,\mathbf{v}\rangle+\mathcal{O}(\Vert\mathbf{v}\Vert_2^2)$ for $\mathbf{v}\to\mathbf{0}$, we get
\begin{equation}
    \begin{split}
        \Vert\x \oplus_\kappa \y\Vert
&= \Vert\x+\y\Vert +\kappa\langle\Vert\x\Vert^2 \y - \Vert\y\Vert^2 \x \\ & \quad + 2 (\x^T\y)\y,\x+\y\rangle + \mathcal{O}(\kappa^2)\\
&= \Vert\x+\y\Vert +\kappa  (\x^T\y)\Vert\x+\y\Vert^2 + \mathcal{O}(\kappa^2).
    \end{split}
\end{equation}

Finally Taylor developments of $\tan_\kappa(\vert\kappa\vert^{\frac{1}{2}} u)$ and $\vert\kappa\vert^{-\frac{1}{2}}\tan_\kappa^{-1}(u)$ w.r.t $\kappa$ around $0$ for fixed $u$ yield
$\mathrm{For}\ \kappa\to 0^+$
\begin{equation}
\begin{split}
\tan_\kappa(\vert\kappa\vert^{\frac{1}{2}} u)&=\kappa^{-\frac{1}{2}}\tan(\kappa^{\frac{1}{2}} u)\\
&=\kappa^{-\frac{1}{2}}(\kappa^{\frac{1}{2}} u+ \kappa^{\frac{3}{2}}u^3/3 \mathcal{O}(\kappa^{\frac{5}{2}})\\
&= u + \kappa u^3/3 + \mathcal{O}(\kappa^2).
\end{split}
\end{equation}
$\mathrm{For}\ \kappa\to 0^-,$
\begin{equation}
\begin{split}
\tan_\kappa(\vert\kappa\vert^{\frac{1}{2}} u)&= (-\kappa)^{-\frac{1}{2}}\tanh((-\kappa)^{\frac{1}{2}} u)\\
&=(-\kappa)^{-\frac{1}{2}}((-\kappa)^{\frac{1}{2}} u- (-\kappa)^{\frac{3}{2}}u^3/3 + \mathcal{O}(\kappa^{\frac{5}{2}})\\
&= u +\kappa u^3/3 + \mathcal{O}(\kappa^2).
\end{split}
\end{equation}
The left and right derivatives match, hence even though $\kappa\mapsto \vert\kappa\vert^{\frac{1}{2}}$ is not differentiable at $\kappa=0$, the function $\kappa\mapsto\tan_\kappa(\vert\kappa\vert^{\frac{1}{2}} u)$ is. A similar analysis yields the same conclusion for $\kappa\mapsto\vert\kappa\vert^{-\frac{1}{2}}\tan_\kappa^{-1}(u)$ yielding 
\begin{equation}
\mathrm{For}\ \kappa\to 0,\quad \vert\kappa\vert^{-\frac{1}{2}}\tan_\kappa^{-1}(u) = u - \kappa u^3/3 +\mathcal{O}(\kappa^2).
\end{equation}
Since a composition of differentiable functions is differentiable, we consequently obtain that $\otimes_\kappa$, $\exp^\kappa$, $\log^\kappa$ and $d_\kappa$ are differentiable functions of $\kappa$, under the assumptions on $\x,\y,\mathbf{v}$ stated in Theorem~
3. Finally, the Taylor development of $d_\kappa$ follows by composition of Taylor developments:
\begin{equation*}
\begin{split}
d_\kappa(\x,\y)&=2\Vert\kappa\Vert^{-\frac{1}{2}}\tan_\kappa^{-1}( \Vert(-\x) \oplus_\kappa \y\Vert)\\
&= 2(\Vert\x-\y\Vert +\kappa  ((-\x)^T\y)\Vert\x-\y\Vert^2)\Big(1- \\ & \quad (\kappa/3)(\Vert\x-\y\Vert +\mathcal{O}(\kappa))^2\Big)+\mathcal{O}(\kappa^2)\\
&= 2(\Vert\x-\y\Vert +\kappa  ((-\x)^T\y)\Vert\x-\y\Vert^2)\Big(1 \\ & \quad -(\kappa/3)\Vert\x-\y\Vert^2\Big)+\mathcal{O}(\kappa^2)\\
&=2\Vert\x-\y\Vert -2\kappa \left( (\x^T\y)\Vert\x-\y\Vert^2+\Vert\x-\y\Vert^3/3\right)\\ & \quad +\mathcal{O}(\kappa^2).
\end{split}
\end{equation*}
\begin{flushright}
$\square$
\end{flushright}
\subsubsection{Proof of Theorem~\ref{thm:scalar-assoc}}\label{appendix:thm3}

If $\Af=\If_n$ then for all $i$ we have $\sum_j A_{ij}=1$, hence 
\begin{align}
(\If_n\boxtimes\Xf)_{i\bullet} &= \frac{1}{2}\otimes_\kappa\left(\sum_j \frac{\delta_{ij} \lambda^\kappa_{\x_j}}{\sum_k \delta_{ik}(\lambda^\kappa_{\x_k}-1)}\x_j \right)\\
&=\frac{1}{2}\otimes_\kappa\left( \frac{ \lambda^\kappa_{\x_i}}{ (\lambda^\kappa_{\x_i}-1)}\x_i \right)\\
&=\frac{1}{2}\otimes_\kappa\left( 2\otimes_\kappa\x_i \right)\\
&= \x_i\\
&=(\Xf)_{i\bullet}.
\end{align}
For associativity, we first note that the gyromidpoint is unchanged by a scalar rescaling of $\Af$. The property then follows by scalar associativity of the $\kappa$-scaling.
\begin{flushright}
$\square$
\end{flushright}

\subsubsection{Proof of Theorem~\ref{thm:intrinsic}}\label{appendix:thm4}

It is proved in \citet{ungar2005analytic} that the gyromidpoint commutes with isometries. The exact same proof holds for positive curvature, with the same algebraic manipulations. Moreover, when the matrix $\Af$ is right-stochastic, for each row, the sum over columns gives 1, hence our operation $\boxtimes_\kappa$ reduces to a gyromidpoint. As a consequence, our $\boxtimes_\kappa$ commutes with isometries in this case. Since isometries preserve distance, we have proved the theorem.
\begin{flushright}
$\square$
\end{flushright}

\subsubsection{Proof of Theorem~\ref{thm:no-intrinsic}}\label{appendix:thm5}

We begin our proof by stating the \textit{left-cancellation law}: 
\begin{equation}
\x \oplus_\kappa (- \x \oplus_\kappa \y) = \y
\end{equation}
and the following simple identity stating that orthogonal maps commute with $\kappa$-addition
\begin{equation}
\Rb \x \oplus_\kappa \Rb \y = \Rb (\x \oplus_\kappa \y), \quad \forall \Rb \in O(d)
\label{eq:gyro-kappa-add}
\end{equation}

Next, we generalize the gyro operator from M\"obius gyrovector spaces as defined in~\citet{ungar2008gyrovector}:
\begin{equation}
\text{gyr}[\uu,\vv]\w := - (\uu \oplus_\kappa \vv) \oplus_\kappa (\uu \oplus_\kappa (\vv \oplus_\kappa \w))
\label{eq:gyr_def}
\end{equation}
Note that this definition applies only for $\uu,\vv,\w \in \mathfrak{st}_\kappa^d$ for which the $\kappa$-addition is defined (see 
theorem 1). Following ~\citet{ungar2008gyrovector}, we have an alternative formulation (verifiable via computer algebra):

\begin{align}
\text{gyr}[\uu,\vv]\w = \w + 2 \frac{A\uu + B\vv}{D}.
\label{eq:gyr_alternate}
\end{align}
where the quantities $A,B,D$ have the following closed-form expressions:
\begin{align}
A = - \kappa^2 \langle \uu, \w\rangle \|\vv\|^2 - \kappa \langle \vv, \w\rangle + 2 \kappa^2 \langle \uu, \vv\rangle \cdot \langle \vv, \w\rangle,
\end{align}
\begin{align}
B = - \kappa^2 \langle \vv, \w\rangle \|\uu\|^2 + \kappa \langle \uu, \w\rangle,
\end{align}
\begin{align}
D = 1 - 2 \kappa \langle \uu, \vv\rangle + \kappa^2 \|\uu\|^2 \|\vv\|^2.
\end{align}

We then have the following relations: 
\begin{lemma} For all $\uu,\vv,\w \in \mathfrak{st}_\kappa^d$ for which the $\kappa$-addition is defined we have the following relations:
i) gyration is a linear map, ii) $\uu \oplus_\kappa \vv = \text{gyr}[\uu,\vv](\vv \oplus_\kappa \uu)$, iii) $- (\z \oplus_\kappa \uu) \oplus_\kappa (\z \oplus_\kappa \vv) = gyr[\z,\uu](-\uu \oplus_\kappa \vv)$, iv) $\|\text{gyr}[\uu,\vv]\w\| = \|\w\|$.
\label{lemma:gyr_props}
\end{lemma}

\begin{proof} The proof is similar with the one for negative curvature given in~\citet{ungar2008gyrovector}. The fact that gyration is a linear map can be easily verified from its definition. For the second part, we have
\begin{equation}
\begin{split}
- \text{gyr}[\uu,\vv](\vv \oplus_\kappa \uu) &= \text{gyr}[\uu,\vv](- (\vv \oplus_\kappa \uu))\\ &= - (\uu \oplus_\kappa \vv) \\ & \hspace{4mm} \oplus_\kappa (\uu \oplus_\kappa (\vv \oplus_\kappa (- (\vv \oplus_\kappa \uu)))) \\ &= - (\uu \oplus_\kappa \vv)
\end{split}
\end{equation}
where the first equality is a trivial consequence of the fact that gyration is a linear map, while the last equality is the consequence of left-cancellation law.

The third part follows easily from the definition of the gyration and the left-cancellation law. The fourth part can be checked using the alternate form in equation (\ref{eq:gyr_alternate}).
\end{proof}

We now follow \citet{ungar2014analytic} and describe all isometries of $\mathfrak{st}_\kappa^d$ spaces:
\begin{theorem}
Any isometry $\phi$ of $\mathfrak{st}_\kappa^d$ can be uniquely written as:
\begin{equation}
\phi(\x) = \z \oplus_\kappa \Rb \x, \quad \text{where}\ \z \in \mathfrak{st}_\kappa^d, \Rb \in O(d)
\end{equation}
\label{thm:isometries}
\end{theorem}
The proof is exactly the same as in theorems 3.19 and 3.20 of \citet{ungar2014analytic}, so we will skip it.

\par We can now prove the main theorem. Let $\phi(\x) = \z \oplus_\kappa \Rb\x$ be any isometry of $\mathfrak{st}_\kappa^d$, where $ \Rb \in O(d)$ is an orthogonal matrix. Let us denote by $\vv := \sum_{i=1}^n \alpha_i \log_\x^\kappa (\x_i)$. Then, using lemma \ref{lemma:gyr_props} 
and the formula of the log map from 
theorem 2, one obtains the following identity:

\begin{equation}
\sum_{i=1}^n \alpha_i \log_{\phi(\x)}^\kappa (\phi(\x_i)) = \frac{\lambda_{\x}^{\kappa}}{\lambda_{\phi(\x)}^{\kappa}} \text{gyr}[\z, \Rb\x]\Rb\vv
\end{equation}
and, thus, using the formula of the exp map from 
theorem 2 we obtain:
\begin{equation}
\begin{split}
\mathfrak{tg}_{\phi(\x)}(\{\phi(\x_i)\};\{\alpha_i\}) &= \phi(\x) \oplus_\kappa \text{gyr}[\z, \Rb\x]\Rb\Big(-\x \\ & \hspace{3cm} \oplus_\kappa \exp_\x^\kappa(\vv)\Big)
\end{split}
\end{equation}
Using eq. (\ref{eq:gyr_def}),
we get that $\forall \w \in \mathfrak{st}_\kappa^d$:
\begin{equation}
\text{gyr}[\z, \Rb\x]\Rb\w = - \phi(\x) \oplus_\kappa (\z \oplus_\kappa (\Rb\x \oplus_\kappa \Rb\w ))
\end{equation}
giving the desired
\begin{equation}
\begin{split}
    \mathfrak{tg}_{\phi(\x)}(\{\phi(\x_i)\};\{\alpha_i\}) &= \z \oplus_\kappa \Rb \exp_\x^\kappa(\vv) \\ &= \phi\left( \mathfrak{tg}_{\x}(\{\x_i\};\{\alpha_i\}) \right)
\end{split}
\end{equation}

\begin{flushright}
$\square$
\end{flushright}
\begin{table*}[t]
\caption{\label{tab:avg_curvs} Average curvature obtained for node classification. $\mathbb{H}$ and $\mathbb{S}$ denote hyperbolic and spherical models respectively. Curvature for Pubmed was fixed for the product model.}
\vskip 0.15in
\begin{center}
\begin{small}
\begin{sc}
\begin{tabular}{lcccr}
\toprule
Model & Citeseer & Cora & Pubmed & Airport \\
\midrule
$\mathbb{H}^{16}$ ($\kappa$-GCN) & $-1.306 \pm 0.08$ & $-1.51 \pm 0.11$ & $-1.42 \pm 0.12$ & $-0.93 \pm 0.08$ \\
$\mathbb{S}^{16}$ ($\kappa$-GCN)  & $0.81 \pm 0.05$ & $1.24 \pm 0.06$ & $0.71 \pm 0.15$ & $1.49 \pm 0.08$         \\
Prod $\kappa$-GCN  & $[1.21, -0.64] \pm [0.09, 0.07]$  & $[-0.83, -1.61] \pm [0.04, 0.06]$ & $[-1, -1]$ & $[1.23, -0.89] \pm [0.07, 0.11]$\\
\bottomrule
\end{tabular}
\end{sc}
\end{small}
\end{center}
\vskip -0.1in
\end{table*}

\section{Logits} \label{sec:apx_logits}

The final element missing in the $\kappa$-GCN is the logit layer, a necessity for any classification task. We here use the formulation of~\citet{ganea2018hyperbolicnn}.  Denote by $\{1, \dots, K\}$ the possible labels and let $\aaa_k \in \R^{d}$, $b_{k} \in \R$ and $\x \in \R^{d}$. The output of a feed forward neural network for classification tasks is usually of the form 
$$p(y=k|\x) = \text{S}(\langle \aaa_{k}, \x \rangle - b_{k})$$ where $S(\cdot)$ denotes the softmax function.
In order to generalize this expression to hyperbolic space, \citet{ganea2018hyperbolicnn} realized that the term in the softmax can be rewritten as $$\langle \aaa_{k}, \x \rangle - b_{k} = \text{sign}(\langle \aaa_{k}, \x \rangle - b_{k})||\aaa_{k}||d(\x, H_{\aaa_{k}, b_{k}})$$ where $H_{\aaa, b}=\{\x \in \R^{d}: \langle \x, \aaa \rangle -b=0\} = \{\x \in \R^{d}: \langle -\p + \x, \aaa\rangle=0\} = \tilde{H}_{\aaa, \p}$ with $\p \in \R^{d}$. \\
As a first step, they define the hyperbolic hyperplane as $$\tilde{H}^{\kappa}_{\aaa, \p} = \{\x \in \mathfrak{st}^d_\kappa \langle -\p \oplus_\kappa \x, \aaa \rangle=0\}$$ where now $\aaa \in \mathcal{T}_{\p}\mathfrak{st}^d_\kappa$ and $\p \in \mathfrak{st}^d_\kappa$. They then proceed proving the following formula:
\begin{equation}
d_{\kappa}(\x, \tilde{H}_{\aaa, \p}) = \frac{1}{\sqrt{-\kappa}}\sinh^{-1}\left(\frac{2\sqrt{-\kappa}|\langle \z, \aaa\rangle|}{(1+\kappa||\z||^2)||\aaa||}\right)
\end{equation}
where $\z = -\p \oplus_\kappa \x$
Using this equation, they were able to obtain the following expression for the logit layer:
\begin{equation}
 p(y=k|\x) = \text{S}\left(\frac{||\aaa_{k}||_{\p_k}}{\sqrt{-\kappa}}\sinh^{-1}\left(\frac{2\sqrt{-\kappa}\langle \z_{k}, \aaa_{k} \rangle}{(1+\kappa||\z_{k}||^{2})||\aaa_{k}||}\right)\right)
\end{equation}
where $\aaa_{k} \in \mathcal{T}_{0}\st \cong \R^{d},\ \x \in \st\ \mathrm{and}\ \p_{k} \in \mathfrak{st}^d_\kappa$.  Combining all these operations leads to the definition of a hyperbolic feed forward neural network. Notice that the weight matrices $\mathbf{W}$ and the normal vectors $\aaa_{k}$ live in Euclidean space and hence can be optimized by standard methods such as ADAM \citep{adam}. 

For positive curvature $\kappa > 0$ we use in our experiments the following formula for the softmax layer:
\begin{equation}
 p(y=k|\x) = \text{S}\left(\frac{||\aaa_{k}||_{\p_k}}{\sqrt{\kappa}}\sin^{-1}\left(\frac{2\sqrt{\kappa}\langle \z_{k}, \aaa_{k} \rangle}{(1+\kappa||\z_{k}||^{2})||\aaa_{k}||}\right)\right),
\end{equation}
which is inspired from the formula $i\sin(x)=\sinh(ix)$ where $i:=\sqrt{-1}$. However, we leave for future work the rigorous proof that the distance to geodesic hyperplanes in the positive curvature setting is given by this formula.
\begin{figure}[ht] 
  \label{ fig7} 
  \begin{minipage}[b]{0.5\linewidth}
    \centering
    \includegraphics[width=1\linewidth]{pictures/citeseer_degree_histo_para.png} 
  \end{minipage}
  \begin{minipage}[b]{0.5\linewidth}
    \centering
    \includegraphics[width=1\linewidth]{pictures/cora_ml_degree_histo.png} 
  \end{minipage} 
  \begin{minipage}[b]{0.5\linewidth}
    \centering
    \includegraphics[width=1\linewidth]{pictures/pubmed_degree_histo_para.png} 
  \end{minipage}
  \begin{minipage}[b]{0.5\linewidth}
    \centering
    \includegraphics[width=1\linewidth]{pictures/Airport_degree_histo.png} 
  \end{minipage}
  \vspace{-0.5cm}
  \caption{Histogram of node degrees}
\end{figure}

\pagebreak
\section{Additional Experiments}
To study the differentiability of the derived interpolation empirically, we embedded a small tree into $\mathfrak{st}^d_\kappa$ for $\kappa \in [-5, 5]$. To that end, we used $200$ equidistant values $\kappa$ in $[-5, 5]$ and trained a $\kappa$-GCN (with non-trainable curvature) to produce an embedding that minimizes distortion. Moreover we also trained a Euclidean GCN with the same architecture. The results are summarized in Figure 9. \\
One can observe the smooth transition between the different geometries. As expected, the distortion improves for increased hyperbolicity and worsens when the embedding space becomes more spherical. The small kink for the spherical model is due to numerical issues.

\section{More Experimental Details} \label{apx:exp}
\begin{figure}
    \centering
    \includegraphics[width=0.5\textwidth]{pictures/DistortionVaryingCurvature.png}
    \vspace{-7mm}
    \caption{Distortion of $\kappa$-GCNs of varying curvature $\kappa$ trained to embed a tree}
    \label{fig:my_label}
\end{figure}
\begin{algorithm}[tb]
   \caption{Curvature Estimation}
   \label{alg:example}
\begin{algorithmic}
   \STATE {\bfseries Input:} Graph $G$, $n_{iter}$
   \FOR{$m \in G$ }
   \FOR{$i=1$ {\bfseries to} $n_{iter}$}
   \STATE $b, c \sim \mathcal{U}(\mathcal{N}(m))$ and  $a \sim \mathcal{U}(G)$ such that $m \not = a$
   \STATE $\psi(m;b,c;a) = \frac{d_{G}(a,m)}{2} + \frac{d_{G}^{2}(b,c)}{8d_{G}(a,m)}$\\ $\hspace{2.4cm} - \frac{2d_{G}^{2}(a,b) + 2d_{G}^{2}(a,c)}{4d_{G}(a,m)}$
   
   \ENDFOR
   \STATE $\psi(m) = \text{AVERAGE}(\psi(m;a, b;c))$
   \ENDFOR
   \STATE {\bfseries return:} $\kappa = \text{AVERAGE}(\psi(m))$
\end{algorithmic}
\end{algorithm}
We here present training details for the node classification experiments.

 We split the data into training, early stopping, validation and test set. Namely we first split the dataset into a known subset of size $n_{known}$ and an unknown subset consisting of the rest of the nodes. For all the graphs we use $n_{known}=1500$ except for the airport dataset, where we follow the setup of \citet{chami2019hyperbolic} and use $n_{known}=2700$. For the citation graphs, the known subset is further split into a training set consisting of $20$ data points per label, an early stopping set of size $500$ and a validation set of the remaining nodes. For airport, we separate the known subset into $2100$ training nodes, $300$ validation nodes and $300$ early stopping nodes. Notice that the whole structure of the graph and all the node features are used in an unsupervised fashion since the embedding of a training node might for instance depend on the embedding of a node from the validation set. But when calculating the loss, we only provide supervision with the training data. \\
The unknown subset serves as the test data and is only used for the final evaluation of the model. Hyperparameter-tuning is performed on the validation set. We further use early stopping in all the experiments. We stop training as soon as the early stopping cross entropy loss has not decreased in the last $n_{patience}=200$ epochs or as soon as we have reached $n_{max}=2000$ epochs. The model chosen is the one with the highest accuracy score on the early stopping set.
For the final evaluation we test the model on five different data splits and two runs each and report mean accuracy and bootstrapped confidence intervals. We use the described setup for both the Euclidean and non-Euclidean models to ensure a fair comparison.

\paragraph{Learned curvatures}.
We report the learnt curvatures for node classification in tab. \ref{tab:avg_curvs}

\section{Graph Curvature Estimation Algorithm}
\label{apx:curv}

We used the procedure outlined in Algorithm 1 to estimate the curvature of a dataset developed by~\citet{gu2018learning}.
\end{appendices}
%
%
\section{Spherical Space and its Gyrostructure}\label{apx:spherical_gyro}
Contrarily to hyperbolic geometry, \textbf{spherical geometry} is not only in violation with the fifth postulate of Euclid but also with the first. Notice that, shortest paths are not unique as for antipodal (oppositely situated) points, we have infinitely many geodesics connecting the two. Hence the first axiom does not hold. Notice that the third postulate holds as we stated it but it is sometimes also phrased as: "A circle of any center and radius can be constructed". Due to the finiteness of space we cannot have arbitrary large circles and hence phrased that way, the third postulate would not hold.  \\
Finally, we replace the fifth postulate by:
\begin{itemize}
    \item Given any straight line $l$ and a point $p$ not on $l$, there exists no shortest line $g$ passing through $p$  but never intersecting  $l$.
\end{itemize}
The standard model of spherical geometry suffers from the fact that its underlying space depends directly on the curvature $\kappa$ through a hard constraint $-\kappa\langle \x,\x \rangle=1$ (similarly to the Lorentz model of hyperbolic geometry). Indeed, when $\kappa\to 0$, the domain diverges to a sphere of infinite radius which is not well defined.\\
For hyperbolic geometry, we could circumvent the problem by moving to the Poincaré model, which is the stereographic projection of the Lorentz model, relaxing the hard constraint to an inequality. A similar solution is also possible for the spherical model.
\subsection{Stereographic Projection Model of the Sphere}
In the following we construct a model in perfect duality to the construction of the Poincaré model.\\
Fix the south pole $z = (0, -\frac{1}{\sqrt{\kappa}})$ of the sphere of curvature $\kappa >0$, \textit{i.e.} of radius $R:=\kappa^{-\frac{1}{2}}$. The \textbf{stereographic projection} is the map: $$\Phi:\mathbb{S}^n_R \xrightarrow{} \mathbb{R}^{n}, \x' \mapsto \x=\frac{1}{1+\sqrt{\kappa}\x'_{n+1}}\x'_{1:n}$$ 
with the inverse given by
$$\Phi^{-1}:\mathbb{R}^{n} \xrightarrow{}\mathbb{S}^{n}_{R}, \x \mapsto \x' = \left(\lambda^\kappa_{\x}\x, \frac{1}{\sqrt{\kappa}}(\lambda^\kappa_\x-1)\right)$$
where we define $\lambda_\x^\kappa = \frac{2}{1+\kappa||\x||^{2}}$. \\ Again we take the image of the sphere $\mathbb{S}^{n}_{R}$ under the extended projection $\Phi((0, \dots, 0, -\frac{1}{\kappa})) = 0$, leading to the stereographic model of the sphere. The metric tensor transforms as:
$$g^{\kappa}_{ij} = (\lambda^\kappa_{\x})^{2}\delta_{ij}$$
\subsection{Gyrovector Space in the Stereographic Model}

\subsubsection{Proof of 
Theorem 1}

Using Cauchy-Schwarz's inequality, we have $A:=1-2\kappa \x^{T}\y+\kappa^{2}||\x||^{2}||\y||^{2} \geq 1-2|\kappa| \Vert\x\Vert\Vert\y\Vert+\kappa^{2}||\x||^{2}||\y||^{2} = (1-|\kappa|\Vert\x\Vert\Vert\y\Vert)^2\geq 0$. Since equality in the Cauchy-Schwarz inequality is only reached for colinear vectors, we have that $A=0$ is equivalent to $\kappa > 0$ and $\x= \y/(\kappa\Vert\y\Vert^2)$. 

\subsubsection{Proof of 
Theorem 2}

Let us start by proving that for $\x \in \mathbb{R}^{n}$ and $\mathbf{v} \in T_{\x}\mathbb{R}^{n}$ the \textbf{exponential map} is given by 
\begin{equation}
\exp^\kappa_{\x}(\mathbf{v}) = \frac{\lambda^\kappa_{\x}\left(\alpha-\sqrt{\kappa}\x^{T}\frac{\mathbf{v}}{||\mathbf{v}||}\beta\right)\x + \frac{1}{\sqrt{\kappa}}\beta\frac{\mathbf{v}}{||\mathbf{v}||}}{1 + (\lambda^\kappa_{\x}-1)\alpha-\sqrt{\kappa}\lambda^\kappa_{\x}\x^{T}\frac{\mathbf{v}}{||\mathbf{v}||}\beta}
\end{equation}
where $\alpha = \cos_\kappa{\left(\lambda^\kappa_{\x}||\mathbf{v}||\right)}$ and $\beta = \sin_\kappa{\left(\lambda^\kappa_{\x}||\mathbf{v}||\right)}$

Indeed, take a unit speed geodesic $\gamma_{\x,\mathbf{v}}(t)$ starting from $\x$ with direction $\mathbf{v}$. Notice that the unit speed geodesic on the sphere starting from $\x' \in \mathbb{S}^{n-1}$ is given by $\Gamma_{\x',\mathbf{v}'}(t) = \x'\cos_\kappa(t) + \frac{1}{\sqrt{\kappa}}\sin_\kappa(t)\mathbf{v}'$. By the Egregium theorem, we know that $\Phi(\gamma_{\x,\mathbf{v}}(t))$ is again a unit speed geodesic in the sphere where $\Phi^{-1}: \x \mapsto \x' = \left(\lambda^\kappa_{\x}\x, \frac{1}{\sqrt{\kappa}}(\lambda^\kappa_{\x}-1)\right)$. Hence $\Phi(\gamma_{\x,\mathbf{v}}(t))$ is of the form of $\Gamma$ for some $\x'$ and $\mathbf{v}'$. We can determine those by 
\begin{align*}
    \x' &= \Phi^{-1}(\gamma(0)) = \Phi^{-1}(\x) = \left(\lambda^\kappa_{\x}\x, \frac{1}{\sqrt{\kappa}}(\lambda^\kappa_{\x}-1)\right) \\
    \mathbf{v}' &= \dot{\Gamma}(0) = \frac{\partial \Phi^{-1}(\y)}{\partial \y}{\gamma(0)}
    \dot{\gamma}(0)
\end{align*}
Notice that $\nabla_{\x}\lambda^\kappa_{\x} = -\kappa(\lambda^\kappa_{\x})^{2}\x$ and we thus get $$\mathbf{v}' = \begin{pmatrix}\ -2\kappa(\lambda^\kappa_{\x})^{2} \x^{T}\mathbf{v}\x+\lambda^\kappa_{\x}\mathbf{v}\\\ -\sqrt{\kappa}(\lambda^\kappa_{\x})^{2}\x^{T}\mathbf{v}\end{pmatrix}$$
We can obtain $\gamma_{\x,\mathbf{v}}$ again by inverting back by calculating $ \gamma_{\x,\mathbf{v}}(t) = \Phi(\Gamma_{\x', \mathbf{v}'}(t))$, resulting in 
\begin{equation*}
    \begin{split}
        \gamma_{\x,\mathbf{v}}(t) &= \frac{(\lambda^\kappa_{\x}\cos_\kappa(t)-\sqrt{\kappa}(\lambda^\kappa_{\x})^{2}\x^{T}\mathbf{v}\sin_\kappa{(t)})\x}{1 + (\lambda^\kappa_{\x}-1)\cos_\kappa{(t)}-\sqrt{\kappa}(\lambda^\kappa_{\x})^{2}\x^{T}\mathbf{v}\sin_\kappa{(t)}} \\ & \hspace{2mm}+  \frac{\frac{1}{\sqrt{\kappa}}\lambda^\kappa_{\x}\sin_\kappa{(t)}\mathbf{v}}{1 + (\lambda^\kappa_{\x}-1)\cos_\kappa{(t)}-\sqrt{\kappa}(\lambda^\kappa_{\x})^{2}\x^{T}\mathbf{v}\sin_\kappa{(t)}}
    \end{split}
\end{equation*}

Denoting $g_{\x}^{\kappa}(\mathbf{v},\mathbf{v}) = ||\mathbf{v}||^{2}\lambda^\kappa_{\x}$ we have that $\exp_{\x}^\kappa(\mathbf{v})=\gamma_{\x, \frac{1}{\sqrt{g_{\x}^{\kappa}(\mathbf{v},\mathbf{v})}}\mathbf{v}}\left(\sqrt{g_{\x}^{\kappa}(\mathbf{v},\mathbf{v})}\right)$ which concludes the proof of the above formula of the exponential map. One then notices that it can be re-written in terms of the $\kappa$-addition. The formula for the logarithmic map is easily checked by verifying that it is indeed the inverse of the exponential map. Finally, the distance formula is obtained via the well-known identity $d_\kappa(\x,\y)=\Vert\log^\kappa_\x(\y)\Vert_\x$ where $\Vert\mathbf{v}\Vert_\x=\sqrt{g^\kappa_\x(\mathbf{v},\mathbf{v})}$.

Note that as expected, $\exp_{\x}^{\kappa}(\mathbf{v}) \to_{\kappa\to 0} \x + \mathbf{v}$, converging to the Euclidean exponential map.
\begin{flushright}
$\square$
\end{flushright}
\subsubsection{Proof of 
Theorem 3}

We first compute a Taylor development of the $\kappa$-addition w.r.t $\kappa$ around zero:
\begin{equation}
\begin{split}
\x \oplus_\kappa \y &= \frac{(1-2\kappa \x^{T}\y -\kappa||\y||^{2})\x + (1+\kappa||\x||^{2})\y}{1-2\kappa \x^{T}\y+\kappa^{2}||\x||^{2}||\y||^{2}}\\
&=  [(1-2\kappa \x^{T}\y -\kappa ||\y||^{2})\x \\ & \quad + (1+\kappa ||\x||^{2})\y][1+2\kappa \x^{T}\y+\mathcal{O}(\kappa^2)]\\
&= (1-2\kappa \x^{T}\y -\kappa ||\y||^{2})\x + (1+\kappa ||\x||^{2})\y \\ &\quad + 2\kappa \x^{T}\y[\x+\y] + \mathcal{O}(\kappa^2)\\
&= (1-\kappa ||\y||^{2})\x + (1+\kappa ||\x||^{2})\y+2\kappa (\x^{T}\y) \y \\&\quad  +  \mathcal{O}(\kappa^2)\\
&= \x+\y +\kappa[\Vert\x\Vert^2 \y - \Vert\y\Vert^2 \x + 2 (\x^T\y)\y] + \mathcal{O}(\kappa^2).
\end{split} 
\end{equation}
We then notice that using the Taylor of $\Vert\cdot\Vert_2$, given by $\Vert \x+\mathbf{v}\Vert_2 = \Vert\x\Vert_2+\langle\x,\mathbf{v}\rangle+\mathcal{O}(\Vert\mathbf{v}\Vert_2^2)$ for $\mathbf{v}\to\mathbf{0}$, we get
\begin{equation}
    \begin{split}
        \Vert\x \oplus_\kappa \y\Vert
&= \Vert\x+\y\Vert +\kappa\langle\Vert\x\Vert^2 \y - \Vert\y\Vert^2 \x \\ & \quad + 2 (\x^T\y)\y,\x+\y\rangle + \mathcal{O}(\kappa^2)\\
&= \Vert\x+\y\Vert +\kappa  (\x^T\y)\Vert\x+\y\Vert^2 + \mathcal{O}(\kappa^2).
    \end{split}
\end{equation}

Finally Taylor developments of $\tan_\kappa(\vert\kappa\vert^{\frac{1}{2}} u)$ and $\vert\kappa\vert^{-\frac{1}{2}}\tan_\kappa^{-1}(u)$ w.r.t $\kappa$ around $0$ for fixed $u$ yield
$\mathrm{For}\ \kappa\to 0^+$
\begin{equation}
\begin{split}
\tan_\kappa(\vert\kappa\vert^{\frac{1}{2}} u)&=\kappa^{-\frac{1}{2}}\tan(\kappa^{\frac{1}{2}} u)\\
&=\kappa^{-\frac{1}{2}}(\kappa^{\frac{1}{2}} u+ \kappa^{\frac{3}{2}}u^3/3 \mathcal{O}(\kappa^{\frac{5}{2}})\\
&= u + \kappa u^3/3 + \mathcal{O}(\kappa^2).
\end{split}
\end{equation}
$\mathrm{For}\ \kappa\to 0^-,$
\begin{equation}
\begin{split}
\tan_\kappa(\vert\kappa\vert^{\frac{1}{2}} u)&= (-\kappa)^{-\frac{1}{2}}\tanh((-\kappa)^{\frac{1}{2}} u)\\
&=(-\kappa)^{-\frac{1}{2}}((-\kappa)^{\frac{1}{2}} u- (-\kappa)^{\frac{3}{2}}u^3/3 + \mathcal{O}(\kappa^{\frac{5}{2}})\\
&= u +\kappa u^3/3 + \mathcal{O}(\kappa^2).
\end{split}
\end{equation}
The left and right derivatives match, hence even though $\kappa\mapsto \vert\kappa\vert^{\frac{1}{2}}$ is not differentiable at $\kappa=0$, the function $\kappa\mapsto\tan_\kappa(\vert\kappa\vert^{\frac{1}{2}} u)$ is. A similar analysis yields the same conclusion for $\kappa\mapsto\vert\kappa\vert^{-\frac{1}{2}}\tan_\kappa^{-1}(u)$ yielding 
\begin{equation}
\mathrm{For}\ \kappa\to 0,\quad \vert\kappa\vert^{-\frac{1}{2}}\tan_\kappa^{-1}(u) = u - \kappa u^3/3 +\mathcal{O}(\kappa^2).
\end{equation}
Since a composition of differentiable functions is differentiable, we consequently obtain that $\otimes_\kappa$, $\exp^\kappa$, $\log^\kappa$ and $d_\kappa$ are differentiable functions of $\kappa$, under the assumptions on $\x,\y,\mathbf{v}$ stated in Theorem~
3. Finally, the Taylor development of $d_\kappa$ follows by composition of Taylor developments:
\begin{equation*}
\begin{split}
d_\kappa(\x,\y)&=2\Vert\kappa\Vert^{-\frac{1}{2}}\tan_\kappa^{-1}( \Vert(-\x) \oplus_\kappa \y\Vert)\\
&= 2(\Vert\x-\y\Vert +\kappa  ((-\x)^T\y)\Vert\x-\y\Vert^2)\Big(1- \\ & \quad (\kappa/3)(\Vert\x-\y\Vert +\mathcal{O}(\kappa))^2\Big)+\mathcal{O}(\kappa^2)\\
&= 2(\Vert\x-\y\Vert +\kappa  ((-\x)^T\y)\Vert\x-\y\Vert^2)\Big(1 \\ & \quad -(\kappa/3)\Vert\x-\y\Vert^2\Big)+\mathcal{O}(\kappa^2)\\
&=2\Vert\x-\y\Vert -2\kappa \left( (\x^T\y)\Vert\x-\y\Vert^2+\Vert\x-\y\Vert^3/3\right)\\ & \quad +\mathcal{O}(\kappa^2).
\end{split}
\end{equation*}
\begin{flushright}
$\square$
\end{flushright}
\subsubsection{Proof of 
Theorem 4}

If $\Af=\If_n$ then for all $i$ we have $\sum_j A_{ij}=1$, hence 
\begin{align}
(\If_n\boxtimes\Xf)_{i\bullet} &= \frac{1}{2}\otimes_\kappa\left(\sum_j \frac{\delta_{ij} \lambda^\kappa_{\x_j}}{\sum_k \delta_{ik}(\lambda^\kappa_{\x_k}-1)}\x_j \right)\\
&=\frac{1}{2}\otimes_\kappa\left( \frac{ \lambda^\kappa_{\x_i}}{ (\lambda^\kappa_{\x_i}-1)}\x_i \right)\\
&=\frac{1}{2}\otimes_\kappa\left( 2\otimes_\kappa\x_i \right)\\
&= \x_i\\
&=(\Xf)_{i\bullet}.
\end{align}
For associativity, we first note that the gyromidpoint is unchanged by a scalar rescaling of $\Af$. The property then follows by scalar associativity of the $\kappa$-scaling.
\begin{flushright}
$\square$
\end{flushright}

\subsubsection{Proof of 
Theorem 5}

It is proved in \citet{ungar2005analytic} that the gyromidpoint commutes with isometries. The exact same proof holds for positive curvature, with the same algebraic manipulations. Moreover, when the matrix $\Af$ is right-stochastic, for each row, the sum over columns gives 1, hence our operation $\boxtimes_\kappa$ reduces to a gyromidpoint. As a consequence, our $\boxtimes_\kappa$ commutes with isometries in this case. Since isometries preserve distance, we have proved the theorem.
\begin{flushright}
$\square$
\end{flushright}

\subsubsection{Proof of 
Theorem 6}
We begin our proof by stating the \textit{left-cancellation law}: 
\begin{equation}
\x \oplus_\kappa (- \x \oplus_\kappa \y) = \y
\end{equation}
and the following simple identity stating that orthogonal maps commute with $\kappa$-addition
\begin{equation}
\Rb \x \oplus_\kappa \Rb \y = \Rb (\x \oplus_\kappa \y), \quad \forall \Rb \in O(d)
\label{eq:gyro-kappa-add}
\end{equation}

Next, we generalize the gyro operator from M\"obius gyrovector spaces as defined in~\citet{ungar2008gyrovector}:
\begin{equation}
\text{gyr}[\uu,\vv]\w := - (\uu \oplus_\kappa \vv) \oplus_\kappa (\uu \oplus_\kappa (\vv \oplus_\kappa \w))
\label{eq:gyr_def}
\end{equation}
Note that this definition applies only for $\uu,\vv,\w \in \mathfrak{st}_\kappa^d$ for which the $\kappa$-addition is defined (see 
theorem 1). Following ~\citet{ungar2008gyrovector}, we have an alternative formulation (verifiable via computer algebra):

\begin{align}
\text{gyr}[\uu,\vv]\w = \w + 2 \frac{A\uu + B\vv}{D}.
\label{eq:gyr_alternate}
\end{align}
where the quantities $A,B,D$ have the following closed-form expressions:
\begin{align}
A = - \kappa^2 \langle \uu, \w\rangle \|\vv\|^2 - \kappa \langle \vv, \w\rangle + 2 \kappa^2 \langle \uu, \vv\rangle \cdot \langle \vv, \w\rangle,
\end{align}
\begin{align}
B = - \kappa^2 \langle \vv, \w\rangle \|\uu\|^2 + \kappa \langle \uu, \w\rangle,
\end{align}
\begin{align}
D = 1 - 2 \kappa \langle \uu, \vv\rangle + \kappa^2 \|\uu\|^2 \|\vv\|^2.
\end{align}

We then have the following relations: 
\begin{lemma} For all $\uu,\vv,\w \in \mathfrak{st}_\kappa^d$ for which the $\kappa$-addition is defined we have the following relations:
i) gyration is a linear map, ii) $\uu \oplus_\kappa \vv = \text{gyr}[\uu,\vv](\vv \oplus_\kappa \uu)$, iii) $- (\z \oplus_\kappa \uu) \oplus_\kappa (\z \oplus_\kappa \vv) = gyr[\z,\uu](-\uu \oplus_\kappa \vv)$, iv) $\|\text{gyr}[\uu,\vv]\w\| = \|\w\|$.
\label{lemma:gyr_props}
\end{lemma}

\begin{proof} The proof is similar with the one for negative curvature given in~\citet{ungar2008gyrovector}. The fact that gyration is a linear map can be easily verified from its definition. For the second part, we have
\begin{equation}
\begin{split}
- \text{gyr}[\uu,\vv](\vv \oplus_\kappa \uu) &= \text{gyr}[\uu,\vv](- (\vv \oplus_\kappa \uu))\\ &= - (\uu \oplus_\kappa \vv) \\ & \hspace{4mm} \oplus_\kappa (\uu \oplus_\kappa (\vv \oplus_\kappa (- (\vv \oplus_\kappa \uu)))) \\ &= - (\uu \oplus_\kappa \vv)
\end{split}
\end{equation}
where the first equality is a trivial consequence of the fact that gyration is a linear map, while the last equality is the consequence of left-cancellation law.

The third part follows easily from the definition of the gyration and the left-cancellation law. The fourth part can be checked using the alternate form in equation (\ref{eq:gyr_alternate}).
\end{proof}

We now follow \citet{ungar2014analytic} and describe all isometries of $\mathfrak{st}_\kappa^d$ spaces:
\begin{theorem}
Any isometry $\phi$ of $\mathfrak{st}_\kappa^d$ can be uniquely written as:
\begin{equation}
\phi(\x) = \z \oplus_\kappa \Rb \x, \quad \text{where}\ \z \in \mathfrak{st}_\kappa^d, \Rb \in O(d)
\end{equation}
\label{thm:isometries}
\end{theorem}
The proof is exactly the same as in theorems 3.19 and 3.20 of \citet{ungar2014analytic}, so we will skip it.

\par We can now prove the main theorem. Let $\phi(\x) = \z \oplus_\kappa \Rb\x$ be any isometry of $\mathfrak{st}_\kappa^d$, where $ \Rb \in O(d)$ is an orthogonal matrix. Let us denote by $\vv := \sum_{i=1}^n \alpha_i \log_\x^\kappa (\x_i)$. Then, using lemma \ref{lemma:gyr_props} 
and the formula of the log map from 
theorem 2, one obtains the following identity:

\begin{equation}
\sum_{i=1}^n \alpha_i \log_{\phi(\x)}^\kappa (\phi(\x_i)) = \frac{\lambda_{\x}^{\kappa}}{\lambda_{\phi(\x)}^{\kappa}} \text{gyr}[\z, \Rb\x]\Rb\vv
\end{equation}
and, thus, using the formula of the exp map from 
theorem 2 we obtain:
\begin{equation}
\begin{split}
\mathfrak{tg}_{\phi(\x)}(\{\phi(\x_i)\};\{\alpha_i\}) &= \phi(\x) \oplus_\kappa \text{gyr}[\z, \Rb\x]\Rb\Big(-\x \\ & \hspace{3cm} \oplus_\kappa \exp_\x^\kappa(\vv)\Big)
\end{split}
\end{equation}
Using eq. (\ref{eq:gyr_def}),
we get that $\forall \w \in \mathfrak{st}_\kappa^d$:
\begin{equation}
\text{gyr}[\z, \Rb\x]\Rb\w = - \phi(\x) \oplus_\kappa (\z \oplus_\kappa (\Rb\x \oplus_\kappa \Rb\w ))
\end{equation}
giving the desired
\begin{equation}
\begin{split}
    \mathfrak{tg}_{\phi(\x)}(\{\phi(\x_i)\};\{\alpha_i\}) &= \z \oplus_\kappa \Rb \exp_\x^\kappa(\vv) \\ &= \phi\left( \mathfrak{tg}_{\x}(\{\x_i\};\{\alpha_i\}) \right)
\end{split}
\end{equation}

\begin{flushright}
$\square$
\end{flushright}
\begin{table*}[t]
\caption{\label{tab:avg_curvs} Average curvature obtained for node classification. $\mathbb{H}$ and $\mathbb{S}$ denote hyperbolic and spherical models respectively. Curvature for Pubmed was fixed for the product model.}
\vskip 0.15in
\begin{center}
\begin{small}
\begin{sc}
\begin{tabular}{lcccr}
\toprule
Model & Citeseer & Cora & Pubmed & Airport \\
\midrule
$\mathbb{H}^{16}$ ($\kappa$-GCN) & $-1.306 \pm 0.08$ & $-1.51 \pm 0.11$ & $-1.42 \pm 0.12$ & $-0.93 \pm 0.08$ \\
$\mathbb{S}^{16}$ ($\kappa$-GCN)  & $0.81 \pm 0.05$ & $1.24 \pm 0.06$ & $0.71 \pm 0.15$ & $1.49 \pm 0.08$         \\
Prod $\kappa$-GCN  & $[1.21, -0.64] \pm [0.09, 0.07]$  & $[-0.83, -1.61] \pm [0.04, 0.06]$ & $[-1, -1]$ & $[1.23, -0.89] \pm [0.07, 0.11]$\\
\bottomrule
\end{tabular}
\end{sc}
\end{small}
\end{center}
\vskip -0.1in
\end{table*}

\section{Logits} \label{sec:apx_logits}

The final element missing in the $\kappa$-GCN is the logit layer, a necessity for any classification task. We here use the formulation of~\citet{ganea2018hyperbolicnn}.  Denote by $\{1, \dots, K\}$ the possible labels and let $\aaa_k \in \R^{d}$, $b_{k} \in \R$ and $\x \in \R^{d}$. The output of a feed forward neural network for classification tasks is usually of the form 
$$p(y=k|\x) = \text{S}(\langle \aaa_{k}, \x \rangle - b_{k})$$ where $S(\cdot)$ denotes the softmax function.
In order to generalize this expression to hyperbolic space, \citet{ganea2018hyperbolicnn} realized that the term in the softmax can be rewritten as $$\langle \aaa_{k}, \x \rangle - b_{k} = \text{sign}(\langle \aaa_{k}, \x \rangle - b_{k})||\aaa_{k}||d(\x, H_{\aaa_{k}, b_{k}})$$ where $H_{\aaa, b}=\{\x \in \R^{d}: \langle \x, \aaa \rangle -b=0\} = \{\x \in \R^{d}: \langle -\p + \x, \aaa\rangle=0\} = \tilde{H}_{\aaa, \p}$ with $\p \in \R^{d}$. \\
As a first step, they define the hyperbolic hyperplane as $$\tilde{H}^{\kappa}_{\aaa, \p} = \{\x \in \mathfrak{st}^d_\kappa \langle -\p \oplus_\kappa \x, \aaa \rangle=0\}$$ where now $\aaa \in \mathcal{T}_{\p}\mathfrak{st}^d_\kappa$ and $\p \in \mathfrak{st}^d_\kappa$. They then proceed proving the following formula:
\begin{equation}
d_{\kappa}(\x, \tilde{H}_{\aaa, \p}) = \frac{1}{\sqrt{-\kappa}}\sinh^{-1}\left(\frac{2\sqrt{-\kappa}|\langle \z, \aaa\rangle|}{(1+\kappa||\z||^2)||\aaa||}\right)
\end{equation}
where $\z = -\p \oplus_\kappa \x$
Using this equation, they were able to obtain the following expression for the logit layer:
\begin{equation}
 p(y=k|\x) = \text{S}\left(\frac{||\aaa_{k}||_{\p_k}}{\sqrt{-\kappa}}\sinh^{-1}\left(\frac{2\sqrt{-\kappa}\langle \z_{k}, \aaa_{k} \rangle}{(1+\kappa||\z_{k}||^{2})||\aaa_{k}||}\right)\right)
\end{equation}
where $\aaa_{k} \in \mathcal{T}_{0}\st \cong \R^{d},\ \x \in \st\ \mathrm{and}\ \p_{k} \in \mathfrak{st}^d_\kappa$.  Combining all these operations leads to the definition of a hyperbolic feed forward neural network. Notice that the weight matrices $\mathbf{W}$ and the normal vectors $\aaa_{k}$ live in Euclidean space and hence can be optimized by standard methods such as ADAM \citep{adam}. 

For positive curvature $\kappa > 0$ we use in our experiments the following formula for the softmax layer:
\begin{equation}
 p(y=k|\x) = \text{S}\left(\frac{||\aaa_{k}||_{\p_k}}{\sqrt{\kappa}}\sin^{-1}\left(\frac{2\sqrt{\kappa}\langle \z_{k}, \aaa_{k} \rangle}{(1+\kappa||\z_{k}||^{2})||\aaa_{k}||}\right)\right),
\end{equation}
which is inspired from the formula $i\sin(x)=\sinh(ix)$ where $i:=\sqrt{-1}$. However, we leave for future work the rigorous proof that the distance to geodesic hyperplanes in the positive curvature setting is given by this formula.
\begin{figure}[ht] 
  \label{ fig7} 
  \begin{minipage}[b]{0.5\linewidth}
    \centering
    \includegraphics[width=1\linewidth]{pictures/citeseer_degree_histo_para.png} 
  \end{minipage}
  \begin{minipage}[b]{0.5\linewidth}
    \centering
    \includegraphics[width=1\linewidth]{pictures/cora_ml_degree_histo.png} 
  \end{minipage} 
  \begin{minipage}[b]{0.5\linewidth}
    \centering
    \includegraphics[width=1\linewidth]{pictures/pubmed_degree_histo_para.png} 
  \end{minipage}
  \begin{minipage}[b]{0.5\linewidth}
    \centering
    \includegraphics[width=1\linewidth]{pictures/Airport_degree_histo.png} 
  \end{minipage}
  \vspace{-0.5cm}
  \caption{Histogram of node degrees}
\end{figure}

\section{More Experimental Details} \label{apx:exp}

We here present training details for the node classification experiments.

 We split the data into training, early stopping, validation and test set. Namely we first split the dataset into a known subset of size $n_{known}$ and an unknown subset consisting of the rest of the nodes. For all the graphs we use $n_{known}=1500$ except for the airport dataset, where we follow the setup of \citet{chami2019hyperbolic} and use $n_{known}=2700$. For the citation graphs, the known subset is further split into a training set consisting of $20$ data points per label, an early stopping set of size $500$ and a validation set of the remaining nodes. For airport, we separate the known subset into $2100$ training nodes, $300$ validation nodes and $300$ early stopping nodes. Notice that the whole structure of the graph and all the node features are used in an unsupervised fashion since the embedding of a training node might for instance depend on the embedding of a node from the validation set. But when calculating the loss, we only provide supervision with the training data. \\
The unknown subset serves as the test data and is only used for the final evaluation of the model. Hyperparameter-tuning is performed on the validation set. We further use early stopping in all the experiments. We stop training as soon as the early stopping cross entropy loss has not decreased in the last $n_{patience}=200$ epochs or as soon as we have reached $n_{max}=2000$ epochs. The model chosen is the one with the highest accuracy score on the early stopping set.
For the final evaluation we test the model on five different data splits and two runs each and report mean accuracy and bootstrapped confidence intervals. We use the described setup for both the Euclidean and non-Euclidean models to ensure a fair comparison.

\paragraph{Learned curvatures}.
We report the learnt curvatures for node classification in tab. \ref{tab:avg_curvs}

\section{Graph Curvature Estimation Algorithm}
\label{apx:curv}

We used the procedure outlined in Algorithm 1 to estimate the curvature of a dataset developed by~\citet{gu2018learning}.

\begin{algorithm}[tb]
   \caption{Curvature Estimation}
   \label{alg:example}
\begin{algorithmic}
   \STATE {\bfseries Input:} Graph $G$, $n_{iter}$
   \FOR{$m \in G$ }
   \FOR{$i=1$ {\bfseries to} $n_{iter}$}
   \STATE $b, c \sim \mathcal{U}(\mathcal{N}(m))$ and  $a \sim \mathcal{U}(G)$ such that $m \not = a$
   \STATE $\psi(m;b,c;a) = \frac{d_{G}(a,m)}{2} + \frac{d_{G}^{2}(b,c)}{8d_{G}(a,m)}$\\ $\hspace{2.4cm} - \frac{2d_{G}^{2}(a,b) + 2d_{G}^{2}(a,c)}{4d_{G}(a,m)}$
   
   \ENDFOR
   \STATE $\psi(m) = \text{AVERAGE}(\psi(m;a, b;c))$
   \ENDFOR
   \STATE {\bfseries return:} $\kappa = \text{AVERAGE}(\psi(m))$
\end{algorithmic}
\end{algorithm}